\newcommand{\code}[1]{\texttt{#1}}
\newtheorem{lem}{Lemma} 
\newtheorem{thm}{Theorem}
\newtheorem{cor}{Corollary}
\newtheorem{assum}{Assumption}
\newtheorem{Def}{Definition}
\newtheorem{prop}{Proposition}
\def\mb{\mathbb}
\def\mbf{\mathbf}
\def\mc{\mathcal}
\def\mk{\mathfrak}
\def\m{\mathbf}
\DeclarePairedDelimiter{\ceil}{\lceil}{\rceil}
\DeclarePairedDelimiter\floor{\lfloor}{\rfloor}
\newcommand{\ip}[2]{\left\langle #1, #2 \right\rangle}
\newcommand{\br}[1]{\left[#1 \right]}
\newcommand{\pr}[1]{\left(#1 \right)}
\newcommand{\nor}[1]{\left\|#1 \right\|}
\newcommand{\cl}{\mathop{\mathrm{cl}}\nolimits}%closure
\newcommand{\tr}{\mathop{\mathrm{Tr}}\nolimits}
\newcommand{\vect}{\mathop{\mathrm{vec}}\nolimits}
\newcommand{\sgn}{\mathop{\mathrm{sgn}}\nolimits}
\newcommand{\argmin}{\mathop{\mathrm{argmin}}}
\newcommand{\E}{\mb{E}}
\newcommand{\mxc}{2,\infty}%max column norm
\newcommand{\sumc}{1,1}%sum column norm
\newcommand{\stark}{{STARK}}
\newcommand{\subDL}{SubDil}
\newcommand{\CPbased}{TeFDiL}
\newcommand{\const}{\mathop{\mathrm{const.}}\nolimits}
\newcommand{\Dp}{\t D^{\pi}}
\newcommand{\shrink}{\mathop{\mathrm{shrink}}\nolimits}
\newcommand{\refold}{\mathop{\mathrm{refold}}\nolimits}
\newcommand{\rank}{\mathop{\mathrm{rank}}\nolimits}
\newcommand{\strnorm}[1]{\nor{#1}_{\mathrm{str}}}
\newcommand{\trnorm}[1]{\nor{#1}_{\mathrm{tr}}}
\newcommand{\sr}{\mk R^{N}_{\v m,\v p}}
\newcommand{\knr}{\mathcal K^{N,r}_{\v m,\v p}}
\newcommand{\cknr}{{}^c{\mathcal K}^{N,r}_{\v m,\v p}}
\newcommand{\knrClosed}{\overline{\mathcal K}^{N,r}_{\v m,\v p}}
\newcommand{\kn}{\mathcal K^{N}_{\v m,\v p}}
\newcommand{\kr}{\mathcal K^{2,r}_{\v m,\v p}}
\newcommand{\knrall}{\mathcal L^{N,r}_{\v m,\v p}}
\newcommand{\cknrall}{{}^c{\mathcal L}^{N,r}_{\v m,\v p}}
\newcommand{\knrallClosed}{\overline{\mathcal L}^{N,r}_{\v m,\v p}}
\newcommand{\lsr}{\mathcal C}
\newcommand{\bro}{\mathcal B_{\rho}}
\newcommand{\ball}{\mathcal U}
\newcommand{\fp}{f_{\mc P}}
\newcommand{\fy}{F_{\m Y}}
\newcommand{\fyfac}{F^{\mathrm{fac}}_{\Y}}
\newcommand{\fyreg}{F^{\mathrm{reg}}_{\Y}}
\newcommand{\fpreg}{f^{\mathrm{reg}}_{\mc P}}
\newcommand{\ellreg}{\ell^{\mathrm{reg}}_{\Y}}
\newcommand{\N}{\mc N}
\newcommand{\x}{\v x}
\newcommand{\X}{\v X}
\newcommand{\y}{\v y}
\newcommand{\Y}{\v Y}
\newcommand{\M}{\m M}
\newcommand{\D}{\m D}
\newcommand{\s}{\v s}
\newcommand{\dkn}{\D^k_n}
\newcommand{\dknset}{\{\D^k_n\}}
\newcommand{\dkntup}{(\D^k_n)}
\renewcommand{\v}[1]{\mbf{#1}}
\renewcommand{\t}[1]{\underline{\mbf{#1}}}
\renewcommand{\m}[1]{\v{#1}}
\newcommand\redout{\bgroup\markoverwith{\textcolor{red}{\rule[.5ex]{2pt}{0.4pt}}}\ULon}
\begin{document}
%
%\setkeys{Gin}{draft}
% paper title
% Titles are generally capitalized except for words such as a, an, and, as,
% at, but, by, for, in, nor, of, on, or, the, to and up, which are usually
% not capitalized unless they are the first or last word of the title.
% Linebreaks \\ can be used within to get better formatting as desired.
% Do not put math or special symbols in the title.
\title{Learning Mixtures of Separable Dictionaries for Tensor Data: Analysis and Algorithms}
%
%
% author names and IEEE memberships
% note positions of commas and nonbreaking spaces ( ~ ) LaTeX will not break
% a structure at a ~ so this keeps an author's name from being broken across
% two lines.
% use \thanks{} to gain access to the first footnote area
% a separate \thanks must be used for each paragraph as LaTeX2e's \thanks
% was not built to handle multiple paragraphs
%

\author{Mohsen~Ghassemi,~\IEEEmembership{Student~Member,~IEEE,}~Zahra~Shakeri,~\IEEEmembership{Member,~IEEE,}\\Anand~D.~Sarwate,~\IEEEmembership{Senior~Member,~IEEE,}~and~Waheed~U.~Bajwa,~\IEEEmembership{Senior~Member,~IEEE}% <-this % stops a space
\thanks{M. Ghassemi, A.D. Sarwate, and W.U. Bajwa are with the Department of Electrical and Computer Engineering, Rutgers, The State University of New Jersey, Piscataway, NJ, 08854 USA (Emails: \{m.ghassemi, anand.sarwate, waheed.bajwa\}@rutgers.edu). Z. Shakeri was with the Department of Electrical and Computer Engineering, Rutgers, The State University of New Jersey, Piscataway, NJ, 08854 USA and is now with Electronic Arts (Email: zshakeri@ea.com)}% <-this % stops a space
\thanks{Some of the results reported here were presented at %IEEE Intl. Workshop on Comput. Advances in Multi-Sensor Adaptive Process.,
CAMSAP 2017~\cite{ghassemi2017stark} and
%IEEE Intl. Symp. on Inf. Theory,
ISIT 2019 \cite{ghassemi19isit}. This work is supported in part by the U.S.~National Science Foundation under awards CCF-1453073 and CCF-1910110 and by the U.S.~Army Research Office under award W911NF-17-1-0546.}%
}%

% note the % following the last \IEEEmembership and also \thanks -
% these prevent an unwanted space from occurring between the last author name
% and the end of the author line. i.e., if you had this:
%
% \author{....lastname \thanks{...} \thanks{...} }
%                     ^------------^------------^----Do not want these spaces!
%
% a space would be appended to the last name and could cause every name on that
% line to be shifted left slightly. This is one of those "LaTeX things". For
% instance, "\textbf{A} \textbf{B}" will typeset as "A B" not "AB". To get
% "AB" then you have to do: "\textbf{A}\textbf{B}"
% \thanks is no different in this regard, so shield the last } of each \thanks
% that ends a line with a % and do not let a space in before the next \thanks.
% Spaces after \IEEEmembership other than the last one are OK (and needed) as
% you are supposed to have spaces between the names. For what it is worth,
% this is a minor point as most people would not even notice if the said evil
% space somehow managed to creep in.

% The paper headers
%\markboth{Published in IEEE Trans. Signal Processing, vol. 68, pp. 33–48, 2020, doi: 10.1109/TSP.2019.2952046}%
%{Ghassemi \MakeLowercase{\textit{et al.}}: Learning Mixtures of Separable Dictionaries}

% make the title area
\maketitle

% As a general rule, do not put math, special symbols or citations
% in the abstract or keywords.
\begin{abstract}
This work addresses the problem of learning sparse representations of tensor data using structured dictionary learning. It proposes learning a mixture of separable dictionaries to better capture the structure of tensor data by generalizing the separable dictionary learning model. Two different approaches for learning mixture of separable dictionaries are explored and sufficient conditions for local identifiability of the underlying dictionary are derived in each case. Moreover, computational algorithms are developed to solve the problem of learning mixture of separable dictionaries in both batch and online settings. Numerical experiments are used to show the usefulness of the proposed model and the efficacy of the developed algorithms.
\end{abstract}

% Note that keywords are not normally used for peerreview papers.
\begin{IEEEkeywords}
Dictionary learning, Kronecker structure, sample complexity, separation rank, tensor rearrangement.
\end{IEEEkeywords}

\IEEEpeerreviewmaketitle

%%%%%%%%%%%%%%%%%%%%%%%%%%%%%%%%%%%%%%%%%%%%%%%%%%%%%%%%%%%%%%%%%%%%%%%%%%%%%%%%%%%%%%%
%%%%%%%%%%%%%%%%%%%%------------------NEW SECTION------------------%%%%%%%%%%%%%%%%%%%%
%%%%%%%%%%%%%%%%%%%%%%%%%%%%%%%%%%%%%%%%%%%%%%%%%%%%%%%%%%%%%%%%%%%%%%%%%%%%%%%%%%%%%%%
\section{Introduction}
Many data processing tasks such as feature extraction, data compression, classification, signal denoising, image inpainting, and audio source separation use sparse representations learned from data~\cite{kreutz2003dictionary,Tosic2011dictionary,aharon2006img}. In many cases, these applications also involve data samples that are naturally structured as multiway arrays, also known as multidimensional arrays or tensors. Instances of \textit{multidimensional} or \textit{tensor} data include videos, hyperspectral images, tomographic images, and multiple-antenna wireless channels. Despite the ubiquity of tensor data in many applications, traditional data-driven sparse representation approaches disregard their multidimensional structure. This can result in sparsifying models with a large number of parameters. With the increasing availability of large data sets, it is crucial to keep sparsifying models reasonably small to ensure their scalable learning and efficient storage within devices such as smartphones and drones.

Our focus in this paper is on learning of ``compact'' models that yield sparse representations of tensor data. To this end, we study \emph{dictionary learning} (DL) for tensor data. The goal in DL, which is an effective and popular data-driven technique for obtaining sparse representations of data~\cite{kreutz2003dictionary,aharon2006img,Tosic2011dictionary}, is to learn a dictionary $\D$ such that every data sample can be approximated by a linear combination of a few atoms (columns) of $\D$. While DL has been widely studied, traditional DL approaches flatten tensor data and then employ methods designed for vector data~\cite{aharon2006img,mairal2010online}. This ignores the multidimensional structure in tensor data, resulting in dictionaries with a large number of parameters. One intuitively expects that dictionaries which exploit the correlation and structure across tensor modes will have fewer parameters, leading to improvements in storage requirements, computational complexity, and  generalization performance, especially when training data are noisy or scarce.

\begin{figure}
\centering
 \includegraphics[width=0.49\textwidth]{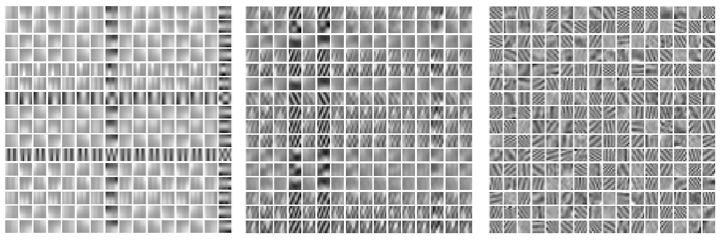}
\caption{Dictionary atoms for representing RGB image \texttt{Barbara} for separation rank (left-to-right) $1$, $4$, and $256$.}
\label{figure_atoms}
\vspace{-\baselineskip}
\end{figure}

To reduce the number of parameters in dictionaries for tensor data, and to better exploit the correlation among different tensor modes, some recent DL works use tensor decompositions such as the Tucker decomposition~\cite{tucker1963implications} and CANDECOMP/PARAFAC decomposition (CPD)~\cite{harshman1970foundations} for learning of ``structured'' dictionaries. The idea in \emph{structured DL} for tensor data is to restrict the class of dictionaries during training to the one imposed by the tensor decomposition under consideration~\cite{Shakeri2019Book}. For example, structured DL based on the Tucker decomposition of $N$-way tensor data corresponds to the dictionary class in which any dictionary $\D\in \mb R^{m \times p}$ consists of the Kronecker product~\cite{van2000ubiquitous} of $N$ smaller \textit{subdictionaries} $\{\D_n\in \mb R^{m_n\times p_n}\}_{n=1}^N$~\cite{hawe2013separable,roemer2014tensor,dantas2017learning,zubair2013tensor,shakeri2016minimax,shakeri2018achieve}. The resulting DL techniques in this instance are interchangeably referred to in the literature as \textit{separable DL} or \textit{Kronecker-structured DL} (KS-DL).
%
%%%%%%%%%%%%%%%
% WUB: Replaced by the paragraph above ...
%%%%%%%%%%%%%%%
%More recent research has turned to DL methods that preserve the tensor structure of data samples by finding sparse representations that leverage tensor factorization techniques such as the Tucker decomposition~\cite{tucker1963implications} and CANDECOMP/PARAFAC decomposition (CPD)~\cite{harshman1970foundations}. We call these factorization-based DL approaches \textit{structured DL}: they use the tensor structure of data by restricting the class of dictionaries to the one that gives rise to the desired tensor structure. For example, a family of structured DL algorithms based on the Tucker decomposition of $N$-way tensor data yield a model in which the dictionary $\D\in \mb R^{m \times p}$ consists of the Kronecker product~\cite{van2000ubiquitous} of $N$ smaller \textit{subdictionaries} $\{\D_n\in \mb R^{m_n\times p_n}\}_{n=1}^N$~\cite{hawe2013separable,roemer2014tensor,dantas2017learning,zubair2013tensor,shakeri2016minimax,shakeri2018achieve}. This is interchangeably referred to in the literature as \textit{separable DL} or \textit{Kronecker-structured DL} (KS-DL).
%%%%%%%%%%%%%%%

In terms of parameters to be estimated and stored, the advantages of KS-DL for tensor data are straightforward:  defining $m\triangleq\prod_{n=1}^N m_n$ and $p\triangleq\prod_{n=1}^N p_n$, unstructured dictionary learning uses $mp=\Pi_{n=1}^N m_n  p_n$ parameters, whereas the KS-DL model uses only the sum of the subdictionary sizes $\sum_{n=1}^N m_np_n$. %
%% **WUB: Replacing the next set of sentences with the preceding one** %%
%The KS-DL model learns a subdictionary to transform each mode of an $N$-way tensor data. This results in only $\sum_{i=1}^N m_ip_i$ parameters representing the dictionary in this model, compared to the $mp=\Pi_{i=1}^N m_i p_i$ parameters for dictionaries in the unstructured models.
Nonetheless, while existing KS-DL methods enjoy lower sample/computational complexity and better storage efficiency over unstructured DL~\cite{shakeri2018achieve}, the KS-DL model makes a strong separability assumption among different modes of tensor data. Such an assumption can be overly restrictive for many classes of data~\cite{Noel99nonseparable}, resulting in an unfavorable tradeoff between model compactness and representation power.

In this paper, we overcome this limitation by proposing and analyzing a generalization of KS-DL that we interchangeably refer to as \textit{learning a mixture of separable dictionaries} or \textit{low separation rank DL} (LSR-DL). The separation rank of a matrix $\m A$ is defined as the minimum number of KS matrices whose sum equals $\m A$ \cite{Beylkin2002lsr,hero_2013_kronecker}. The LSR-DL model interpolates between the under-parameterized separable model (a special case of LSR-DL model with separation rank $1$) and the over-parameterized unstructured model. In particular, this model is a natural and consistent way to increase the number of parameters in structured DL (and therefore representation performance) while mimicking the compactness of the KS-DL model. Our numerical experiments confirm the advantages of the LSR-DL model (and algorithms) over both unstructured DL and KS-DL in terms of sample complexity/performance and the amount of memory needed to store the dictionary.  Figure~\ref{figure_atoms} also illustrates the difference between LSR-DL and KS-DL: while KS-DL learns dictionary atoms that cannot reconstruct diagonal structures perfectly because of the abundance of  {axis-aligned (horizontal/vertical)} structures within them, LSR-DL also returns dictionary atoms with pronounced diagonal structures as the separation rank increases.

\subsection{Main Contributions}

We propose a new generalization of the  separable DL model---which we call a mixture of separable dictionaries model or LSR-DL model---that has a smaller number of parameters than standard DL. We provide conditions under which a true dictionary is recoverable, up to a prescribed error, from tensor-valued training data generated from the LSR-DL model. Our analysis uses the conventional optimization-based formulation of the DL problem~\cite{Tosic2011dictionary}, except that the search space is constrained to the class of dictionaries with maximum separation rank $r$ (and individual mixture terms having bounded norms when $N \geq 3$ and $r \geq 2$).\footnote{We also provide asymptotic identifiability results for LSR dictionaries without requiring the boundedness assumption; %, those results are only asymptotic in nature;
see Section~\ref{sec:NP_identif} for details.} Due to our choice of Frobenius norm as the distance metric, similar to conventional DL problems, this LSR-DL problem is nonconvex with multiple global minima. %\footnote{This is due to our choice of distance metric, which is the Frobenius norm. %\zs{In this case, establishing concentration results is simpler. Our local identifiability results show that the LSR-DL problem is indeed well posed and characterize the effect of the separation rank on the sample complexity of the learning problem.}}
Obtaining global convergence guarantees for this highly nonconvex problem is not only difficult but is also insufficient to guarantee global identifiability due to existence of multiple global minima. We therefore focus on \emph{local identifiability} guarantees, meaning that a search algorithm initialized close enough to the true dictionary can recover that dictionary. Our local identifiability results show that the LSR-DL problem is well posed---i.e., it can return a good estimate of the true dictionary, up to a certain initialization distance, as a solution---and characterize the effect of the separation rank on the sample complexity of the learning problem. To this end, under certain assumptions on the generative model, we show that $\Omega\big(r(\sum_{n=1}^N m_n p_n) p^2 \rho^{-2}\big)$ samples ensure existence of a local minimum of the constrained LSR-DL problem for $N$th-order tensor data %that is
within a neighborhood of radius $\rho$ around the true LSR dictionary.

Our initial local identifiability results are based on an analysis of a separation rank-constrained optimization problem that exploits a connection between LSR (resp., KS) matrices and low-rank (resp., rank-1) tensors. The main challenge that we face as part of this analysis is understanding the topological properties of the class of dictionaries with separation rank at most $r$ in terms of compactness and covering number. The resulting insights may be of independent interest to readers for other problems involving LSR matrices.

Next, we note that a result in tensor recovery literature~\cite{haastad1990tensor} implies finding the separation rank of a matrix is NP-hard. While this means that the rank-constrained LSR-DL problem is computationally intractable, our analysis of this problem provides the basis for our second main contribution, which is development and analysis of two different relaxations of the LSR-DL problem that are computationally tractable in the sense that they do not require explicit computation of the separation rank. The first formulation once again exploits the connection between LSR matrices and low-rank tensors and uses a convex regularizer to implicitly constrain the separation rank of the learned dictionary. The second formulation enforces the LSR structure on the dictionary by explicitly writing it as a summation of $r$ KS matrices. Our analyses of the two relaxations once again involve conditions under which the true LSR dictionary is locally recoverable from training tensor data.
%\zs{This is our strongest sample complexity result since we show a similar sample complexity result to that of the intractable formulation by finding a correspondence between the local minima of the factorized problem and those of the intractable problem.}
Our strongest result is for the factorized formulation, described formally in Section~\ref{sec:problem_statement}, in which case we derive a sample complexity result that is similar to that of the intractable formulation by finding a correspondence between the local minima of the factorized problem and those of the intractable one. We also compare and contrast the three sets of identifiability results for LSR dictionaries in the body.

In addition to showing the well-posedness of the LSR-DL problem, our theoretical results on local identifiability confirm the advantages of exploiting low-rank structure in tensor problems. Moreover, in order to obtain our results, we acquire a better understanding of the topological properties of the space of LSR matrices. These properties may be useful in other works involving KS and LSR models.

%
%%%%%%%%%%%%%%%
% WUB: Replaced by the paragraph above ...
%%%%%%%%%%%%%%%
%
%We therefore propose two tractable relaxations to the LSR-DL problem that do not require explicit computation of the separation rank. The first uses the connection to tensor recovery by employing a convex regularizer commonly used to enforce low-tensor-rank structure. The second formulation enforces the LSR structure on the dictionary by explicitly writing it as summation of KS matrices. We further study the local identifiability conditions under these formulations.
%%%%%%%%%%%%%%%

Our third main contribution is the development of practical computational algorithms, which are based on the two relaxations of LSR-DL, for learning of an LSR dictionary in both batch and online settings. We use these algorithms for learning of LSR dictionaries for both synthetic and real tensor data and show their effectiveness in denoising and representation learning tasks. Numerical results obtained as part of these efforts help validate the usefulness of our proposed LSR-DL model and highlight the different strengths and weaknesses of the two LSR-DL relaxations and the corresponding algorithms. In particular, we show empirically that our algorithms provide better representations with a smaller number of parameters than the conventional approach.

%\mg{our results suggest low rank is good, lsr, tensor train, tensor rank control  complexity. our results geometry useful to other areas. well posedness. the model is tractable and well-posed, also denoising good}

%%%%%%%%%%%%%%%
% WUB: Replaced by the paragraph above ...
%%%%%%%%%%%%%%%
%
%Moreover, we develop batch and online LSR-DL algorithms based on the proposed tractable relaxations. We validate the merits of the LSR-DL model and our proposed algorithms through extensive numerical experiments on synthetic and real-world data for denoising and representation learning tasks.
%%%%%%%%%%%%%%%

%%%%%%%%%%%%%%%
% WUB: Commenting this, as I have stated it differently above ...
%%%%%%%%%%%%%%%
%
%\textbf{Our Contribution:}
%\begin{itemize}
%    \item We propose a generalization of the separable DL model called mixture of separable dictionaries or LSR-DL model that allows for better representation power than the separable model while having smaller number of parameters than standard DL.
%    \item  We show that under certain condition on the problem parameters, $L=\Omega\big(r(\sum_{n=1}^N m_n p_n) p^2 \rho^{-2}\big)$ samples are sufficient for identifying the true dictionary (with separation rank $r$ underlying $N$th-order tensor data) with distortion at most $\rho$.
%    \item We develop LSR-DL algorithms that allow tuning the number of KS components in the dictionary to avoid underfitting (inadequately small model) and overfitting (excessively large model).
%\end{itemize}
%%%%%%%%%%%%%%%

\subsection{Relation to Prior Work}
Tensor decompositions~\cite{kolda_tensor,oseledets2011tensor} are an important tool for avoiding overparameterization of tensor data models in a variety of areas. These include deep learning, %~\cite{novikov2015tensorizing},
collaborative filtering, %~\cite{sidiropoulos2017tensor},
multilinear subspace learning, %~\cite{lu2011survey},
source separation, %~\cite{Nion2010Speech},
topic modeling, %~\cite{Anandkumar2013topic},
and many other works (see recent surveys~\cite{novikov2015tensorizing,sidiropoulos2017tensor}~and references therein). However, the use of tensor decompositions for reducing the (model and sample) complexity of dictionaries for tensor data has been addressed only recently. %%%%%%%%%%%%%%%
% WUB: Commenting this, as I have stated it differently above ...
%%%%%%%%%%%%%%%
%
%To reduce the number of parameters in tensor data models, various \textit{tensor decompositions} have been proposed~\cite{kolda_tensor,oseledets2011tensor,zhao2016tensor}. The resulting smaller factorized models, which exploit the correlation and structure along different modes in the tensor, tend to be more efficient with regards to storage, computational complexity, and  generalization performance, especially when training data are noisy or scarce. This has motivated the use of tensor decompositions in a variety of areas, including deep learning~\cite{cohen2016expressive,novikov2015tensorizing}, collaborative filtering~\cite{sidiropoulos2017tensor,karatzoglou2010multiverse}, multilinear subspace learning~\cite{lu2011survey}, source separation~\cite{Nion2010Speech}, topic modeling~\cite{Anandkumar2013topic}, and many other works (see\cite{sidiropoulos2017tensor}~and references therein).
%%%%%%%%%%%%%%%

Many recent works provide theoretical analysis for the sample complexity of the conventional DL problem~\cite{arora2013new,agarwal2013learning,gribonval2014sparse,schnass2014identifiability}. Among these, Gribonval et al.~\cite{gribonval2014sparse} focus on the local identifiability of the true dictionary underlying vectorized data using Frobenius norm as the distance metric. %They show that $L = \Omega(mp^3\rho^{-2})$ samples are sufficient to ensure recovery of a dictionary with distance at most $\rho$ from the true $m\times p$ dictionary.
Shakeri et al.~\cite{shakeri2018achieve} extended this analysis for the sample complexity of the KS-DL problem for $N$th-order tensor data. %They show that under certain conditions for the problem parameters, $L = \max_{n\in\{1,\dots,N\}} \Omega( m_np_n^3\rho_n^{-2} )$ samples are sufficient to ensure recovery of coordinate dictionaries within $\rho_n$ radius of the true $m_n\times p_n$ subdictionaries.
This analysis relies on expanding the objective function in terms of subdictionaries and exploiting the coordinate-wise Lipschitz continuity property of the objective function with respect to each subdictionary~\cite{shakeri2018achieve}. While this approach ensures the identifiability of the subdictionaries, it requires the dictionary coefficient vectors to follow the so-called \emph{separable sparsity model}~\cite{caiafa2013multidimensional} and does not extend to the LSR-DL problem. %In this work, we focus on the local identifiability of the dictionary in the LSR-DL problem.
By contrast, we provide local identifiability sample complexity results for the LSR-DL problem and its two relaxations. %In the factorized model, We show that $L=\Omega\big(r(\sum_{n=1}^N m_n p_n) p^2 \rho^{-2}\big)$ samples are sufficient for recovery of a dictionary with separation rank $r$ underlying $N$th-order tensor data and distance at most $\rho$ from the true dictionary. For the regularized model, our recovery result is limited to the case where the true dictionary is KS. In this case, we show that the true dictionary can be recovered with distortion at most $\rho$ if at least $L = \Omega(mp^3\rho^{-2})$ samples are provided.
Further, our identifiability results hold for coefficient vectors following both the random and separable sparsity models.

In terms of computational algorithms, several works have proposed methods for learning KS dictionaries that rely on alternating minimization techniques to update the subdictionaries~\cite{caiafa2013multidimensional,zubair2013tensor,roemer2014tensor}. Among other works, Hawe et al.~\cite{hawe2013separable} employ a Riemannian conjugate gradient method combined with a nonmonotone line search for KS-DL. While they present the algorithm only for matrix data, its extension to higher-order tensor data is trivial. Schwab et al.~\cite{schwab2018separable}~have also recently addressed the separable DL problem for matrix data; their contributions include a computational algorithm and global recovery guarantees. In terms of algorithms for LSR-DL, Dantas et al.~\cite{dantas2017learning} proposed one of the first methods for matrix data that uses a convex regularizer to impose LSR on the dictionary. One of our batch algorithms, named \stark~\cite{ghassemi2017stark}, also uses a convex regularizer for imposing LSR structure. In contrast to Dantas et al.~\cite{dantas2017learning}, however, \stark~can be used to learn a dictionary from tensor data of any order. The other batch algorithm we propose, named \CPbased, learns subdictionaries of the LSR dictionary by exploiting the connection to tensor recovery and using tensor CPD. Recently, Dantas et al.~\cite{dantas2018learning} proposed an algorithm for learning an LSR dictionary for tensor data in which the dictionary update stage
is a projected gradient descent algorithm that involves a CPD after every gradient step. In contrast, \CPbased~only requires a single CPD at the end of each dictionary update stage. Finally, while there exist a number of online algorithms for DL~\cite{mairal2010online,skretting2010recursive,Dohmatob2016brain}, the online algorithm developed in here is the first one that enables learning of structured (either KS or LSR) dictionaries.

\subsection{Organization}
In Section \ref{sec:problem_statement}, we provide the necessary background on dictionary learning, introduce our LSR-DL model, and formulate three variants of the LSR-DL problem. In Section \ref{sec:NP_identif}, we show that LSR dictionaries are identifiable using the rank-constrained formulation of the LSR-DL problem.
%
%The main challenge that we face in this section is understanding the geometrical properties of the class of dictionaries with separation rank at most $r$ in terms of compactness and covering number. These new results may be of independent interest for other problems involving LSR matrices. Even though we show the rank-constrained LSR-DL problem is computationally intractable, our analysis of this problem provides the basis for our identifiability results for the tractable formulations presented in Section \ref{sec:tractable_identif}.
%
In Section \ref{sec:tractable_identif}, we study the local identifiability of the other two (regularized and factorized) formulations in both asymptotic and finite sample regimes. %Our strongest sample complexity result is for the factorized formulation where we show a similar sample complexity result to that of the intractable formulation by finding a correspondence between the local minima of the factorized problem and those of the intractable problem.
In Section \ref{sec:algorithms}, we use the regularized and factorized formulations to design batch and online LSR-DL algorithms, which we evaluate experimentally in Section \ref{sec:experiments}. We conclude the paper and discuss possible future work in
Section \ref{sec:conc}. Proof of Lemma~\ref{lem:rearrangement} (the rearrangement procedure) is explained in detail in Appendix~\ref{rearrangement_section}. Proofs of technical Lemmas \ref{lem:closedness_LSR}, \ref{lem:unbounded_sequence}, \ref{lem:covering_number_2nd}, \ref{lem:covering_number_general}, and \ref{lem:neighborhoods_relation_D_to_Dkn}
are provided in Appendix~\ref{proofs_section}. %The only exceptions are Lemmas \ref{lem:gribonval_boundary} and \ref{lem:gribonval_eta_bound} (not our contribution) and
A discussion on the convergence of our algorithms is provided in Appendix~\ref{sec:convergence}. %Finally, we conclude the paper in Section~\ref{sec:conc}.}

\section{Preliminaries and Problem Statement} \label{sec:problem_statement}
\textbf{Notation and Definitions:} We use underlined bold upper-case ($\t A$), bold upper-case ($\m A$), bold lower-case ($\v a$), and lower-case ($a$) letters to denote tensors, matrices, vectors, and scalars, respectively. For any integer $p$, we define $[p]\triangleq\{1,2,\cdots,p\}$. We denote the $j$-th column of a matrix $\m A$ by $\v a_j$. For an $m\times p$ matrix $\m A$ and an index set $\mc J\subseteq [p]$, we denote the matrix constructed from the columns of $\m A$ indexed by $\mc J$ as $\m A_{\mc J}$. We denote by $(\m A_n)_{n=1}^N$ an $N$-tuple $(\m A_1,\cdots,\m A_N)$, while $\{\m A_n\}_{n=1}^N$ represents the set $\{\m A_1,\cdots,\m A_N\}$. We drop the range indicators if they are clear from the context.

\textit{Norms and inner products:} We denote by $\|\v v\|_p$ the $\ell_p$ norm of vector $\v v$ (we abuse the terminology in case of $p=0$), while we use $\|\m A\|_2$, $\|\m A\|_F$, and $\trnorm{\m A}$ to denote the spectral, Frobenius, and trace (nuclear) norms of matrix $\m A$, respectively. Moreover, $\|\m A\|_{\mxc}\triangleq\max_j \|\v a_j\|_2$ is the \textit{max column norm} and $\|\m A\|_{1,1}\triangleq\sum_j \|\v a_j\|_1$. We define the inner product of two tensors (or matrices) $\t A$ and $\t B$ as $\ip{\t A}{\t B}\triangleq \ip{\vect(\t A)}{\vect(\t B)}$ where $\vect(\cdot)$ is the vectorization operator. We define the Frobenius norm of tensor $\t A$ as $\|\t A\|_F=\sqrt{\ip{\t A}{\t A}}$. The Euclidean distance between two tuples of the same size is defined as $\big\|(\m A_n)_{n=1}^N-(\m B_n)_{n=1}^N\big\|_F\triangleq\sqrt{\sum_{n=1}^N ||\m A_n-\m B_n||_F^2}$.

\textit{Kronecker product:} We denote by $\m A\otimes \m B\in \mb R^{m_1m_2\times p_1p_2}$ the Kronecker product of matrices $\m A\in \mb R^{m_1\times p_1}$ and $\m B\in \mb R^{m_2\times p_2}$. We use $\bigotimes_{n=1}^N \m A_n \triangleq \m A_1\otimes\m A_2\otimes\cdots\otimes\m A_N$ for the Kronecker product of $N$ matrices. We drop the range indicators when there is no ambiguity. We call a matrix a ($N$-th order) Kronecker-structured (KS) matrix if it is a Kronecker product of $N\geq 2$ matrices.

\textit{Definitions for matrices:}
For a matrix $\D$ with unit $\ell_2$-norm columns, we define the \textit{cumulative coherence} $\mu_s (\D)$ as $\mu_s(D) \triangleq \max_{|\mc J|\leq s} \max_{j\notin \mc J} \|\D_{\mc J}^T \v d_j\|_1$. We say a matrix $\D$ satisfies the \textit{$s$-restricted isometry property} ($s$-RIP) with constant $\delta_s$ if for any $\v v \in \mb R^s$ and any $\mc J\subseteq [p]$ with $|\mc J|\leq s$, we have $(1-\delta_s)\|\v v\|_2^2\leq \|\D_{\mc J}\v v\|_2^2 \leq (1+\delta_s)\|\v v\|_2^2$.

\textit{Definitions for tensors:} We briefly present required tensor definitions here: see Kolda and Bader~\cite{kolda_tensor} for more details. The mode-$n$ unfolding matrix of $\t A$ is denoted by $\m A_{(n)}$, where each column of $\m A_{(n)}$ consists of the vector formed by fixing all indices of $\t A$ except the one in the $n$th-order. We denote the outer product (tensor product) of vectors by $\circ$, while $\times_n$ denotes the mode-$n$ product between a tensor and a matrix. An $N$-way tensor is rank-$1$ if it can be written as outer product of $N$ vectors: $\v v_1\circ\cdots\circ\v v_N$. Throughout this paper, by the rank of a tensor, $\rank(\t A)$, we mean the CP-rank of $\t A$, the minimum number of rank-$1$ tensors that construct $\t A$ as their sum. The \emph{CP decomposition} (CPD),  decomposes a tensor into sum of its rank-$1$ tensor components. The \textit{Tucker decomposition} factorizes an $N$-way tensor $\t A\in \mb{R}^{m_1 \times m_2 \times \cdots \times m_N}$ as $\t A= \t X \times_1 \m D_{1} \times_2 \m D_{2} \times_3 \cdots \times_N \m D_{N}$, where $\t X\in \mb{R}^{p_1 \times p_2 \times \cdots \times p_N}$ denotes the core tensor and $\m D_n\in \mb R^{m_n\times p_n}$ denote factor matrices along the $n$-th mode of $\t A$ for $n\in[N]$.

\textit{Notations for functions and spaces:}
We denote the element-wise sign function by $\sgn(\cdot)$.
For any function $f(\x)$, we define the difference $\Delta f(\v x_1;\x_2)\triangleq f(\x_1)-f(\x_2)$. We denote by $\ball_{m\times p}$ the Euclidean unit sphere:  $\ball_{m\times p}\triangleq\{\D\in \mb R^{m\times p}|\|\D\|_F=1\}$. We also denote the Euclidean sphere with radius $\alpha$ by $\alpha \ball_{m\times p}$. The oblique manifold in $\mb R^{m\times p}$ is the manifold of matrices with unit-norm columns: $\mc D_{m\times p}\triangleq\{\D\in \mb R^{m\times p}| \forall j\in [p],~ \v d_j^T\v d_j=1\}$. We drop the dimension subscripts and use only $\mc D$ when there is no ambiguity. The covering number of a set $\mc A$ with respect to a norm $\|\cdot\|_*$, denoted by $\N_*(\mc A,\epsilon)$, is the minimum number of balls of $*$-norm radius $\epsilon$ needed to cover $\mc A$.

\textbf{Dictionary Learning Setup:}
In dictionary learning (DL) for vector data, we assume observations $\y \in \mb R^{m}$ are generated according to the following model:
\begin{align} \label{eq:conv_DL}
\y = \D^0\x^0 + \bm \epsilon,
\end{align}
where $\D^0 \in \mc D_{m\times p} \subset \mb R^{m\times p} $ is the true underlying dictionary, $\x^0 \in \mb R^p$ is a randomly generated sparse coefficient vector, and $\bm \epsilon \in \mb R^m$ is the observation noise vector. The goal in DL is to recover the true dictionary given the noisy observations $\Y \triangleq \{\y_l\}_{l=1}^L$ that are independent realizations of \eqref{eq:conv_DL}. The ideal objective is to solve the statistical risk minimization problem
\begin{align}\label{problem:expected_risk_minimization_general}
\min_{\D\in \mc C} ~
\fp(\D)\triangleq\E_{\y \sim \mc P} ~f_{\y}(\D),
\end{align}
where $\mc P$ is the underlying distribution of the observations, $\mc C\subseteq \mc D_{m \times p}$ is the dictionary class, typically selected for vector data to be the same as the oblique manifold, and
\begin{align}\label{sparse_coding_general}
f_{\y}(\D)\triangleq\inf_{\x\in \mb R^p} \frac{1}{2} \nor{\y-\D\x}_2^2+\lambda \|\x\|_1.%g(\x),
\end{align}
%
%where $g(\x)$ is a regularization term that induces sparsity in the coefficient vector $\x$. Common choices are $g(\x)=\nor{\x}_0$ and $g(\x)=\nor{\x}_1$. In all our analysis and algorithms we choose the latter which is a convex Lipschitz-continuous function. %\zs{u can put this last line as a footnote and comment on the fact that it is the convex relaxation.}
However, since we have access to the distribution $\mc P$ only through noisy observations drawn from this distribution, we resort to solving the following empirical risk minimization problem as a proxy for Problem \eqref{problem:expected_risk_minimization_general}:
\begin{align}\label{problem:empirical_risk_minimization_general}
\min_{\D\in \mc C} ~\fy(\D)\triangleq\frac{1}{L}\sum\nolimits_{l=1}^L f_{\y_l}(\D).
\end{align}

\textbf{Dictionary Learning for Tensor Data:}  To represent tensor data, conventional DL approaches vectorize tensor data samples and treat them as one-dimensional arrays. %This approach ignores the correlations among features implied by the tensor structure of data. %\zs{we say the previous sentence in intro I don't think its necessary here.}
One way to explicitly account for the tensor structure in data is to use the Kronecker-structured DL (KS-DL) model, which is based on the Tucker decomposition of tensor data.
In the KS-DL model, we assume that observations $\underline{\Y}_l \in \mb R^{m_1\times \dots \times m_N}$ are generated according to
\begin{align}\label{KS_DL_gen}
\t Y_l= \t X^0_l \times_1 \m D^0_{1} \times_2 \m D^0_{2} \times_3 \cdots \times_N \m D^0_{N}+\t {\mc E}_l,
\end{align}
where $\{\D_n^0 \in \mb R^{m_n\times p_n}\}_{n=1}^N$ are generating \textit{subdictionaries}, and $\t X^0_l$ and $\t {\mc E}_l$ are the coefficient and noise tensors, respectively. %\ads{Is the noise $\t W_I$ of $\t {\mc E}_I$? Generally I like $w$ for noise but the model is first set up with epsilons as noise.}
Equivalently, the generating model \eqref{KS_DL_gen} can be stated for $\y_l \triangleq \vect(\underline{\Y}_l)$ as:
\begin{align} \label{KS_DL_gen_vec}
 \y_l = \pr{\m D_{N}^0 \otimes \m D_{N-1}^0\otimes \cdots \otimes \m D_{1}^0} \m \x^0_l + \bm \epsilon_l,
 \end{align}
where $\x^0_l \triangleq \vect(\underline{\X^0}_l)$ and $\bm \epsilon_l \triangleq \vect(\t{\mc E}_l)$ \cite{kolda_tensor}.
%We can stack $L$ vectorized tensor data points $\{\v y_i\}_{i=1}^L$ in columns of matrix $\m Y$ to get
%\begin{align}\label{Tucker_representation}
%\m Y= \pr{\m D_{N} \otimes \m D_{N-1}\otimes \cdots \otimes \m D_{1}} \m X+\m W.
%\end{align}
%
This is the same as the unstructured model $\y_l=\D^0\x^0_l+\bm \epsilon_l$ with the additional condition that the generating dictionary is a Kronecker product of $N$ subdictionaries. %\footnote{ We have reindexed $\m D_i$'s for ease of notation.}.
As a result, in the KS-DL problem, the constraint set in \eqref{problem:empirical_risk_minimization_general} becomes $\mc C=\kn$, where  $\kn\triangleq\{\D\in \mc D_{m\times p}| \D=\bigotimes\nolimits_{n=1}^N \D_n,~\D_n\in \mb R^{m_n\times p_n}\}$ is the set of KS matrices with unit-norm columns and $\v m$ and $\v p$ are vectors containing $m_n$'s and $p_n$'s, respectively.\footnote{We have changed the indexing of subdictionaries for ease of notation.}

In summary, the structure in tensor data is exploited in the KS-DL model by assuming the dictionary is ``separable'' into subdictionaries for each mode. However, as discussed earlier, this separable model is rather restrictive. Instead, we generalize the KS-DL model using the notion of \textit{separation rank}.\footnote{The term was introduced in Tsiligkaridis and Hero~\cite{hero_2013_kronecker} for $N=2$ (see also Beylkin and Mohlenkamp~\cite{Beylkin2002lsr}).}
\begin{Def}\label{separation_rank_def}
The separation rank $\sr(\cdot)$ of a matrix $\m A\in \mb R^{\Pi_n m_n\times \Pi_n p_n}$ is the minimum number $r$ of $N$th-order KS matrices $\m A^k=\bigotimes_{n=1}^N \m A^k_{n}$ such that $\m A=\sum\limits_{k=1}^{r}\bigotimes_{n=1}^N \m A^k_n$, where $\m A^k_n\in \mb R^{m_n \times p_n}$.
\end{Def}
%
%For ease of notation, we use the shorthand $\mk{R}(\D)$ for the separation rank.
%The separation rank gets its name from the fact that it is a measure of how ``separable'' a matrix is \cite{hero_2013_kronecker}. On one end, when $\sr(\D)=1$, we have a separable (KS) dictionary. On the other end, when $\sr(\D)=r_{max}$, we have an unstructured dictionary with no separability.
%
%t is easy to see that $\sr(\D)\leq r_{\max}=\min_n \frac{mp}{m_n p_n}$.\mg{ It is not obvious. It can be shown using the rearrangement argument which we have not mentioned yet at this point}
The KS-DL model corresponds to dictionaries with separation rank $1$. We instead propose the \textit{low separation rank (LSR)} DL model in which the separation rank of the underlying dictionary is relatively small so that $1 \le \mk R_{\v m,\v p}(\D^0)\ll \min\{m,p\}$. %$r_{\max}$
This generalizes the KS-DL model to a generating dictionary of the form $\D^0=\sum_{k=1}^{r} [\D^k_{N}]^0\otimes [\D^k_{N-1}]^0\otimes \cdots \otimes [\D^k_{1}]^0$, where $r$ is the separation rank of $\D^0$. Consequently, defining $\knr\triangleq \{\D\in \mc D_{m\times p}|\sr(\D)\leq r\}$, the empirical \emph{rank-constrained LSR-DL problem} is
\begin{align}\label{problem:sep_rank_NP}
\min_{\D\in \knr} \fy(\D).
\end{align}
However, the analytical tools at our disposal require the constraint set in \eqref{problem:sep_rank_NP} to be closed, which we show does not hold for $\knr$ when $N \geq 3$ and $r \geq 2$. In that case, we instead analyze \eqref{problem:sep_rank_NP} with $\knr$ replaced by ($i$) closure of $\knr$ and ($ii$) a certain closed subset of $\knr$. We refer the reader to Section~\ref{sec:NP_identif} for further discussion.

In our study of the LSR-DL model (which includes the KS-DL model as a special case), we use a correspondence between KS matrices and rank-1 tensors, stated in Lemma~\ref{lem:rearrangement} below, which allows us to leverage techniques and results in the tensor recovery literature to analyze the LSR-DL problem and develop tractable algorithms. (This correspondence was first exploited in our earlier work~\cite{ghassemi2017stark}.)% in the context of DL for tensor data of order $N\geq 2$.
\begin{table*}[t!]
\caption{Table of commonly used notation}
\vspace{-0.4\baselineskip}
\label{table:symbol}
\centering
\begin{tabular}{|c|c||c|c|}
%\cline{3-9}
%\multicolumn{2}{c|}{} & Unstructured & \multicolumn{3}{|c||}{KS-DL ($r=1$)}&\multicolumn{3}{|c|}{LSR-DL ($r>1$)}\\
\hline
\textbf{Notation} & \textbf{Definition} & \textbf{Notation}  & \textbf{Definition} \\
\hline
$m, p$ &  $\prod_{n=1}^N m_n$, $\prod_{n=1}^N p_n$
& $\v m , \v p$ & $\left(m_n \right)_{n=1}^N$, $\left(p_n\right)_{n=1}^N$
\\
\hline
 $\N_*(\mc A,\epsilon)$
&  Covering number of set $\mc A$ w.r.t. norm $*$ & $\sr(\D)$  & Separation rank of matrix $\D$
 \\
 \hline
$\mc D_{m\times p}$
& Oblique manifold in $\mb R^{m\times p}$ &  $ \ball_{m\times p}$
& %$\{\D\in \mb R^{m\times p}|\|\D\|_F=1\}$
Euclidean unit sphere in $\mb R^{m\times p}$ \\
 \hline
 $\knrall$ & Set of LSR matrices: $\{\D\in \mb R^{m\times p} | \sr(\D)\leq r\}$ & $\knr$  &
  $\knrall \cap \mc{D}_{m\times p}$
 \\
\hline
 $\kn$ &  $\knr$ with $r=1$: Set of KS matrices on $\mc D_{m\times p}$ & $\kr$ &  $\knr$ with $N=2$
 \\
\hline
$\cknr$ & $ \{\D\in \knr | \|\bigotimes \dkn\|_F\leq c, c>0\}$ & $\knrClosed$ &  Closure of $\knr$\\
%
%\hline
%$\Delta \fp(\rho)$ & $\inf_{\D\in \partial\bro} \Delta \fp(\D;\D^0)$  & $\knrallClosed$ & Closure of $\knrall$ \\
%
\hline
\multirow{2}{*}{ $\mc C$}
& Compact constraint set in LSR-DL problem:
&  \multirow{2}{*}{$\bro$}  & \multirow{2}{*}{ $\{\D\in \lsr| \|\D-\D^0|_F\leq \rho\}$}\\
& one of $\kn$, $\kr$, $\cknr$, or $\knrClosed$ & & \\
\hline
%$\mu_s (\D)$ & $ \max_{|\mc J|\leq s} \max_{j\notin \mc J} \|\D_{\mc J}^T \v d_j\|_1$ & $\Delta f(\v x_1;\x_2)$ & $f(\x_1)-f(\x_2)$ \\
%
 $\Delta f(\v x_1;\x_2)$ & $f(\x_1)-f(\x_2)$ &$f_{\y}(\D)$ & $\inf_{\x\in \mb R^p} \frac{1}{2} \nor{\y-\D\x}_2^2+\lambda \|\x\|_1$  \\
\hline
 $\fp(\D)$ & $\E_{\y \sim \mc P} ~f_{\y}(\D)$ & $\Delta \fp(\rho)$ & $\inf_{\D\in \partial\bro} \Delta \fp(\D;\D^0)$\\
\hline
 $\fy(\D)$ & $\frac{1}{L}\sum\nolimits_{l=1}^L f_{\y_l}(\D)$ & $\fyreg(\D) $ & $\frac{1}{L}\sum\nolimits_{l=1}^L f_{\y_l}(\D)+ \lambda_1 g_1(\t{D}^{\pi})$\\
\hline
$f^{\mathrm{fac}}_{\y}(\dknset)$ & $\inf_{\x\in \mb R^p} \big\| \y-\big(\sum\nolimits_{k=1}^r\bigotimes\nolimits_{n=1}^N \dkn\big)\x \big\|_2^2  +\lambda\nor{\x}_1$ & $\fyfac(\dknset)$ & $\frac{1}{L}\sum\nolimits_{l=1}^L f^{\mathrm{fac}}_{\y_l}(\dknset)$ \\
\hline

%$\mc C^{\mathrm{fac}}$
%
%& $ \left\{\dkntup \big| \sum\bigotimes\dkn\in \cknr: \forall k,n, \dkn\in \alpha \mc \ball_{m\times p}  \right\}$ &  & \\
%
%\hline
\end{tabular}
\end{table*}
\begin{figure}[b!]
\centering
\includegraphics[width=0.98\linewidth]{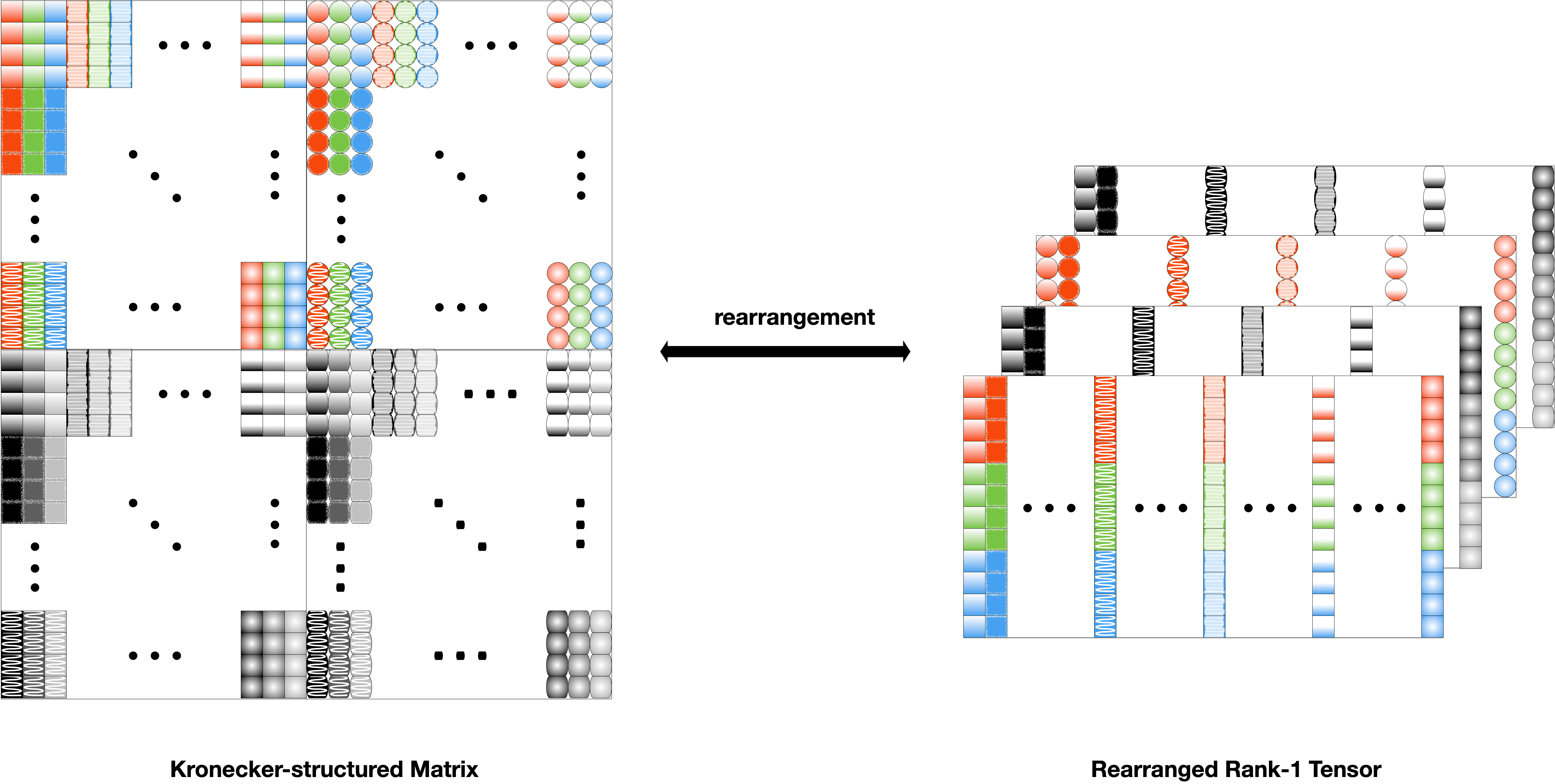}
\caption{Example of rearranging a Kronecker structured matrix ($N=3$) into a third order rank-1 tensor.}
\label{figure_permutation_3}
%\caption{An illustrative example of rearranging a $2$nd-order KS matrix into a rank-1, $2$nd-order tensor (matrix).}
%\label{figure_permutation}
\end{figure}

\begin{lem}\label{lem:rearrangement}
Any $N$th-order Kronecker-structured matrix $\m A=\m A_1\otimes \m A_2\otimes\cdots\otimes \m A_N$ can be rearranged as a rank-$1$, $N$th-order tensor ${\t A}^{\pi}=\v a_N\circ\cdots\circ\v a_2\circ\v a_1$ with $\v a_n\triangleq \vect(\m A_n)$.
\end{lem}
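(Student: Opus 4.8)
\emph{Proof proposal.} The statement asserts that the $mp$ entries of the iterated Kronecker product $\m A=\m A_1\otimes\cdots\otimes\m A_N$ can be reindexed (permuted) so that the resulting $N$-way array factors as a single outer product. My plan is to make this reindexing explicit through mixed-radix (multi-index) representations of the row and column indices of $\m A$, and then verify entrywise that the rearranged array equals $\v a_N\circ\cdots\circ\v a_1$. First I would recall the defining identity for iterated Kronecker products: writing a row index $\alpha\in[m]$, $m=\prod_n m_n$, in its mixed-radix form $\alpha\leftrightarrow(\alpha_1,\dots,\alpha_N)$ with $\alpha_n\in[m_n]$, and likewise a column index $\beta\in[p]$, $p=\prod_n p_n$, as $\beta\leftrightarrow(\beta_1,\dots,\beta_N)$ with $\beta_n\in[p_n]$ (both orderings chosen consistently with how $\otimes$ nests its blocks), one has $[\m A]_{\alpha,\beta}=\prod_{n=1}^N [\m A_n]_{\alpha_n,\beta_n}$. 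I would state this either as a standard fact or prove it by a one-line induction on $N$ using associativity of $\otimes$.

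Next, for each mode $n$ I define the single index $j_n\triangleq \alpha_n+(\beta_n-1)m_n\in[m_np_n]$, which is exactly the position of $[\m A_n]_{\alpha_n,\beta_n}$ inside $\v a_n=\vect(\m A_n)$ under column-major vectorization, so that $[\m A_n]_{\alpha_n,\beta_n}=[\v a_n]_{j_n}$. The map $(\alpha,\beta)\mapsto(j_1,\dots,j_N)$ is a bijection from $[m]\times[p]$ onto $[m_1p_1]\times\cdots\times[m_Np_N]$, since each coordinate map $(\alpha_n,\beta_n)\mapsto j_n$ is a bijection and the mixed-radix correspondences $\alpha\leftrightarrow(\alpha_n)$, $\beta\leftrightarrow(\beta_n)$ are bijections; hence it defines a genuine rearrangement $\pi$ of the entries of $\m A$ into an $N$-way tensor $\t A^{\pi}$ of size $(m_Np_N)\times\cdots\times(m_1p_1)$ via $[\t A^{\pi}]_{j_N,\dots,j_1}\triangleq[\m A]_{\alpha,\beta}$. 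Combining the two displayed identities then gives $[\t A^{\pi}]_{j_N,\dots,j_1}=\prod_{n=1}^N[\v a_n]_{j_n}=[\v a_N\circ\cdots\circ\v a_1]_{j_N,\dots,j_1}$ for every index tuple, so $\t A^{\pi}=\v a_N\circ\cdots\circ\v a_1$, which is rank-$1$ by definition of the outer product.

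The only real subtlety --- and the step I would spell out most carefully in the appendix --- is fixing the index conventions so that the block-nesting order of the iterated Kronecker product is matched to the chosen vectorization of each $\m A_n$ and to the chosen mode order of $\t A^{\pi}$; once this bookkeeping is pinned down, the factorization is immediate and the construction of $\pi$ is explicit. An alternative, slightly slicker route is induction on $N$: the $N=2$ case is the classical Van Loan--Pitsianis rank-$1$ rearrangement $\vect(\m A_1\otimes\m A_2)\mapsto \v a_2\v a_1^{\,T}$, and the inductive step writes $\m A_1\otimes(\m A_2\otimes\cdots\otimes\m A_N)$, applies the $N=2$ rearrangement at the top level, and then unfolds the $\vect(\m A_2\otimes\cdots\otimes\m A_N)$ axis using the inductive hypothesis; this also works but still requires tracking how the nested reshapings compose, so I would favor the direct multi-index construction for transparency.
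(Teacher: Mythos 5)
Your proposal is correct and follows essentially the same route as the paper's Appendix~\ref{rearrangement_section}: the paper's recursive ``tile'' indices $T_{N-1},\dots,T_0$, computed via the floor-arithmetic block decomposition of $(i,j)$, are exactly your mixed-radix coordinates $(\alpha_n,\beta_n)$ converted to the column-major vectorization indices $j_n=\alpha_n+(\beta_n-1)m_n$ of $\vect(\m A_n)$. Your entrywise verification $[\t A^{\pi}]_{j_N,\dots,j_1}=\prod_n[\v a_n]_{j_n}$ makes explicit the rank-$1$ factorization that the paper asserts through the tile/hyper-slice correspondence, but the underlying construction of the permutation is the same.
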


Figure \ref{figure_permutation_3} provides an example of the rearrangement procedure, which involves finding corresponding indices on the KS matrix and the tensor. A proof of Lemma \ref{lem:rearrangement}, which includes details of the rearrangement strategy, is provided in Appendix~\ref{rearrangement_section}. It follows immediately from Lemma \ref{lem:rearrangement} that if $\D= \sum_{k=1}^r \m D^k_{1} \otimes \cdots \otimes \m D^k_{N} $, then we can rearrange matrix $\D$ into the tensor $\t{D}^{\pi}=\sum_{k=1}^r\v d^k_N \circ \v d^k_{N-1}\circ \cdots \circ \v d^k_1,$ where $\v d_n^k=\vect(\m D_n^k)$. Hence, we have the following equivalence:
\[
\sr(\D)\leq r \Longleftrightarrow
\rank(\Dp)\leq r.
\]
%As a result, we can equivalently write Problem \eqref{empirical_risk_minimization_general}
%
%\begin{align}\label{tensor_rank_NP_problem}
%&\min_{\rank(\Dp)\leq r} \fy(\D).
%\end{align}
%
%where $\rank(\cdot)$ is the tensor (CP) rank function \cn.
%Equipped with tools and results from the tensor/matrix recovery literature, in Section \ref{rank_constrained_section}, we show the identifiability of the generating dictionary under the \textit{rank-constrained LSR-DL Problem} \eqref{sep_rank_NP_problem} in both the asymptotic and the finite sample regime. %We heavily rely on results from matrix and tensor recovery in our analysis of this problem which are readily available to employ thanks to the rearrangement procedure.
This correspondence between separation rank and tensor rank highlights a challenge with the LSR-DL problem: finding the rank of a tensor is NP-hard%\cite{haastad1990tensor,hillar2013most}
\cite{haastad1990tensor} and thus so is finding the separation rank of a matrix. This makes Problem~\eqref{problem:sep_rank_NP} in its current form (and its variants) intractable. To overcome this limitation, we introduce two tractable relaxations to the rank-constrained Problem~\eqref{problem:sep_rank_NP} that do not require explicit computation of the tensor rank. The first relaxation %modify Problem \eqref{tensor_rank_NP_problem} and
uses a convex regularization term to implicitly impose low tensor rank structure on $\Dp$, which results in a low separation rank $\D$. The resulting empirical \textit{regularization-based LSR-DL problem} is
%
%\begin{align}\label{sparse_coding_regularized}
%f_{\y}(\D)=\inf_{\x\in \mb R^p}\frac{1}{2}\nor{\y-\D\x}_F^2+\lambda \nor{\v x}_1.
%\end{align}
%
%This results in the empirical regularization-based problem
%
\begin{align}\label{problem:regularized}
&\min_{\D\in \mc D_{m\times p}} \fyreg(\D)
\end{align}
with $\fyreg(\D)\triangleq\frac{1}{L}\sum\nolimits_{l=1}^L f_{\y_l}(\D)+ \lambda_1 g_1(\t{D}^{\pi})$, where $f_{\y}(\D)$ is described in \eqref{sparse_coding_general} and $g_1(\Dp)$ is a convex regularizer to enforce low-rank structure on $\Dp$. The second relaxation is a \textit{factorization-based LSR-DL formulation} in which the LSR dictionary is explicitly written in terms of its subdictionaries. The resulting empirical risk minimization problem is
\begin{align}\label{problem:factorized}
\min_{\{\D^k_n\}:~\sum_{k=1}^r\bigotimes_{n=1}^N \dkn\in \mc D_{m\times p}}~ \fyfac\big(\dknset\big),
\end{align}
where $\fyfac(\dknset)\triangleq\frac{1}{L}\sum\nolimits_{l=1}^L f^{\mathrm{fac}}_{\y_l}(\dknset)$ with
\begin{align*}%\label{sparse_coding_factorized}
f^{\mathrm{fac}}_{\y}(\dknset)
\! \triangleq \!
\inf_{\x\in \mb R^p} \Big\| \y \!- \!
\big(\sum\nolimits_{k=1}^r\bigotimes\nolimits_{n=1}^N \dkn\big)\x \Big\|_2^2  +\lambda\nor{\x}_1,
\end{align*}
and the terms $\bigotimes_{n=1}^N \dkn$ are constrained as $\|\bigotimes_{n=1}^N \dkn\|_F \leq c$ for some positive constant $c$ when $N \geq 3$ and $r \geq 2$.
%where $\{\dkn\}$ are the subdictionaries.
%

In the rest of this paper, we study the problem of identifying the true underlying LSR-DL dictionary by analyzing the LSR-DL Problems~\eqref{problem:sep_rank_NP}--\eqref{problem:factorized} introduced in this section and developing algorithms to solve Problems~\eqref{problem:regularized} and~\eqref{problem:factorized} in both batch and online settings. Note that while Problem~\eqref{problem:sep_rank_NP} (and its variants when $N \geq 3$ and $r \geq 2$) cannot be explicitly solved because of its NP-hardness, identifiability analysis of this problem---provided in Section~\ref{sec:NP_identif}---provides the basis for the analysis of tractable Problems~\eqref{problem:regularized} and~\eqref{problem:factorized}, provided in Section~\ref{sec:tractable_identif}. To improve the readability of our notation-heavy discussions and analysis, we have provided a table of notations (Table~\ref{table:symbol}) for easy access to definitions of the most commonly used notation.
%
%We study the identifiability issue in Problem \eqref{sum_kron_factor_problem} and propose factorization-based structured DL algorithms in Section \ref{factorization_section}. Similarly, identifiability analysis of Problem \eqref{regularized_problem} as well as a regularization-based structured DL algorithm are presented in Section \ref{regularization_section}.

%%%%%%%%%%%%%%%%%%%%%%%%%%%%%%%%%%%%%%%%%%%%%%%%%%%%%%%%%%%%%%%%%%%%%%%%%%%%%%%%%%%%%%%
%%%%%%%%%%%%%%%%%%%%------------------NEW SECTION------------------%%%%%%%%%%%%%%%%%%%%
%%%%%%%%%%%%%%%%%%%%%%%%%%%%%%%%%%%%%%%%%%%%%%%%%%%%%%%%%%%%%%%%%%%%%%%%%%%%%%%%%%%%%%%
\section{Identifiability in the Rank-constrained LSR-DL Problem}\label{sec:NP_identif}
In this section, we derive conditions under which a dictionary $\D^0\in \knr$ is identifiable as a solution to either the separation rank-constrained problem in~\eqref{problem:sep_rank_NP} or a slight variant of~\eqref{problem:sep_rank_NP} when $N \geq 3$ and $r \geq 2$. Specifically, we show that under certain assumptions on the generative model, there is at least one local minimum $\D^*$ of either Problem~\eqref{problem:sep_rank_NP} or one of its variants that is ``close'' to the underlying dictionary $\D^0$. Notwithstanding the fact that no efficient algorithm exists to solve the intractable Problem~\eqref{problem:sep_rank_NP}, this identifiability result is important in that it lays the foundation for the local identifiability results in tractable Problems~\eqref{problem:regularized} and~\eqref{problem:factorized}.

%%%%%%%%%%%%%%%%%%%%%%%%%%%%%%%%%%%%%%%%%%%%%%%%%%%%%%%%%%%%%%%%%%%%%%%%%%%%%%%%%%%%%%%
\textbf{Generative Model:}
Let $\D^0\in \knr$ be the underlying dictionary. Each tensor data sample $\t Y \in \mb{R}^{m_1 \times m_2 \times \cdots \times m_N}$ in its vectorized form is \textit{independently} generated using a linear combination of $s\ll p$ atoms of dictionary $\D^0$ with added noise: $\y \triangleq \vect(\t Y)=\D^0\x^0+\bm \epsilon$, where $\nor{\x^0}_0\leq s$. Specifically, $s$ atoms of $\D^0$ are selected uniformly at random, defining the support $\mc J\subset [p]$. Then, we draw a random sparse coefficient vector $\x^0\in \mb R^{p}$ supported on $\mc J$. We state further assumptions on our model similar to prior works~\cite{gribonval2014sparse,shakeri2018achieve}.

%\begin{assum}[{Coefficient Distribution}]\label{assumptions:coefficient} We assume that
%\begin{enumerate}[label=\roman*)]
%\item the nonzero elements of $\x^0$ are zero-mean and uncorrelated: $\E\big\{\x_{\mc J}^0 [\x_{\mc J}^0]^T|\mc J\big\}=\E\{x^2\}\cdot \m I_s$,
%\item the nonzero elements of $\s^0 \triangleq \sgn(\x^0)$ are zero-mean and uncorrelated: $\E\big\{\s_{\mc J}^0 [\s_{\mc J}^0]^T|\mc J\big\}= \m I_s $,
%
%\item  $\x^0$ and $\s^0$ are uncorrelated: $\E\big\{\s_{\mc J}^0 [\x_{\mc J}^0]^T|\mc J\big\}=\E\{|x|\}\cdot \m I_s$,
%
%\item $\x^0$ has bounded norm almost surely: $\nor{\x^0}_2\leq M_{x}$ with probability $1$,
%
%\item nonzero elements of $\x^0$ are far from zero almost surely: $\min\limits_{j\in \mc J} |\x_j^0| \geq \underline x$ with probability $1$.
%\end{enumerate}
%\end{assum}

\begin{assum}[{Coefficient Distribution}]\label{assumptions:coefficient} Consider a random variable $x \in \mb R$ and positive constants $M_x$ and $\underline x$. Define $\s^0\triangleq \sgn(\x^0)$. We assume: \textbf{i)} $\E\big\{\x_{\mc J}^0 [\x_{\mc J}^0]^T|\mc J\big\}=\E\{x^2\}\cdot \m I_s$,
\textbf{ii)} $\E\big\{\s_{\mc J}^0 [\s_{\mc J}^0]^T|\mc J\big\}= \m I_s $,
\textbf{iii)} $\E\big\{\s_{\mc J}^0 [\x_{\mc J}^0]^T|\mc J\big\}=\E\{|x|\}\cdot \m I_s$, and
\textbf{iv)} $\nor{\x^0}_2\leq M_{x}$ and $\min\limits_{j\in \mc J} |\x_j^0| \geq \underline x$ almost surely.
\end{assum}

%\begin{assum}[Noise Distribution]\label{assumptions:noise} We make the following assumptions on the distribution of the noise $\bm{\epsilon}$:
%\begin{enumerate}[label=\roman*)]
%\item the elements of $\bm \epsilon$ are zero-mean and uncorrelated: $\E\big\{\bm \epsilon \bm \epsilon^T|\mc J\big\}=\E\{\epsilon^2\}\cdot \m I$,
%
%\item  $\bm \epsilon$ is uncorrelated with $\x^0$ and $\s^0$: $\E\big\{ \x^0 \bm \epsilon^T|\mc J\big\}=\E\big\{ \s^0 \bm \epsilon^T|\mc J\big\}= 0$,
%
%\item $\bm \epsilon$ has bounded norm almost surely: $\nor{\bm \epsilon}_2\leq M_{\epsilon}$ with probability $1$.
%\end{enumerate}
%\end{assum}

\begin{assum}[Noise Distribution]\label{assumptions:noise}
Consider a random variable $\epsilon \in \mb R$ and positive constant $M_{\epsilon}$. We assume:
\textbf{i)} $\E\big\{\bm \epsilon \bm \epsilon^T|\mc J\big\}=\E\{\epsilon^2\}\cdot \m I_m$,
\textbf{ii)} $\E\big\{ \x^0 \bm \epsilon^T|\mc J\big\}=\E\big\{ \s^0 \bm \epsilon^T|\mc J\big\}= 0$, and
\textbf{iii)} $\nor{\bm \epsilon}_2\leq M_{\epsilon}$ almost surely.
\end{assum}
Note that Assumptions \ref{assumptions:coefficient}-iv and \ref{assumptions:noise}-iii imply the magnitude of $\y$ is bounded: $\|\y\|_2\leq M_y$. Next, we define positive parameters $\bar\lambda\triangleq \frac{\lambda}{ \E\{|x|\}}$, $C_{\min}\triangleq 24 \frac{\E\{|x|\}^2}{\E\{x^2\}}\pr{\nor{\D^0}_2+1}^2 \frac{s}{p} \nor{[\D^0]^T\D^0-\m I}_F$, and $C_{\max}\triangleq\frac{2\E \{|x|\} }{7 M_{x}}\pr{1-2\mu_s(\D^0)}$ for ease of notation. We use the following assumption, similar to Gribonval et al.~\cite [Thm. 1]{gribonval2014sparse}.
\begin{assum}\label{assumptions:parameter}
Assume $C_{\min}\leq C_{\max}$, $\lambda \leq \underline x/4$, $s\leq \frac{p}{16\pr{\nor{\D^0}_2+1}^2}$, $\mu_s(\D^0)\leq 1/4$, and the noise is relatively small in the sense that $\frac{M_{\epsilon}}{M_{x}}<\frac{7}{2}\pr{C_{\max}-C_{\min}}\bar\lambda$.
\end{assum}

\textbf{Our Approach:}
In our analysis of the separation rank-constrained LSR-DL problem, we will alternate between four different constraint sets that are related to our dictionary class $\knr$, namely, $\kr$, $\kn$, the closure $\knrClosed\triangleq \cl(\knr)$ of $\knr$ under the Frobenius norm, and a closed subset of $\knr$, defined as $\cknr\triangleq \{\D\in \knr | \|\bigotimes \dkn\|_F\leq c, c>0\}$. We often use the generic notation $\lsr$ for the constraint set when our discussion is applicable to more than one of these sets.

We want to find conditions that imply the existence of a local minimum of $\min_{\D\in \lsr} \fy(\D)$ within a ball of radius $\rho$ around the true dictionary $\D^0\in \knr$:
\begin{align}\label{bro}
\bro\triangleq\{\D\in \lsr| \nor{\D-\D^0}_F\leq \rho\}
\end{align}
for some small $\rho>0$. To this end, we first show that the expected risk function $\fp(\D)$ in \eqref{problem:expected_risk_minimization_general} has a local minimum in $\bro$ for the LSR-DL constraint set $\lsr$.

To show that a local minimum of $\fp: \lsr \mapsto \mb R$ exists in $\bro$, we need to show that $\fp(\D)$ attains its minimum over $\bro$ in the interior of $\bro$.\footnote{ Having a minimum $\D^*$ on the boundary is not sufficient. % since the function might achieve lower values in the neighborhood of $\D^*$ outside $\bro$.
If the minimizer of $\fp(\D)$ over $\bro$ is on the boundary of $\bro$, the value of $\fp(\D)$ in the neighborhood of $\D^*$ outside $\bro$ can be smaller than $\fp(\D^*)$; therefore, $\D^*$ is not necessarily a local minimum of $\D\in \mc C \mapsto \fp(\D)$.} We show this in two stages. First, we use the Weierstrass Extreme Value Theorem~\cite{rudin1964principles}, which dictates that the continuous function $\fp(\D)$ attains a minimum in (or on the boundary of) $\bro$ as long as $\bro$ is a compact set. %The function $\fp(\D)$ is continuous with respect to the Frobenius metric $\nor{\D-\D'}_F$ on $\mb R^{m\times p}$~\cn.
Therefore, we first investigate compactness of $\bro$ in Section \ref{subsec:compactness}.
Second, in order to be certain that the minimizer of $\fp(\D)$ over $\bro$ is a local minimum of $\D\in \mc C \mapsto \fp(\D)$, we show that $\fp(\D)$ cannot obtain its minimum over $\mc B_{\rho}$ on the boundary of $\mc B_{\rho}$, denoted by $\partial \bro$.
%
%Moreover, if one can show that $\fp$ does not reach its infimum on the boundary of $\bro$ (denoted by $\partial \bro$), it trivially follows that the extreme points are in the interior of $\bro$. % (denoted by $\bro^{\circ}$).
To this end, in Section \ref{subsec:asymptotic} we derive conditions that if $\partial \bro$ is nonempty then we have\footnote{If the boundary is empty, it is trivial that the infimum is attained in the interior of the set.}
\begin{align}\label{sphere_condition_asymptotic}
\Delta \fp(\rho)\triangleq\inf_{\D\in \partial\bro} \Delta \fp(\D;\D^0)>0,
\end{align}
which implies $\fp(\D)$ cannot achieve its minimum on $\partial \bro$.

Finally, in Section \ref{subsec:finite} we use concentration of measure inequalities %, based on the covering number of $\lsr$,
to relate $\fy(\D)$ in~\eqref{problem:empirical_risk_minimization_general} to $\fp(\D)$ and find the number of samples needed to guarantee (with high probability) that $\fy(\D)$ also has a local minimum in the interior of $\bro$.
% Satisfaction of these conditions imply that the minimum value is obtained in $\mc C_{\rho}^{\circ}$. First, let us discuss compactness of $\mc K^{N,r}_{m_i,p_i}$. %In the rest of this section, for ease of notation, we drop the subscript $\{m_i,p_i\}$ in describing sets of LSR matrices, unless deemed necessary.

%%%%%%%%%%%%%%%%%%%%%%%%%%%%%%%%%%%%%%%%%%%%%%%%%%%%%%%%%%%%%%%%%%%%%%%%%%%%%%%%%%%%%%%
\subsection{Compactness of the Constraint Sets}\label{subsec:compactness}
%\mg{highlight the main points}
When the constraint set $\lsr$ is a compact subset of the Euclidean space $\mb R^{m\times p}$, the subset $\bro$ is also compact. %In that case, according to the extreme value theorem, the function $\fp: \bro\mapsto \mb R$ attains its minimum at a point in $\bro$.
Thus, we first investigate the compactness of the constraint set $\knr$. Since $\knr$ is a bounded set, according to the Heine-Borel Theorem~\cite{rudin1964principles}, it is a compact subset of $\mb R^{m\times p}$ if and only if it is closed. Also, $\knr$ can be written as the intersection of $\knrall \triangleq \{\D\in \mb R^{m\times p} | \sr(\D)\leq r\}$ and the oblique manifold $\mc D$. In order for $\knr = \knrall \cap \mc{D}$ to be closed, it suffices to show that $\knrall$ and $\mc D$ are closed. It is trivial to show $\mc D$ is closed; hence, we focus on whether $\knrall$ is closed. %in the rest of this section.

In the following, we use the facts that the constraint $\sr(\D)\leq r$ %in the definition of $\mc K^{N,r}_{m_i,p_i}$
is equivalent to $\rank(\Dp)\leq r$ and that the rearrangement mapping %is an isomorphism (homeomorphism) that
that sends $\D$ to $\Dp$ preserves topological properties of sets such as the distances between the set elements under the Frobenius norm. These facts allow us to translate the topological properties of tensor sets into properties of the structured matrices that we study here.

\textit{Remark.} Proofs of Lemmas~\ref{lem:closedness_LSR},~\ref{lem:unbounded_sequence},~\ref{lem:covering_number_2nd},  \ref{lem:covering_number_general}, and~\ref{lem:neighborhoods_relation_D_to_Dkn}  are provided in Appendix \ref{proofs_section}.
\begin{lem}\label{lem:closedness_LSR}
Let $N \geq  3$ and $r \geq 2$. % and $m_ip_i\geq 2$ for $i \in [N]$. %\ads{if this is not true, doesn't that make that component trivial? Do we need this assumption?}.
Then, the set $\knrall$ is not closed. However, the set of KS matrices ${\mc L}_{\v m,\v p}^{N,1}$ and the set ${\mc L}_{\v m,\v p}^{2,r}$ are closed.
\end{lem}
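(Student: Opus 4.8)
The plan is to transport every claim to the language of CP‑rank via the rearrangement correspondence and then invoke---or, for the failure of closedness, re‑establish---classical facts about bounded‑rank tensor sets. By Lemma~\ref{lem:rearrangement} and the equivalence $\sr(\D)\le r\Leftrightarrow\rank(\Dp)\le r$, together with the fact that the rearrangement $\D\mapsto\Dp$ merely permutes entries and is therefore a Frobenius isometry (hence a homeomorphism), $\knrall$ is closed in $\mb R^{m\times p}$ if and only if the set $\mathcal T^{N,r}$ of $N$th‑order real tensors with mode‑$n$ dimension $m_np_n$ and CP‑rank at most $r$ is closed; likewise ${\mc L}_{\v m,\v p}^{N,1}$ corresponds to $\mathcal T^{N,1}$, and ${\mc L}_{\v m,\v p}^{2,r}$ corresponds to the set of $m\times p$ matrices of rank at most $r$. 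It thus suffices to prove: (i) $\{\m A\in\mb R^{m\times p}:\rank(\m A)\le r\}$ is closed; (ii) $\mathcal T^{N,1}$ is closed; and (iii) $\mathcal T^{N,r}$ is not closed when $N\ge3$ and $r\ge2$.

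Claim (i) is immediate: $\{\m A:\rank(\m A)\le r\}$ is the common zero set of all $(r+1)\times(r+1)$ minors of $\m A$, which are continuous (polynomial) functions, hence the set is closed; this settles ${\mc L}_{\v m,\v p}^{2,r}$. For claim (ii) I would argue directly by sequences. Let $\t A^{(k)}=\v v^{(k)}_1\circ\cdots\circ\v v^{(k)}_N\to\t A$. If $\t A=\v 0$ there is nothing to prove; otherwise $\t A^{(k)}\neq\v 0$ for large $k$, so we may rescale the factors so that $\|\v v^{(k)}_n\|_2=1$ for $n<N$, which forces $\|\v v^{(k)}_N\|_2=\|\t A^{(k)}\|_F\to\|\t A\|_F<\infty$. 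By Bolzano--Weierstrass a subsequence satisfies $\v v^{(k)}_n\to\v v^\star_n$ for every $n$, and continuity of the outer product yields $\t A=\v v^\star_1\circ\cdots\circ\v v^\star_N$, so $\rank(\t A)\le1$. Hence $\mathcal T^{N,1}$, and therefore ${\mc L}_{\v m,\v p}^{N,1}$, is closed.

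For claim (iii) the plan is to exhibit a sequence in $\mathcal T^{N,r}$ converging to a tensor of CP‑rank $r+1$---the classical border‑rank deficiency. Pick three modes, say $1,2,3$, linearly independent pairs $\v e_n,\v f_n$ in the corresponding coordinate spaces ($n=1,2,3$), and a fixed unit vector along each of the remaining modes, so that every tensor written below is silently tensored with that rank‑one tail. For $t>0$ set
\begin{align*}
\t A_t \;\triangleq\; \tfrac{1}{t}\Big[(\v e_1+t\v f_1)\circ(\v e_2+t\v f_2)\circ(\v e_3+t\v f_3)-\v e_1\circ\v e_2\circ\v e_3\Big],
\end{align*}
which is a difference of two rank‑one tensors, so $\rank(\t A_t)\le2\le r$, while expanding and letting $t\to0^+$ gives $\t A_t\to\t A\triangleq\v f_1\circ\v e_2\circ\v e_3+\v e_1\circ\v f_2\circ\v e_3+\v e_1\circ\v e_2\circ\v f_3$, the ``$W$‑state,'' whose CP‑rank is $3$. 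For $r=2$ this already shows $\mathcal T^{N,2}$, hence $\knrall$, is not closed. For $r>2$, add to both $\t A$ and each $\t A_t$ the same $r-2$ rank‑one tensors supported on fresh, mutually disjoint coordinate blocks in modes $1,2,3$: the border rank of the perturbed limit is at most $2+(r-2)=r$, so the perturbed sequence still lies in $\mathcal T^{N,r}$, whereas its CP‑rank is $3+(r-2)=r+1$ because CP‑rank is additive over a direct sum of coordinate blocks, so the limit escapes $\mathcal T^{N,r}$. The only prerequisite is enough coordinate room---three modes with $m_np_n\ge r$, which holds in the regime of interest $r\ll\min\{m,p\}$; alternatively one may use a size‑$r$ nilpotent matrix‑pencil gadget $(\m I_r,\m J_r)$ in place of the padded $W$‑state.

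The main obstacle is the last step of (iii): certifying that the limit really has CP‑rank $>r$, i.e.\ that it cannot be written with $\le r$ rank‑one terms. For the base case I would prove $\rank(\t A)=3$ via the pencil criterion for $2\times2\times2$ tensors---the two mode‑$3$ slices of $\t A$ form a matrix pencil whose binary form $\det(\lambda\m S_1+\mu\m S_2)$ has a repeated root, so the slices are not simultaneously diagonalizable and a two‑term decomposition is impossible---and for $r>2$ I would combine this with the block‑additivity of CP‑rank. Everything else (the rearrangement reduction, continuity of the outer product, and the two closedness claims) is routine.
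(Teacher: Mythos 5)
Your proposal takes the same route as the paper's proof: transport all three claims to statements about bounded-CP-rank tensor sets via the Frobenius-isometric rearrangement $\D\mapsto\Dp$, and then settle those tensor-side statements. The difference is that the paper disposes of them by citing Propositions 4.1--4.3 of De Silva and Lim~\cite{DeSilva2008illposed}, whereas you prove them from scratch. Your claim (i) (common zero set of the $(r+1)\times(r+1)$ minors), claim (ii) (normalize the first $N-1$ factors, extract a convergent subsequence, use continuity of the outer product), and your $r=2$ witness (the $W$-state degeneration, which is exactly the paper's own Example rewritten in tensor rather than Kronecker form, with $t\mapsto 1/t$) are all correct, and the pencil criterion you cite does certify that the $W$-state has rank $3$. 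What the self-contained route buys is independence from the external reference and an explicit record of the dimension hypothesis the lemma silently needs: at least three modes with $m_np_n\ge 2$ (and $\ge r$ for the padded version), without which $\knrall$ can be all of $\mb R^{m\times p}$ and hence closed.

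The one step I would push back on is the padding argument for $r>2$. You justify that the padded limit has rank $r+1$ by asserting that ``CP-rank is additive over a direct sum of coordinate blocks.'' In that generality this is Strassen's direct-sum conjecture, which is now known to be \emph{false} (Shitov's counterexamples), so it cannot be invoked as a known fact. What you actually need is far weaker: adjoining a single rank-one tensor supported on fresh coordinate blocks raises the rank by exactly one. That special case is true and provable---by the substitution method, or by the Ja'Ja'--Takche theorem, since one summand has every mode dimension equal to one---so the gap is entirely fixable; your alternative gadget, the nilpotent pencil with slices $(\m I_r,\m J_r)$, whose rank is $r+1$ while its border rank is $r$, sidesteps the additivity issue altogether and may be the cleaner choice. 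A last cosmetic point: in claim (iii) the reduction from $N>3$ to $N=3$ by tensoring with a fixed rank-one tail deserves the one-line remark that contracting the tail modes against a suitable functional shows the rank of the limit is unchanged, so the lower bound survives the embedding.
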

To illustrate the non-closedness of $\knrall$ for $N \geq  3$ and $r \geq 2$ and motivate the use of the sets $\knrClosed$ and $\cknr$ in lieu of $\knr$, we provide an example. % of a sequence of separation-rank-$2$ matrices that have a limit point that is outside the set of matrices of separation-rank-2 or less.

\textit{Example.}
Consider the sequence
%
%\begin{align*}
$\D_t :=t \left(\m A_1+\frac{1}{t}\m B_1 \right)\otimes \left(\m A_2+\frac{1}{t}\m B_2 \right)\otimes \left(\m A_3+\frac{1}{t}\m B_3 \right))%\nonumber\\
%
%&\qquad
-t \m A_1\otimes \m A_2\otimes \m A_3$
%\end{align*}
%
where $\m A_i, \m B_i\in \mb R^{m_i\times p_i}$ are linearly independent pairs. %(seen as vectors \zs{what does this mean?}).
Here, $\mk R^3_{\m m,\m p}(\D_t)\leq 2$ for any $t$. The limit point of this sequence is $\lim_{t\rightarrow\infty} \D_t
=\m A_1 \otimes \m A_2 \otimes \m B_3+\m A_1 \otimes \m B_2 \otimes \m A_3+\m B_1 \otimes \m A_2 \otimes \m B_3$,
%
%\begin{align*}
%\m D&:=\lim_{n\rightarrow\infty} \D_n\nonumber\\
%
%&=\m A_1 \otimes \m A_2 \otimes \m B_3+\m A_1 \otimes \m B_2 \otimes \m A_3+\m B_1 \otimes \m A_2 \otimes \m B_3
%\end{align*}
%
which is a separation-rank-$3$ matrix. Hence, the set $\mc L^{3,2}_{\v m,\v p}$ is not closed.

The non-closedness of $\knrall$ means there exist sequences in $\knrall$ whose limit points are not in the set. Two possible solutions to circumvent this issue include: ($i$) use the closure of $\knrall$ as the constraint set, and ($ii$) eliminate such sequences from $\knrall$. We discuss each solution in detail below.
%
%%%%%%%%%%%%%%%%%%%%%%%%%%%%%%%%%%%%%%%%
% WUB: Replaced by the paragraph above
%%%%%%%%%%%%%%%%%%%%%%%%%%%%%%%%%%%%%%%%
%Although for general $r>1$ and $N>2$ the set $\knrall$ is not closed, we introduce ways around this issue. The non-closedness of $\knrall$ means there exist sequences in $\knrall$ whose limit points that are not in the set. The two possible solutions are i) to use the closure of $\knrall$ by adding its limit points or ii) to eliminate such sequences from $\knrall$. We discuss each solution in detail below.
%%%%%%%%%%%%%%%%%%%%%%%%%%%%%%%%%%%%%%%%

\paragraph{Adding the limit points} We denote the closure of $\knrall$ by $\knrallClosed\triangleq\cl (\knrall)$. %Note that we have $\overline{\mc K}^N=\widetilde{\mc K}^N$ and $\overline{\mc K}^2_r=\widetilde{\mc K}^2_r$
By slightly relaxing the constraint set in \eqref{problem:sep_rank_NP} to $\knrallClosed \cap \mc D$, we can instead solve the following:%\zs{and we can solve the following instead of~\eqref{sep_rank_NP_problem}:}
\begin{align}\label{problem:sep_rank_closure_NP}
&\min_{\D\in \knrClosed} \fy(\D),
\end{align}
%This is equivalent to modifying Problem \eqref{LSR_tensor_problem} to
%
% \begin{align}\label{LSR_tensor_border_problem}
% &\min_{\m D\in \mc D}~ F_{\Y}(\D) \nonumber\\
% &\mbox{s.t.}  \quad\overline{\rank}(\Dp)\leq r.
% \end{align}
% %
% where $\overline{\rank}$ is the tensor \textit{border rank}. A tensor is said to have border rank $r$ if it belongs to the closure of the set $\overline{\mc S_r}:=\cl(\mc S_r)$ but not to $\overline{\mc S_r}$~\cn~\mg{define $\mc S$}.
where $\knrClosed=\knrallClosed\cap \mc D$. Note that ($i$) a solution to \eqref{problem:sep_rank_NP} is a solution to \eqref{problem:sep_rank_closure_NP} and ($ii$) a solution to \eqref{problem:sep_rank_closure_NP} is either a solution to \eqref{problem:sep_rank_NP} or is arbitrarily close to a member of $\knr$.\footnote{The first argument holds since if $\fy(\D^*)\leq \fy(\D)$ for all $\D\in \knr$, by continuity it also holds for all $\D\in \knrClosed$.
The second argument is trivial.}

\paragraph{Eliminating the problematic sequences}\label{general_case_close_eliminate} In order to exclude the sequences $\D_t\rightarrow \D$ such that $\D_t \in \knrall$ for all $t$ and $\D \notin \knrall$, we first need to characterize them.
\begin{lem}\label{lem:unbounded_sequence}
Assume $\D_t\rightarrow \D$ where $\sr(\D_t)\leq r$ and $\sr(\D)>r$. We can write $\D_t=\sum_{k=1}^r \lambda_{t}^k \bigotimes_{n=1}^N [\dkn]_t$ where $\nor{[\dkn]_t}_F=1$. Then, $\max_k |\lambda^k_t|\rightarrow \infty$ as $t\rightarrow \infty$. In fact,  at least two of the coefficient sequences $\lambda^k_t$ are unbounded.
\end{lem}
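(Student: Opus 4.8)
The plan is to argue by contradiction using the rearrangement correspondence from Lemma~\ref{lem:rearrangement}. Suppose $\D_t = \sum_{k=1}^r \lambda_t^k \bigotimes_{n=1}^N [\dkn]_t$ with $\|[\dkn]_t\|_F = 1$, and suppose for contradiction that the scalar sequences $(\lambda_t^k)_t$ are \emph{all} bounded (more precisely, that at most one is unbounded --- I will handle the two cases together). First I would pass to the tensor side: by the rearrangement mapping, $\Dp_t = \sum_{k=1}^r \lambda_t^k \, \v u_{t}^{k,N} \circ \cdots \circ \v u_t^{k,1}$ with each $\v u_t^{k,n} = \vect([\dkn]_t)$ a unit vector, and $\Dp_t \to \Dp$ with $\rank(\Dp) > r$. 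Since each factor vector lives on the (compact) unit sphere and each $\lambda_t^k$ is assumed bounded, the whole parameter tuple $\big((\lambda_t^k)_k, (\v u_t^{k,n})_{k,n}\big)$ lies in a compact set; so along a subsequence it converges to a limit $\big((\lambda_\infty^k)_k, (\v u_\infty^{k,n})_{k,n}\big)$. By continuity of the multilinear map $(\lambda, (\v u^n)) \mapsto \sum_k \lambda^k \bigotimes_n \v u^{k,n}$, the limit of $\Dp_t$ along this subsequence equals $\sum_k \lambda_\infty^k \, \v u_\infty^{k,N} \circ \cdots \circ \v u_\infty^{k,1}$, which is a tensor of rank at most $r$. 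But the limit of $\Dp_t$ is $\Dp$, which has rank $> r$ --- contradiction. Hence $\max_k |\lambda_t^k| \to \infty$.

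Next I would upgrade "$\max_k |\lambda_t^k| \to \infty$" to "at least two of the sequences are unbounded." Again suppose not: then exactly one sequence, say $(\lambda_t^1)_t$, is unbounded while $(\lambda_t^k)_t$ is bounded for $k \geq 2$. Split $\Dp_t = \lambda_t^1 \, \v u_t^{1,N}\circ\cdots\circ\v u_t^{1,1} + \m R_t$, where $\m R_t \triangleq \sum_{k=2}^r \lambda_t^k \,\v u_t^{k,N}\circ\cdots\circ\v u_t^{k,1}$. Since each factor is a unit vector and the $\lambda_t^k$ ($k\ge2$) are bounded, $\{\m R_t\}$ is bounded; pass to a subsequence so that $\m R_t \to \m R$ and each factor vector $\v u_t^{k,n}$ converges (compactness of spheres), with $\m R$ of rank at most $r-1$. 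Then $\lambda_t^1 \, \v u_t^{1,N}\circ\cdots\circ\v u_t^{1,1} = \Dp_t - \m R_t \to \Dp - \m R$, a fixed tensor. Now I would extract a limit of the unit rank-$1$ tensors $\v u_t^{1,N}\circ\cdots\circ\v u_t^{1,1}$ (again by sphere compactness) to get a unit-Frobenius-norm rank-$1$ tensor $\m V$; but since $|\lambda_t^1| \to \infty$ while $\lambda_t^1 \m V_t$ converges, we must have $\m V = 0$ (the limit of $\|\lambda_t^1 \m V_t\|_F = |\lambda_t^1|$ would be $\infty$ unless the direction degenerates --- more carefully, $\m V_t \to \m V$ and $\lambda_t^1 \m V_t$ bounded forces $\m V = 0$), contradicting $\|\m V\|_F = \lim \|\v u_t^{1,N}\circ\cdots\circ\v u_t^{1,1}\|_F = \prod_n \|\v u_t^{1,n}\|_2 = 1$. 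Therefore at least two of the $\lambda_t^k$ must be unbounded.

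The main obstacle I anticipate is the bookkeeping in the second step: a single unbounded coefficient cannot be ruled out merely by "the remainder is bounded," because one must be careful that $\lambda_t^1 \m V_t$ is genuinely forced to be bounded (this is where $\m R_t \to \m R$ and $\Dp_t \to \Dp$ are both needed) and that the rank-$1$ direction $\m V_t$ cannot itself run off to infinity --- it can't, since it has unit Frobenius norm, which pins down $\m V = \lim \m V_t$ to also have unit norm, yielding the contradiction with $|\lambda_t^1|\to\infty$. A secondary subtlety is that one should really argue the contrapositive cleanly: \emph{if} fewer than two coefficient sequences are unbounded, \emph{then} a convergent subsequence of the parameters exhibits the limit as having separation rank at most $r$, contradicting $\sr(\D) > r$. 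Once the compactness-of-the-sphere and continuity-of-the-CP-map observations are in place, both steps reduce to routine subsequence extraction; I would present the two steps in the order above since the first is a clean warm-up for the slightly more delicate splitting argument in the second.
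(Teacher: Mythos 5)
Your proof is correct in substance but takes a genuinely different route from the paper: the paper's entire proof is a one-line appeal to Proposition~4.8 of De Silva and Lim \cite{DeSilva2008illposed}, transported to LSR matrices via the rearrangement isometry of Lemma~\ref{lem:rearrangement}, whereas you reprove that proposition from scratch by compactness of the unit spheres and continuity of the map $(\lambda^k,(\v u^{k,n}))\mapsto\sum_k\lambda^k\,\v u^{k,N}\circ\cdots\circ\v u^{k,1}$. What your version buys is a self-contained, elementary argument that makes visible exactly where non-closedness forces the blow-up; what the paper's version buys is brevity and a pointer to the standard reference where this phenomenon (border rank exceeding rank) is treated systematically. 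Your use of the rearrangement to pass to the tensor side is exactly the paper's mechanism.

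Two pieces of subsequence bookkeeping need tightening. First, your opening contradiction hypothesis (``all $\lambda_t^k$ bounded'') only negates ``at least one sequence is unbounded,'' which yields $\limsup_t\max_k|\lambda_t^k|=\infty$ rather than the stated $\max_k|\lambda_t^k|\to\infty$; to get the latter, run the same compactness argument on an arbitrary subsequence along which $\max_k|\lambda_t^k|$ stays bounded (every subsequence of $\Dp_t$ still converges to $\Dp$, so the same contradiction appears). Second, in the splitting step you extract a subsequence making $\m R_t$ and the factor vectors converge and then invoke $|\lambda_t^1|\to\infty$ along it; but unboundedness of the full sequence $(\lambda_t^1)$ does not give divergence along that particular subsequence. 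Reverse the order: first pass to a subsequence with $|\lambda_t^1|\to\infty$, then refine it for convergence of $\m R_t$ and the factors (boundedness persists along subsequences), and the norm identity $\nor{\lambda_t^1\,\v u_t^{1,N}\circ\cdots\circ\v u_t^{1,1}}_F=|\lambda_t^1|$ delivers the contradiction. Neither point is a conceptual gap; both are routine fixes.
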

The following corollary of Lemma \ref{lem:unbounded_sequence} suggests that one can exclude the problematic sequences from $\knrall$ by bounding the norm of individual KS (separation-rank-$1$) terms.
\begin{cor}\label{cor:bounded_closedness}
Consider the set $\knrall$ whose members can be written as $\D=\sum_{k=1}^r \bigotimes_{n=1}^N \dkn$ such that $\dkn\in \mb R^{m_n\times p_n}$.
Then, for any $c>0$ the set
%\begin{align*}
$
\cknrall=\big\{\D\in \knrall|\big\|\bigotimes \dkn\big\|_F\leq c\big\}
$
%\end{align*}
is closed.
%$\muknrall=\big\{\D\in \knrall|~\big |\big\langle\bigotimes\dkn, ~\bigotimes\D^l_n\big\rangle\big |\leq \mu\big\}$
\end{cor}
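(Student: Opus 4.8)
The plan is to show $\cknrall$ is closed directly via sequences: take an arbitrary convergent sequence $\D_t \to \D$ with $\D_t \in \cknrall$ and show that the limit $\D$ also belongs to $\cknrall$. First I would observe that since each $\D_t \in \knrall$, we have $\sr(\D_t) \le r$, so by Lemma \ref{lem:rearrangement} (applied termwise) the rearrangements satisfy $\rank(\t D_t^\pi) \le r$. Moreover, membership in $\cknrall$ means each $\D_t$ admits a decomposition $\D_t = \sum_{k=1}^r \bigotimes_{n=1}^N [\D^k_n]_t$ with $\|\bigotimes_n [\D^k_n]_t\|_F \le c$ for every $k$; rearranging, $\t D_t^\pi = \sum_{k=1}^r \v d^k_N(t)\circ\cdots\circ\v d^k_1(t)$ with $\|\v d^k_N(t)\|_2\cdots\|\v d^k_1(t)\|_2 \le c$, i.e.\ each rank-1 component of $\t D_t^\pi$ has Frobenius norm at most $c$. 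So the real content is the tensor statement: the set of $N$th-order tensors expressible as a sum of $r$ rank-1 terms each of Frobenius norm $\le c$ is closed, and then I transfer back through the rearrangement map (which is an isometry under the Frobenius norm, as noted in Section \ref{subsec:compactness}).

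For the tensor claim I would argue as follows. Normalize each rank-1 component of $\t D_t^\pi$ as $\lambda^k_t \, \v u^k_N(t)\circ\cdots\circ\v u^k_1(t)$ with unit-norm $\v u^k_n(t)$ and $|\lambda^k_t| \le c$ (the norm bound). The coefficient vector $(\lambda^1_t,\dots,\lambda^r_t)$ lives in the compact cube $[-c,c]^r$, and each unit-norm factor $\v u^k_n(t)$ lives in the compact unit sphere of $\mb R^{m_n p_n}$. Hence the full tuple of all $(\lambda^k_t, \{\v u^k_n(t)\}_{n,k})$ lies in a compact set; passing to a subsequence, it converges to some limit $(\lambda^k, \{\v u^k_n\})$ with $|\lambda^k|\le c$ and $\|\v u^k_n\|_2 = 1$. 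Since the map $(\lambda^k, \{\v u^k_n\}) \mapsto \sum_{k=1}^r \lambda^k \v u^k_N\circ\cdots\circ\v u^k_1$ is continuous (it is multilinear in finitely many bounded arguments), the limit of the subsequence of $\t D_t^\pi$ equals $\sum_{k=1}^r \lambda^k \v u^k_N\circ\cdots\circ\v u^k_1$. But $\t D_t^\pi \to \t D^\pi$ (the rearrangement of the limit $\D$, using that the rearrangement is continuous), so $\t D^\pi = \sum_{k=1}^r \lambda^k \v u^k_N\circ\cdots\circ\v u^k_1$; this exhibits $\t D^\pi$ as a sum of $r$ rank-1 tensors with each component Frobenius norm $|\lambda^k| \le c$. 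Rearranging back, $\D = \sum_{k=1}^r \bigotimes_n \D^k_n$ with $\|\bigotimes_n \D^k_n\|_F \le c$, so $\D \in \cknrall$, and we are done.

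Contrasting with Lemma \ref{lem:unbounded_sequence}, the reason this works is precisely that the norm bound $c$ prevents the pathological behavior identified there: in the non-closed case the only way to approximate a separation-rank-$(>r)$ matrix by separation-rank-$\le r$ matrices is to let at least two of the coefficients $\lambda^k_t$ blow up (cancellation of large nearly-parallel terms), and the constraint $\|\bigotimes_n [\D^k_n]_t\|_F = |\lambda^k_t| \le c$ rules that out. The main obstacle — and the one subtlety worth being careful about — is the compactness/subsequence argument: a priori the given decomposition of $\D_t$ need not be the normalized one, so I first rewrite it in normalized form (allowed since $\|\bigotimes_n [\D^k_n]_t\|_F$ is the product of the factor norms, which is exactly $|\lambda^k_t|$ after normalization), and I must remember that extracting a convergent subsequence is legitimate for proving closedness because the original sequence already converges to $\D$, so every subsequence converges to the same $\D$, forcing $\t D^\pi$ to have the claimed bounded-rank-1 decomposition. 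Everything else — continuity of the rearrangement map, isometry under $\|\cdot\|_F$, multilinearity of the outer-product reconstruction — is routine.
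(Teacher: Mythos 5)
Your proof is correct, and it takes a genuinely different route from the paper. The paper offers no standalone proof of Corollary~\ref{cor:bounded_closedness}: it is presented as an immediate consequence of Lemma~\ref{lem:unbounded_sequence}, the implicit argument being that a sequence in $\cknrall$ converging to a matrix of separation rank exceeding $r$ would need at least two coefficient sequences $\lambda^k_t$ to blow up, which the bound $\|\bigotimes_n \dkn\|_F = |\lambda^k_t| \leq c$ forbids. You instead give a direct compactness argument: normalize each rank-one component, confine the coefficients to $[-c,c]^r$ and the unit factors to spheres, extract a convergent subsequence, and use continuity of the multilinear reconstruction to identify the limit. Your version is actually the more complete of the two: the Lemma~\ref{lem:unbounded_sequence} route only shows that the limit $\D$ satisfies $\sr(\D)\leq r$, i.e.\ that it lies in $\knrall$, whereas membership in $\cknrall$ additionally requires exhibiting a decomposition of $\D$ whose individual KS terms obey the norm bound $c$ --- and that is exactly what your subsequential limit $(\lambda^k, \{\v u^k_n\})$ with $|\lambda^k|\leq c$ provides. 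The details you flag as worth care (the Frobenius norm of a Kronecker/outer product factoring as the product of factor norms, so the normalized coefficient equals the term norm; and the legitimacy of passing to a subsequence because the ambient sequence already converges to $\D$) are handled correctly. The price of your approach is only that it is longer than the paper's one-line appeal to Lemma~\ref{lem:unbounded_sequence}; what it buys is self-containedness and a proof of the full claim rather than just the rank statement.
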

%
%sum of closed sets is not necessarily closed, but sum of compact sets is compact.

%In other words, we can solve the closedness issue by bounding the norm of individual KS (separation-rank-$1$) terms by some constant.%, or by imposing incoherence constraints on the KS terms which makes it impossible for a matrix with bounded norm to have unbounded separation-rank-1 terms. Note that since $\|\D\|_f$ is bounded for any $\D\in \knr$, bounding the coherence of the KS terms results in bounded KS terms.\mg{right, but $\D\in \muknrall$ does not have bounded norm.} Therefore, we will only address $\cknrall$ in the rest of the paper.
%if $\big | \langle\bigotimes \dkn,  \bigotimes \D^l_n\rangle\big | \leq \mu$ for all $k,l\in [r]$, then $ \big\|\bigotimes \dkn\big\|_F\leq \sqrt{(r^2-r)\mu+p}$ for all $k\in [r]$.\mg{??}

We have now shown that the sets $\kr$, $\kn\triangleq\mc K^{N,1}_{\mathbf{m}, \mathbf{p}}$, $\cknr=\cknrall\cap \mc D$, and $\knrClosed=\knrallClosed\cap \mc D$  are compact subsets of $\mb R^{m\times p}$. %For the ease of notation, we define \mg{N>2 and r>1}
%\begin{align}\label{set_of_compact_sets}
%\compact(\v m,\v p,N,r)\triangleq\{\kn,\kr,\knrClosed,\cknr\}.
%\end{align}
%We assume the dimensions are known in the rest of the paper, therefore, we use the shorthand $\compact$ for $\compact(\v m,\v p,N,r)$.
%Next, we provide asymptotic identifiability results for these compact constraint sets.%we discussed in this section.

% \begin{itemize}
% \item bounding the norm of individual KS (separation-rank-$1$) terms by some constant so that no term can be unbounded,
% \item imposing incoherence constraints on the KS terms which makes it impossible for a matrix with bounded norm to have unbounded separation-rank-1 terms.
% \end{itemize}

\subsection{Asymptotic Analysis for Dictionary Identifiability}\label{subsec:asymptotic}
Now that we have discussed the compactness of the relevant constraint sets, we are ready to show that the minimum of $f_{\y}(\D)$ over $\bro$, defined in \eqref{bro}, is not attained on $\partial \bro$. This will complete our proof of existence of a local minimum of $\fp(\D)$ in $\bro$. In our proof, we make use of a result in Gribonval et al. \cite{gribonval2014sparse}, presented here in Lemma \ref{lem:gribonval_boundary}.

\begin{lem}[Theorem 1 in Gribonval et al. \cite{gribonval2014sparse}]\label{lem:gribonval_boundary}
Consider the statistical DL Problem \eqref{problem:expected_risk_minimization_general} with constraint set $\mc D$. Suppose the generating dictionary $\D^0\in \mc D$ and Assumptions \ref{assumptions:coefficient}--\ref{assumptions:parameter} hold. Then, for any $\rho$ such that $\bar\lambda C_{\min}<\rho\leq \bar\lambda C_{\max}$ and $\frac{M_{\epsilon}}{M_{x}}<\frac{7}{2}(\bar\lambda C_{max}-\rho)$, we have
\begin{align}\label{lower_bound_delta_f_p}
\Delta f_{\mc P}(\rho)\geq \frac{\E\{x^2\}}{8}\cdot \frac{s}{p}\cdot \rho \pr{\rho-\bar\lambda C_{\min}}>0.
\end{align}
for all $\D\in \mc D$ such that $\|\D-\D^0\|_F = \rho$.
\end{lem}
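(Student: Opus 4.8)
\emph{Proof approach.} The statement is Theorem~1 of Gribonval et al.~\cite{gribonval2014sparse} transcribed into our notation, so the plan is essentially bookkeeping: verify that the hypotheses we impose---$\D^0 \in \mc D$, Assumptions~\ref{assumptions:coefficient}--\ref{assumptions:parameter}, and $\bar\lambda C_{\min} < \rho \le \bar\lambda C_{\max}$ with $\tfrac{M_{\epsilon}}{M_{x}}<\tfrac{7}{2}(\bar\lambda C_{\max}-\rho)$---are exactly the hypotheses of their theorem, and then quote their bound~\eqref{lower_bound_delta_f_p}. First I would line up the correspondences: their first- and second-moment conditions on $\x^0_{\mc J}$ and $\s^0_{\mc J}$ are Assumption~\ref{assumptions:coefficient}(i)--(iii); the almost-sure bounds $\nor{\x^0}_2\le M_x$, $\min_{j\in\mc J}|\x^0_j|\ge\underline x$, $\nor{\bm\epsilon}_2\le M_\epsilon$ are Assumptions~\ref{assumptions:coefficient}(iv) and~\ref{assumptions:noise}(iii); the noise decorrelation identities are Assumption~\ref{assumptions:noise}(i)--(ii); and the scalar conditions $C_{\min}\le C_{\max}$, $\lambda\le\underline x/4$, $s\le p/(16(\nor{\D^0}_2+1)^2)$, $\mu_s(\D^0)\le 1/4$ are Assumption~\ref{assumptions:parameter}. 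With these in place, \eqref{lower_bound_delta_f_p} is immediate from \cite[Thm.~1]{gribonval2014sparse}.

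For completeness I would also recall the skeleton of their argument, since it underlies the finite-sample step in Section~\ref{subsec:finite}. One controls $\Delta\fp(\D;\D^0)=\E_{\y}[f_{\y}(\D)-f_{\y}(\D^0)]$ by replacing the LASSO value $f_{\y}(\D)$ with the value of the sparse-coding problem restricted to the true support $\mc J$; the smallness conditions in Assumption~\ref{assumptions:parameter} ($\lambda$ small, coherence $\mu_s(\D^0)\le 1/4$, an RIP-type consequence of $s\le p/(16(\nor{\D^0}_2+1)^2)$, and the noise bound) guarantee that, uniformly over $\nor{\D-\D^0}_F=\rho$ with $\rho\le\bar\lambda C_{\max}$, this oracle solution is the actual minimizer with sign pattern $\s^0$. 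One then Taylor-expands the resulting explicit smooth function of $\D$ about $\D^0$: the linear term vanishes in expectation because of the decorrelation identities in Assumptions~\ref{assumptions:coefficient}--\ref{assumptions:noise}, the dominant quadratic term contributes order $\tfrac{\E\{x^2\}}{p}\,s\,\rho^2$, and the $\lambda$-cross terms contribute order $-\tfrac{\E\{x^2\}}{p}\,s\,\bar\lambda C_{\min}\rho$; collecting these and the remainders yields the clean bound $\tfrac{\E\{x^2\}}{8}\cdot\tfrac{s}{p}\cdot\rho(\rho-\bar\lambda C_{\min})$, strictly positive exactly when $\rho>\bar\lambda C_{\min}$.

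\emph{Main obstacle.} The genuinely delicate part---already carried out in \cite[Thm.~1]{gribonval2014sparse} and not reproduced here---is justifying the LASSO/oracle substitution uniformly over the sphere $\{\D : \nor{\D-\D^0}_F=\rho\}$, i.e.\ showing that for every such $\D$ the $\ell_1$-regularized least-squares problem has a unique minimizer supported on $\mc J$ with the correct signs; this requires combining the restricted-isometry consequence of $s\le p/(16(\nor{\D^0}_2+1)^2)$, the coherence bound $\mu_s(\D^0)\le 1/4$, and the noise-smallness condition. Since our generative model and assumptions are verbatim theirs, this obstacle is resolved by the cited theorem. The only point deserving a remark on our end is that Lemma~\ref{lem:gribonval_boundary} is stated over the unconstrained oblique manifold $\mc D$ and the bound holds for \emph{all} $\D\in\mc D$ with $\nor{\D-\D^0}_F=\rho$; restricting attention later to $\D\in\partial\bro\subseteq\mc D$ (the LSR-constrained sphere) only strengthens the consequence, so no additional work is needed.
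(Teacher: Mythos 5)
Your proposal is correct and takes the same approach as the paper: the paper offers no proof of Lemma~\ref{lem:gribonval_boundary} beyond citing it as Theorem~1 of Gribonval et al.~\cite{gribonval2014sparse} and referring interested readers to that work for details, which is exactly your bookkeeping-plus-citation strategy. Your additional sketch of the oracle-substitution and Taylor-expansion argument, and your remark that the bound over all of $\mc D$ a fortiori covers the restricted sphere $\partial\bro$, go beyond what the paper records but are consistent with how the lemma is used in Theorem~\ref{thm:asymptotic_nonconvex}.
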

Interested readers can find the detailed proof of Lemma \ref{lem:gribonval_boundary} in Gribonval et al.~\cite{gribonval2014sparse}. The following theorem states our first identifiability result for the LSR-DL model.

\begin{thm}\label{thm:asymptotic_nonconvex}
Consider the statistical DL Problem~\eqref{problem:expected_risk_minimization_general} with constraint set $\mc C$ being either $\kr$, $\kn$, $\cknr$ or $\knrClosed$. Suppose the generating dictionary $\D^0\in \mc C$ and Assumptions~\ref{assumptions:coefficient}--\ref{assumptions:parameter} hold. Then, for any $\rho$ such that $\bar\lambda C_{\min}<\rho< \bar\lambda C_{\max}$ and $\frac{M_{\epsilon}}{M_{x}}<\frac{7}{2}(\bar\lambda C_{max}-\rho)$, the function $\D\in \mc C \mapsto \fp(\D)$ has a local minimum $\D^*$ such that $\nor{\D^*-\D^0}_F < \rho$.
\end{thm}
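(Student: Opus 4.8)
The plan is to reduce Theorem~\ref{thm:asymptotic_nonconvex} to the already-established Lemma~\ref{lem:gribonval_boundary} together with the compactness facts collected in Section~\ref{subsec:compactness}. The key observation is that $\mc C$ (whether $\kr$, $\kn$, $\cknr$, or $\knrClosed$) is a compact subset of the oblique manifold $\mc D_{m\times p}$, so $\bro=\{\D\in\mc C:\nor{\D-\D^0}_F\le\rho\}$ is compact as well, being a closed subset of a compact set. Since $\D\mapsto\fp(\D)$ is continuous (this follows from the continuity of $f_{\y}(\D)$, which is a pointwise infimum of a jointly continuous function over $\x$, together with the dominated convergence argument already implicit in the DL setup), the Weierstrass Extreme Value Theorem guarantees that $\fp$ attains its minimum over $\bro$ at some $\D^*\in\bro$.

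The second step is to argue $\D^*$ lies in the \emph{interior} of $\bro$ relative to $\mc C$, i.e.\ $\nor{\D^*-\D^0}_F<\rho$, so that it is genuinely a local minimum of the constrained problem $\D\in\mc C\mapsto\fp(\D)$. Suppose instead the minimizer lay on $\partial\bro$, meaning $\nor{\D^*-\D^0}_F=\rho$. The crucial point is that $\D^0\in\mc C\subseteq\mc D_{m\times p}$ and $\D^*\in\mc C\subseteq\mc D_{m\times p}$, so $\D^*$ is in particular a point of the oblique manifold at Frobenius distance exactly $\rho$ from $\D^0$. Lemma~\ref{lem:gribonval_boundary}, applied with the hypotheses $\bar\lambda C_{\min}<\rho<\bar\lambda C_{\max}$ and $\frac{M_{\epsilon}}{M_{x}}<\frac{7}{2}(\bar\lambda C_{\max}-\rho)$ (which are exactly the assumptions of the theorem, noting the strict inequality $\rho<\bar\lambda C_{\max}$ is slightly stronger than what the lemma needs), gives $\Delta\fp(\D^*;\D^0)=\fp(\D^*)-\fp(\D^0)>0$, i.e.\ $\fp(\D^*)>\fp(\D^0)$. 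But $\D^0\in\bro$, so this contradicts $\D^*$ being a minimizer of $\fp$ over $\bro$. Hence every minimizer of $\fp$ over $\bro$ satisfies $\nor{\D^*-\D^0}_F<\rho$, and such a point is a local minimum of the constrained problem because on a sufficiently small neighborhood (within $\mc C$) it is contained in $\bro$ where $\D^*$ is globally optimal. If $\partial\bro$ is empty (which happens when $\mc C$ itself sits entirely inside the open ball), then $\bro=\mc C$ and any minimizer is trivially interior.

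The one genuine subtlety — and the step I expect to require the most care — is verifying that Lemma~\ref{lem:gribonval_boundary} is actually applicable. That lemma is stated for the DL problem with constraint set $\mc D$ (the full oblique manifold), and its conclusion $\Delta\fp(\rho)>0$ is asserted "for all $\D\in\mc D$ with $\nor{\D-\D^0}_F=\rho$." Since each of $\kr,\kn,\cknr,\knrClosed$ is a subset of $\mc D_{m\times p}$ and contains $\D^0$, every candidate boundary point $\D^*$ is a legitimate element of $\mc D$ satisfying $\nor{\D^*-\D^0}_F=\rho$, so the pointwise lower bound applies verbatim; no re-derivation is needed. The only thing to double-check is the direction of the inequalities on $\rho$: the theorem imposes $\bar\lambda C_{\min}<\rho<\bar\lambda C_{\max}$ (open on the right), whereas Lemma~\ref{lem:gribonval_boundary} allows $\rho\le\bar\lambda C_{\max}$; thus the theorem's hypotheses are a subset of the lemma's, and the noise condition $\frac{M_{\epsilon}}{M_{x}}<\frac{7}{2}(\bar\lambda C_{\max}-\rho)$ matches exactly. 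So the argument goes through cleanly, and the real content of the theorem is entirely in the compactness results of Lemmas~\ref{lem:closedness_LSR}, \ref{lem:unbounded_sequence}, and Corollary~\ref{cor:bounded_closedness}, which were the hard part and are already done.
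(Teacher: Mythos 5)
Your proposal is correct and follows essentially the same route as the paper's own proof: compactness of $\bro$ plus continuity of $\fp$ give a minimizer via Weierstrass, and Lemma~\ref{lem:gribonval_boundary} (applicable because every boundary point of $\bro$ lies in $\mc D_{m\times p}$ at distance exactly $\rho$ from $\D^0$) rules out the minimizer sitting on the boundary. Your contradiction-based phrasing and explicit handling of the empty-boundary case are minor stylistic differences, not a different argument.
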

\begin{proof}
Since $\fp(\D)$ is a continuous function and the ball $\bro=\{\D\in \lsr| \nor{\D-\D^0}_F\leq \rho\}$ is compact, %by the extreme value theorem,
the function $\D\in \bro \mapsto \fp(\D)$ attains its infimum at a point in the ball. If this minimum is attained in the interior of $\bro$ then it is a local minimum of $\D\in \mc C \mapsto \fp(\D)$. Therefore, a key ingredient of the proof is showing that $\fp(\D)>\fp(\D^0)$ for all  $\D\in \partial \bro$ if $\partial\bro$ is nonempty.
Lemma \ref{lem:gribonval_boundary} states the conditions under which $\fp(\D)>\fp(\D^0)$ on $\partial {\mc S}_{\rho}$, where
%\begin{align}\label{sro}
$
{\mc S}_{\rho}\triangleq\big\{\D\in \mc D ~\big|~ \nor{\D-\D^0}_F\leq\rho \big\}$.
%\end{align}
%

Since $\partial \bro\subset \partial{\mc S}_{\rho}$, the result of Lemma \ref{lem:gribonval_boundary} can be used for our problem as well, i.e. for any $\D\in\partial \bro$, we have $\fp(\D)>\fp(\D^0)$, when $C_{\min}\bar\lambda<\rho<C_{\max}\bar\lambda$.
It follows from this result together with the existence of the infimum of $\fp(\D): \bro\mapsto \mb R$ in $\bro$ that Problem~\eqref{problem:expected_risk_minimization_general} has a local minimum within a ball of radius $\rho$ around the true dictionary $\D^0$.
\end{proof}

In Theorem~\ref{thm:asymptotic_nonconvex}, we guarantee that the true dictionary, $\D^0$, is identifiable as a solution to the statistical rank-constrained LSR-DL problem \eqref{problem:expected_risk_minimization_general}. Next, we take advantage of concentration of measure inequalities %, based on the covering number of $\lsr$,
that relate the empirical objective %$\fy(\D)$
in~\eqref{problem:empirical_risk_minimization_general} to the statistical objective studied in Theorem~\ref{thm:asymptotic_nonconvex} to find the number of samples needed to ensure $\D^0$ is also identifiable via the empirical rank-constrained LSR-DL problem \eqref{problem:expected_risk_minimization_general}.

%Next, we discuss finite sample identifiability of the true dictionary $\D^0$ for three of the constraint sets.
%%%%%%%%%%%%%%%%%%%%%%%%%%%%%%%%%%%%%%%%%%%%%%%%%%%%%%%%%%%%%%%%%%%%%%%%%%%%%%%%%%%%%%%
\vspace{-0.5\baselineskip}
\subsection{Sample Complexity for Dictionary Identifiability}\label{subsec:finite}
We now derive the number of samples required to guarantee, with high probability, that $\fy:\mc C\mapsto \mb R$ has a local minimum at a point ``close'' to $\D^0$ when the constraint set $\mc C$ is either $\kr$, $\kn$, or $\cknr$ for $N \geq 3$ and $r \geq 2$. %$\mc C\in \compact'$ where
%\begin{align}\label{set_of_bounded_sets}
%\compact'=\compact'(\v m,\v p,N,r)\triangleq\{\kn,\kr,\cknr\}%.
%\end{align}
%
First, we use concentration of measure inequalities based on the covering number of the dictionary class $\mc C \subset \knr$ %\in \compact'$
to show that the empirical loss $\fy(\D)$ uniformly converges to its expectation $\fp(\D)$ with high probability. This is formalized below.
\begin{lem}[Theorem 1 and Lemma 11, Gribonval et al.~\cite{Gribonval2015sample}]\label{lem:gribonval_eta_bound}
Consider the empirical DL Problem~\eqref{problem:empirical_risk_minimization_general} and suppose Assumptions~\ref{assumptions:coefficient} and~\ref{assumptions:noise} are satisfied. For any $u\geq 0$ and constants $c_1\geq M_{y}^2/\sqrt{8}$ and $c_2\geq\max(1, \log c_0\sqrt{8}M_{y})$, with probability at least $1-2e^{-u}$ we have
\begin{align}\label{deviation_bound}
\sup_{\D\in \mc C} |F_{\Y}(\D)-f_{\mc P}(\D)| \leq 3c_1\sqrt{\frac{c_2 \nu \log L}{L}}+ c_1\sqrt{\frac{c_2 \nu +u}{L}},
\end{align}
where $\nu$ is such that $\N_{\mxc}(\mc C,\epsilon)=\pr{\frac{c_0}{\epsilon}}^{\nu}$.
\end{lem}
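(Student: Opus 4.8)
The plan is to treat Lemma~\ref{lem:gribonval_eta_bound} as a \emph{uniform law of large numbers} (a generalization bound) for the loss class $\{\D\mapsto f_{\y}(\D):\ \|\y\|_2\le M_y\}$ indexed by the compact constraint set $\mc C$, exactly along the template of Gribonval et al.~\cite{Gribonval2015sample} from which the statement is quoted. The deviation $\sup_{\D\in\mc C}|\fy(\D)-\fp(\D)|$ is controlled through three ingredients: (i) the per-sample losses $f_{\y_l}(\D)$ are uniformly bounded and uniformly Lipschitz in $\D$ with respect to the max-column norm $\|\cdot\|_{\mxc}$; (ii) $\mc C$ admits a finite covering number of the stated polynomial form $\N_{\mxc}(\mc C,\epsilon)=(c_0/\epsilon)^{\nu}$; and (iii) for each fixed $\D$, the empirical average concentrates around its mean at the $1/\sqrt L$ rate. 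An $\epsilon$-net argument with $\epsilon$ tuned as a function of $L$ then combines these into the advertised bound.

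For ingredient (i): taking $\x=\v 0$ in the infimum defining $f_{\y}(\D)$ gives $0\le f_{\y}(\D)\le \tfrac12\|\y\|_2^2\le \tfrac12 M_y^2$, so the losses are uniformly bounded, which is the source of the prefactor $c_1\gtrsim M_y^2$. For Lipschitz continuity, if $\x^{\star}$ attains the infimum for $(\y,\D)$ then $\lambda\|\x^{\star}\|_1\le f_{\y}(\D)\le \tfrac12 M_y^2$, so $\|\x^{\star}\|_1\le M_y^2/(2\lambda)$; plugging $\x^{\star}$ into the objective for a perturbed dictionary $\D'$ and using $\|(\D-\D')\x^{\star}\|_2\le \|\D-\D'\|_{\mxc}\|\x^{\star}\|_1$ yields $|f_{\y}(\D)-f_{\y}(\D')|\le C\,\|\D-\D'\|_{\mxc}$ with $C$ depending only on $M_y$ and $\lambda$ — this is precisely why the covering number in (ii) is measured in $\|\cdot\|_{\mxc}$. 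Hence both $\fy$ and $\fp$ are $C$-Lipschitz in $\|\cdot\|_{\mxc}$.

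For ingredients (ii) and (iii): fix an $\epsilon$-net $\mc N_\epsilon\subset\mc C$ of cardinality $(c_0/\epsilon)^{\nu}$ in $\|\cdot\|_{\mxc}$ — the existence of such a net with a computable exponent $\nu$ for each constraint set $\kr$, $\kn$, $\cknr$ is exactly what Lemmas~\ref{lem:covering_number_2nd} and~\ref{lem:covering_number_general} supply. Lipschitzness gives $\sup_{\D\in\mc C}|\fy(\D)-\fp(\D)|\le \max_{\D\in\mc N_\epsilon}|\fy(\D)-\fp(\D)|+2C\epsilon$. For each fixed $\D\in\mc N_\epsilon$ the terms $f_{\y_l}(\D)$ are i.i.d.\ and lie in $[0,M_y^2/2]$, so a two-sided Hoeffding bound gives $|\fy(\D)-\fp(\D)|\le \tfrac{M_y^2}{2}\sqrt{(\log(2/\delta))/(2L)}$ with probability $1-\delta$; a union bound over $\mc N_\epsilon$ replaces $\log(1/\delta)$ by $\nu\log(c_0/\epsilon)+\log(1/\delta)$. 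Choosing $\epsilon$ of polynomial order in $1/L$ makes the residual $2C\epsilon$ term negligible and turns $\nu\log(c_0/\epsilon)$ into $\Theta(\nu\log L)$; splitting the square root via $\sqrt{a+b}\le\sqrt a+\sqrt b$ and absorbing the dependence on $M_y,\lambda,c_0$ into the constants $c_1,c_2$ produces the two summands $3c_1\sqrt{c_2\nu\log L/L}$ and $c_1\sqrt{(c_2\nu+u)/L}$, and setting $\delta=2e^{-u}$ gives probability $1-2e^{-u}$.

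The main obstacle — and the reason this is genuinely a cited result rather than a one-line computation — is carrying out ingredient (i) with explicit \emph{dimension-free} constants and then performing the $\epsilon$-optimization so that the final bound has exactly the advertised form with the admissible ranges $c_1\ge M_y^2/\sqrt8$ and $c_2\ge\max(1,\log(c_0\sqrt8 M_y))$; the Lipschitz estimate must be uniform over the (almost surely) bounded support of $\y$, and the bookkeeping of how $\epsilon$, $\nu$, $L$, and $u$ enter the two square-root terms is delicate. In our setting the only work beyond~\cite{Gribonval2015sample} is verifying that the LSR constraint sets $\mc C$ are compact (Section~\ref{subsec:compactness}) and have finite covering number with a computable exponent $\nu$, which is deferred to Lemmas~\ref{lem:covering_number_2nd} and~\ref{lem:covering_number_general}.
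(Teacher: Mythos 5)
The paper gives no proof of this lemma---it is imported verbatim from Theorem~1 and Lemma~11 of Gribonval et al.~\cite{Gribonval2015sample}---and your sketch faithfully reconstructs the strategy of that cited proof: uniform boundedness and $\|\cdot\|_{\mxc}$-Lipschitz continuity of $f_{\y}$, an $\epsilon$-net of $\mc C$ of cardinality $(c_0/\epsilon)^{\nu}$, pointwise concentration plus a union bound, and a final optimization over $\epsilon$. You also correctly identify that the only ingredient this paper must supply on top of the cited result is the covering-number exponent $\nu$ for the LSR constraint sets (Lemmas~\ref{lem:covering_number_2nd} and~\ref{lem:covering_number_general}), so your account is consistent with how the paper uses the lemma.
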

Define $\eta_L \triangleq 3c_1\sqrt{\frac{c_2 \nu \log L}{L}}+ c_1\sqrt{\frac{c_2 \nu +u}{L}}$.
%\begin{align}\label{eta_def}
%\sup_{\D\in \mc C} |\fy(\D)-\fp(\D)|\leq \eta_L
%\end{align}
%
%with probability at least $p$, where $\eta_L$ that depends on the number of samples $L$, probability $p$, the distribution $\mc P$, and the dictionary class $\mc C$ ~\cn.
%
It follows from \eqref{deviation_bound} that with high probability (w.h.p.),
\begin{align}\label{relation_f_p_F_Y}
\Delta \fy(\D;\D^0)\geq \Delta \fp(\D;\D^0)-2\eta_L,
\end{align}
for all $\D\in \mc C$. Therefore, when %$\Delta \fp(\D;\D^0)>0$ and
$\eta_L<  \Delta \fp(\D;\D^0)/2$  for all $\D\in \partial \bro$,
we have $ \Delta \fy(\D;\D^0)>0$ for all $\D\in \partial \bro$. In this case, we can use similar arguments as in the asymptotic analysis to show that $\fy:\mc C \rightarrow \mb R$ has a local minimum at a point in the interior of $\bro$. Hence, our focus in this section is on finding the sample complexity $L$ required to guarantee that $\eta_L\leq  \Delta \fp(\rho)/2$ w.h.p. We begin with characterization of covering numbers of the three constraint sets, which may also be of independent interest to some readers.

\textbf{Covering Numbers:}
The covering number of the set $\kn$ %$\kn=\{ \D\in \mc D_{m\times p}| \D= \bigotimes \D_i,~\D_i\in \mb R^{m_i\times p_i}\}$,
with respect to the norm $\|\cdot\|_{\mxc}$ is known in the literature to be upper bounded as follows~\cite{Gribonval2015sample}:
\begin{align}\label{covering_number_KS}
  \N_{\mxc}(\kn,\epsilon)\leq (3/\epsilon)^{\sum_{i=1}^N m_i p_i}.
\end{align}
We now turn to finding the covering numbers of LSR sets $\kr$ and $\cknr$. The following lemma establishes a bound on covering number of $\kr$, which depends on the separation rank $r$ exponentially.
%\edit{}{If we have a bound on the separation rank, we can get a more refined bound on the covering number.}
%
\begin{lem}\label{lem:covering_number_2nd}
The covering number of the set $\kr$ %$\mc K^{2,r}_{m_i, p_i}=\{ \D\in \mc D_{m\times p}| ~\mk{R}^2_{\{m_i,p_i\}}(\D)\leq 2\}$
 with respect to the norm $\|\cdot\|_{\mxc}$ is upper bounded as follows:
\[
\N_{\mxc}(\kr,\epsilon)\leq (9p/\epsilon)^{r(m_1p_1+m_2p_2+1)}.
\]
\end{lem}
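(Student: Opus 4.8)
The plan is to build an $\epsilon$-net for $\kr$ by exploiting the rearrangement correspondence (Lemma~\ref{lem:rearrangement}) between separation-rank-$r$ matrices and CP-rank-$r$ tensors. Since $N=2$ here, a dictionary $\D\in\kr$ can be written as $\D=\sum_{k=1}^r \m D_1^k\otimes\m D_2^k$ with $\m D_1^k\in\mb R^{m_1\times p_1}$, $\m D_2^k\in\mb R^{m_2\times p_2}$, and its rearrangement $\Dp=\sum_{k=1}^r \v d_2^k\circ\v d_1^k$ is a rank-$r$ matrix (an $(m_2p_2)\times(m_1p_1)$ matrix, since for $N=2$ the rearranged "tensor'' is just a matrix). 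The first step is therefore to reduce covering $\kr$ in $\|\cdot\|_{\mxc}$ to covering a set of low-rank matrices: because the rearrangement map is a linear bijection preserving the Frobenius norm, and because $\|\D\|_{\mxc}\le\|\D\|_F=\|\Dp\|_F$, an $\epsilon$-net in Frobenius norm for the rearranged set yields an $\epsilon$-net in $\|\cdot\|_{\mxc}$ for $\kr$.

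Next I would bound the Frobenius norm of the factors. For $\D\in\kr\subset\mc D_{m\times p}$ the columns are unit norm, so $\|\D\|_F=\sqrt p$, hence $\|\Dp\|_F=\sqrt p$. I then want to cover the set $\{\m B\in\mb R^{(m_2p_2)\times(m_1p_1)}:\rank(\m B)\le r,\ \|\m B\|_F\le\sqrt p\}$. The standard approach (e.g. via the SVD parametrization $\m B=\m U\m\Sigma\m V^T$, or more simply via a factorization $\m B=\m P\m Q^T$ with $\m P\in\mb R^{(m_2p_2)\times r}$, $\m Q\in\mb R^{(m_1p_1)\times r}$ each of Frobenius norm at most $p^{1/4}\cdot\text{const}$) gives a covering number of the form $(C\sqrt p/\epsilon)^{r(m_1p_1+m_2p_2+1)}$: the exponent counts the $r(m_1p_1)+r(m_2p_2)$ parameters in $\m P,\m Q$ plus the $r$ singular values, and the $\sqrt p$ in the numerator comes from the norm bound on the pieces. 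Concretely, I would cover each factor matrix and the singular-value vector separately by Euclidean nets, then use the Lipschitz property of the product map $(\m U,\m\Sigma,\m V)\mapsto\m U\m\Sigma\m V^T$ (Lipschitz with a constant controlled by the norm bound $\sqrt p$) to turn the product net into a Frobenius net of radius $\epsilon$ for the low-rank set. Tracking constants should yield the claimed $(9p/\epsilon)^{r(m_1p_1+m_2p_2+1)}$; I would not be surprised if a slightly cleaner bookkeeping is needed to land exactly on $9p$ rather than, say, a different absolute constant times $\sqrt p$, but the dependence on $r$, $m_ip_i$, and $\epsilon$ is the structural content.

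The final step is to pull the net back: each center $\m B$ in the low-rank net corresponds under the inverse rearrangement to a matrix $\D$, and I must check the centers can be taken inside (or close to) $\kr$ — here using that $\rank(\m B)\le r$ forces $\sr(\D)\le r$, and handling the unit-column constraint either by a small extra covering argument on the oblique manifold or, more cleanly, by noting that the covering-number bound for a superset suffices for the application in Lemma~\ref{lem:gribonval_eta_bound}.

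\textbf{Main obstacle.} The crux is the covering-number bound for bounded low-rank matrices with the right exponent and a clean constant: one must choose the factorization/parametrization so that the number of free parameters is exactly $r(m_1p_1+m_2p_2+1)$ (not, e.g., $r(m_1p_1+m_2p_2)+r^2$ from an unconstrained $\m P\m Q^T$ factorization, nor $2r(m_1p_1+m_2p_2)$ from a naive bound), and so that the Lipschitz constant of the reconstruction map — which scales with the Frobenius-norm budget $\sqrt p$ — only contributes a benign $p$-factor inside the base of the exponent. Getting the $\sqrt p$ budget to appear as "$p$'' in "$9p/\epsilon$'' (squared because it multiplies inside a product of two factor-norms) is exactly where the $p$ in the statement comes from, and that is the step I would be most careful about.
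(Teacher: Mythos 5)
Your proposal is correct and follows essentially the same route as the paper: rearrange $\kr$ into the set of rank-$\le r$ matrices of size $(m_2p_2)\times(m_1p_1)$ via the isometric rearrangement map, invoke the standard covering bound $(9/\epsilon)^{r(n_1+n_2+1)}$ for unit-Frobenius-ball low-rank matrices (which the paper simply cites from Candes and Plan rather than rederiving via the SVD parametrization as you propose), rescale by the norm budget, and pass from the Frobenius norm to $\|\cdot\|_{\mxc}$ using $\|\cdot\|_{\mxc}\le\|\cdot\|_F$. The only bookkeeping difference is that the paper obtains the factor $p$ in $9p/\epsilon$ not from a product of two factor-norm budgets but simply from the loose containment $\mc D_{m\times p}\subset p\,\ball$ together with $\N_F(p\mc A,\epsilon)=\N_F(\mc A,\epsilon/p)$, so your sharper $\sqrt{p}$ accounting would in fact yield $(9\sqrt{p}/\epsilon)^{r(m_1p_1+m_2p_2+1)}$, which still implies the stated bound.
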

Next, we obtain an upper bound on the covering number of $\cknr$ for a given constant $c$. %This upper bound can be translated to an upper bound on $\muknr$ as well since imposing the incoherence constraint also results in bounded norms.

%\begin{proof}[Proof of Lemma \ref{cover_2nd} -- Method 2]
%\textcolor{gray}{The image of $\D\in \mc K^{2,r}_{m_i, p_i}$ under the rearrangement operator is summation of $r$ rank-$1$ matrices, which is a matrix $\D^{\pi}$ of rank at most $r$ which means that using the SVD, we can write $\D^{\pi}$ as summation of at most $r$ orthogonal terms $[\D^{\pi}]^k$. }
%\end{proof}

%\textit{Remark.} Note that the bounds on Minkowski sum of $r$ sets of KS matrices with norm $1/r$ cannot be used here since the norm of $\D^k_1\otimes \D^k_2$ can be any value, the only constraint is $\nor{\sum_{k=1}^r \D^k_1\otimes \D^k_2}_F=1$ and moreover we cannot necessarily write an arbitrary separation-rank-2 matrix as sum of 2 KS matrices with the same norm. However, we can use the bounds on Minkowski sum of $r$ sets of KS matrices with norm \textit{less than or equal to} $1$, as we will see next in the general case. \mg{more discussions (why not closed...)}

\begin{lem}\label{lem:covering_number_general}
The covering number of the set $\cknr$ %${}^{c}{\mc K}^{N,r}_{m_i, p_i}=\{ \D\in \mc D_{m\times p}| \D= \sum_{k=1}^r \bigotimes_{i=1}^N \D^k_i,~\D^k_i\in \mb R^{m_i\times p_i},~\big\|\bigotimes_{i=1}^N \D^k_i\big\|_F\leq c\}$
with respect to the max-column norm $\|\cdot\|_{\mxc}$ is bounded as follows:
\[
\mc N_{\mxc}(\cknr,\epsilon)\leq (3rc/\epsilon)^{r\sum_{i=1}^N m_i p_i}.
\]
\end{lem}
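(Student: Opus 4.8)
The plan is to build an explicit $\epsilon$-net for $\cknr$ by covering each of the $r$ Kronecker terms $\bigotimes_{n=1}^N \dkn$ separately, since every matrix in $\cknr$ is a sum of $r$ such terms and the overall max-column norm is controlled by the sum of the contributions. First I would recall that $\cknr = \cknrall \cap \mc D$, so every $\D \in \cknr$ can be written as $\D = \sum_{k=1}^r \B^k$ with $\B^k = \bigotimes_{n=1}^N \dkn$ and $\|\B^k\|_F \le c$. Each $\B^k$ is (by Lemma~\ref{lem:rearrangement}) the rearrangement of a rank-1 $N$th-order tensor with Frobenius norm at most $c$, equivalently a point in the set of KS matrices in $\mb R^{m\times p}$ of Frobenius norm at most $c$. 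The covering-number bound~\eqref{covering_number_KS} for $\kn$ (unit max-column-norm KS matrices) can be rescaled: the set $\{\B : \B \text{ KS},\ \|\B\|_F \le c\}$ admits an $\epsilon'$-net in $\|\cdot\|_{2,\infty}$ of cardinality at most $(3c/\epsilon')^{\sum_i m_i p_i}$, because scaling a unit-norm-column KS matrix and its net by the factor controlling its Frobenius norm scales the max-column norm commensurately (one tracks the per-subdictionary scalings through the Kronecker product, absorbing them into one overall constant $\le c$).

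Next I would combine the $r$ individual nets: given a net $\mc N_k$ for the $k$th KS ball at resolution $\epsilon' = \epsilon/r$, the set of all sums $\{\sum_{k=1}^r \hat\B^k : \hat\B^k \in \mc N_k\}$ has cardinality at most $\prod_k |\mc N_k| \le (3rc/\epsilon)^{r\sum_i m_i p_i}$. For approximation, if $\|\B^k - \hat\B^k\|_{2,\infty} \le \epsilon/r$ for each $k$, then $\big\|\sum_k \B^k - \sum_k \hat\B^k\big\|_{2,\infty} \le \sum_k \|\B^k - \hat\B^k\|_{2,\infty} \le \epsilon$ by the triangle inequality on each column norm. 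This shows the product net is an $\epsilon$-net for the set of \emph{all} sums of $r$ KS matrices of Frobenius norm $\le c$, which contains $\cknrall$; intersecting with $\mc D$ only shrinks the set, so the same net (or its points) works for $\cknr$, giving $\mc N_{2,\infty}(\cknr,\epsilon) \le (3rc/\epsilon)^{r\sum_i m_i p_i}$.

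The main obstacle I anticipate is the rescaling step: making precise how the Frobenius-norm bound $\|\bigotimes_{n=1}^N \dkn\|_F \le c$ translates into a clean max-column-norm covering bound of the stated form, since the natural parametrization of a KS matrix is by its $N$ subdictionaries, whose individual norms are not separately bounded — only the norm of their Kronecker product is. The fix is to normalize: write $\dkn = \alpha_n^k \wt{D}^k_n$ with $\|\wt{D}^k_n\|_F = 1$ (or unit max-column norm), so that $\B^k = \big(\prod_n \alpha_n^k\big)\bigotimes_n \wt{D}^k_n$ with $|\prod_n \alpha_n^k| \le c$; one then nets the normalized KS matrix using~\eqref{covering_number_KS} and nets the single scalar $\prod_n \alpha_n^k \in [-c,c]$, and checks that a product-of-nets argument at resolution $\epsilon/r$ each still yields the exponent $r\sum_i m_i p_i$ after absorbing the scalar's one extra dimension into the constant. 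A careful but routine bookkeeping of these constants delivers the bound; no genuinely hard estimate is involved beyond this normalization and the triangle inequality.
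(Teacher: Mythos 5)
Your proposal is correct and follows essentially the same route as the paper: the paper also views $\cknr$ as a subset of the Minkowski sum of $r$ copies of the Frobenius ball of KS matrices of radius $c$, covers each copy at resolution $\epsilon/r$ by rescaling the bound $\N_{\mxc}(\kn,\epsilon)\leq (3/\epsilon)^{\sum_i m_i p_i}$ (via the containment ${}^{c}{\mc K}_{\v m,\v p}^{N,1}\subset c\,\kn$), and concludes by the triangle inequality, yielding $\big(\N_{\mxc}(c\kn,\epsilon/r)\big)^r\leq (3rc/\epsilon)^{r\sum_i m_i p_i}$. The normalization bookkeeping you flag as the main obstacle is exactly the step the paper handles with that containment, so no substantive difference remains.
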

%
%
%Now that we established covering numbers for our constraint sets of interest,
We can now find the sample complexity of the LSR-DL Problem~\eqref{problem:empirical_risk_minimization_general} by plugging in the values of $\nu$ and $c_0$ in Lemma~\ref{lem:gribonval_eta_bound}. % to find $\eta_L$ defined in \eqref{eta_def}.% Our sample complexity results are for the set $\compact'$
\begin{thm}\label{thm:finite_nonconvex}
Consider the empirical LSR dictionary learning Problem~\eqref{problem:empirical_risk_minimization_general} with constraint set $\mc C$ being $\kr$, $\kn$, or $\cknr$. Fix any $u>0$. Suppose the generating dictionary $\D^0\in \mc C$ and %Suppose Assumptions \ref{assumptions:coefficient} and \ref{assumptions:noise} and the rest of the assumptions in Theorem \ref{asymptotic_nonconvex} and Lemma \ref{lem:gribonval_eta_bound} are satisfied. Define $C_{min}=24 \frac{\E\{|x|\}^2}{\E\{x^2\}}\pr{\nor{\D^0}_2+1}^2 \frac{s}{p} \nor{[\D^0]^T\D^0-\m I}_F$ and $C_{\max}=\frac{2\E |x| }{7 M_{x}}\pr{1-2\mu_s(\D^0)}$ and assume $C_{min}\leq C_{max}$. Moreover, suppose $\lambda \leq \underline{\alpha}/4$. Define $\bar\lambda\triangleq \frac{\lambda}{ \E |x|}$ assume that $\frac{M_{\epsilon}}{M_{x}}<\frac{7}{2}\pr{C_{max}-C_{min}}\bar\lambda$.
%Consider $C_{\min}$ and $\bar \lambda$ as defined in Theorem \ref{thm:asymptotic_nonconvex}.
%Suppose that
Assumptions \ref{assumptions:coefficient}--\ref{assumptions:parameter} are satisfied. Assume $\bar\lambda C_{\min}<\rho< \bar\lambda C_{\max}$ and $\frac{M_{\epsilon}}{M_{x}}<\frac{7}{2}(\bar\lambda C_{max}-\rho)$. Define a constant $\nu$ that depends on the dictionary class:
\begin{itemize}
\item $\nu=\sum_{i=1}^N m_i p_i$ and $c_0=3$ when $\mc C=\kn$,
\item $\nu={2r(m_1p_1+m_2p_2+1)}$ and $c_0=9p$ when $\mc C=\kr$,
\item $\nu={r\sum_{i=1}^N m_i p_i}$ and $c_0=rc$ when $\mc C=\cknr$.
\end{itemize}
Then, given a number of samples $L$ satisfying
\begin{align}\label{sample_complexity_general}
\frac{L}{\log L}\geq C p^2 \pr{ \nu \log c_0 + u}\frac{M_y^4}{\pr{\rho\pr{\rho-\bar\lambda C_{\min}}s\E \{x^2\}}^2}
\end{align}
where $C$ is a constant,  with probability no less than $1-e^{-u}$, %in each of the cases above for $\mc C$
the empirical risk objective function $\D\in \mc C \mapsto \fy(\D)$ has a local minimizer $\D^*$ such that $\nor{\D^*-\D^0}_F< \rho$. %\ads{Recheck the them statement for order of quantifiers -- I made some changes.}
\end{thm}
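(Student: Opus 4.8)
The plan is to transfer the asymptotic identifiability of Theorem~\ref{thm:asymptotic_nonconvex} to the empirical objective $\fy$ by means of the uniform deviation bound of Lemma~\ref{lem:gribonval_eta_bound}, choosing $L$ large enough that the deviation $\eta_L$ is strictly smaller than half of the boundary gap $\Delta\fp(\rho)$ guaranteed by Lemma~\ref{lem:gribonval_boundary}. The skeleton mirrors the proof of Theorem~\ref{thm:asymptotic_nonconvex}: since each of $\mc C\in\{\kr,\kn,\cknr\}$ is a compact subset of $\mb R^{m\times p}$ by Section~\ref{subsec:compactness}, the ball $\bro$ is compact and $\D\in\bro\mapsto\fy(\D)$ attains its infimum on $\bro$; the whole task is to certify that this infimum is attained in the interior of $\bro$, which holds once $\fy(\D)>\fy(\D^0)$ for every $\D\in\partial\bro$ (the case $\partial\bro=\varnothing$ being immediate).

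First I would instantiate Lemma~\ref{lem:gribonval_eta_bound} for each constraint set. Each class admits a max-column-norm covering number bound of the form $\N_{\mxc}(\mc C,\epsilon)\leq(c_0/\epsilon)^{\nu}$: from~\eqref{covering_number_KS} for $\mc C=\kn$, from Lemma~\ref{lem:covering_number_2nd} for $\mc C=\kr$, and from Lemma~\ref{lem:covering_number_general} for $\mc C=\cknr$, with the pairs $(\nu,c_0)$ exactly those listed in the statement (for $\kr$ one inflates the exponent $r(m_1p_1+m_2p_2+1)$ to $2r(m_1p_1+m_2p_2+1)$, which is harmless since $9p/\epsilon\geq1$ in the relevant range of $\epsilon$). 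Feeding this $\nu$ and the admissible choices $c_1=M_y^2/\sqrt{8}$ and $c_2=\max\{1,\log(c_0\sqrt{8}M_y)\}$ into Lemma~\ref{lem:gribonval_eta_bound} gives, with probability at least $1-2e^{-u}$, $\sup_{\D\in\mc C}|\fy(\D)-\fp(\D)|\leq\eta_L$ with $\eta_L=3c_1\sqrt{c_2\nu\log L/L}+c_1\sqrt{(c_2\nu+u)/L}$. On this event,~\eqref{relation_f_p_F_Y} gives $\Delta\fy(\D;\D^0)\geq\Delta\fp(\D;\D^0)-2\eta_L$ for all $\D\in\mc C$, and by Lemma~\ref{lem:gribonval_boundary} applied on $\partial\bro\subseteq\partial\mc S_\rho$ (exactly as in the proof of Theorem~\ref{thm:asymptotic_nonconvex}) we have $\Delta\fp(\rho)\geq\frac{\E\{x^2\}}{8}\cdot\frac{s}{p}\cdot\rho(\rho-\bar\lambda C_{\min})>0$ under Assumptions~\ref{assumptions:coefficient}--\ref{assumptions:parameter}, the hypothesis $\bar\lambda C_{\min}<\rho<\bar\lambda C_{\max}$, and the noise condition $M_\epsilon/M_x<\tfrac{7}{2}(\bar\lambda C_{\max}-\rho)$.

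The core computation is then to pick $L$ so that $2\eta_L<\Delta\fp(\rho)$, i.e. $\eta_L<\frac{\E\{x^2\}}{16}\cdot\frac{s}{p}\cdot\rho(\rho-\bar\lambda C_{\min})$. Bounding $\eta_L\leq4c_1\sqrt{(c_2\nu+u)\log L/L}$ (valid since $\log L\geq1$), substituting the values of $c_1$ and $c_2$, and treating the $\log(\sqrt{8}M_y)$ contribution of $c_2$ as part of the constant (so that $c_2\nu+u\lesssim\nu\log c_0+u$), then squaring the target inequality and solving for $L$ yields the stated condition $\frac{L}{\log L}\geq Cp^2(\nu\log c_0+u)\frac{M_y^4}{(\rho(\rho-\bar\lambda C_{\min})s\E\{x^2\})^2}$ for an appropriate absolute constant $C$; the only subtlety is the implicit $\log L$ on the left-hand side, handled by the standard fact that $\frac{L}{\log L}\geq A$ implies $\sqrt{\frac{\log L}{L}}\lesssim\frac{1}{\sqrt{A}}$. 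Once $L$ meets this bound, $2\eta_L<\Delta\fp(\rho)$ forces $\Delta\fy(\D;\D^0)>0$ for every $\D\in\partial\bro$, so the infimum of $\fy$ over the compact ball $\bro$ lies in its interior and is therefore a local minimizer $\D^*$ of $\D\in\mc C\mapsto\fy(\D)$ with $\nor{\D^*-\D^0}_F<\rho$. Finally, replacing $u$ by $u+\log2$ and folding the change into $C$ upgrades the probability from $1-2e^{-u}$ to $1-e^{-u}$.

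The genuinely hard work has already been front-loaded into the preceding lemmas: the compactness results (Lemmas~\ref{lem:closedness_LSR}--\ref{lem:unbounded_sequence}) that legitimize the Weierstrass step on each constraint set, and especially the covering-number bounds for the LSR classes (Lemmas~\ref{lem:covering_number_2nd} and~\ref{lem:covering_number_general}), whose exponential-in-$r$ exponents $\nu$ are precisely what drive the $r$-dependence of the sample complexity. Given those ingredients, the remaining obstacle is purely bookkeeping — inverting $2\eta_L<\Delta\fp(\rho)$ for $L$ through the $\log L$ factor and verifying that a single constant $C$ works uniformly over the three constraint sets — so I do not anticipate a conceptual difficulty here.
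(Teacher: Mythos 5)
Your proposal is correct and follows essentially the same route as the paper's own proof: compactness of $\bro$ plus the Weierstrass theorem for attainment, the uniform deviation bound of Lemma~\ref{lem:gribonval_eta_bound} instantiated with the covering-number pairs $(\nu,c_0)$ from~\eqref{covering_number_KS} and Lemmas~\ref{lem:covering_number_2nd}--\ref{lem:covering_number_general}, the boundary gap from Lemma~\ref{lem:gribonval_boundary} via~\eqref{relation_f_p_F_Y}, and inversion of $2\eta_L<\Delta\fp(\rho)$ for $L$. Your explicit handling of the factor-of-$2$ slack in $\nu$ for $\kr$ and of the $1-2e^{-u}$ versus $1-e^{-u}$ probability is slightly more careful bookkeeping than the paper provides, but it is not a different argument.
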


\begin{proof}
We take a similar approach to the proof of Theorem \ref{thm:asymptotic_nonconvex}. Due to compactness of the ball $\bro=\{\D\in \lsr| \nor{\D-\D^0}_F\leq \rho\}$ and continuity of $\fy(\D)$, it follows %from the extreme value theorem
that $\D\in \bro \mapsto \fy(\D)$ attains its minimum at a point in $\bro$.  It remains to show that $\Delta \fy(\D;\D^0)>0$ for all $\D\in \partial \mc \bro$ which implies existence of a local minimizer of $\fy: \mc C \rightarrow \mb R$ at $\D^*$ such that $\nor{\D^*-\D^0}_F< \rho$.%, where $\bar\lambda C_{\min}<\rho\leq \bar\lambda C_{\max}$.

Inequality \eqref{relation_f_p_F_Y} shows that it suffices to set $\eta_L\leq \Delta \fp(\D;\D^0)/2$ to have $\Delta \fy(\D;\D^0)>0$.
From Lemma \ref{lem:gribonval_eta_bound} we know $\eta_L\geq 3c_1\sqrt{\frac{c_2 \nu \log L}{L}}+ c_1\sqrt{\frac{c_2 \nu +u}{L}}$. Therefore,
using the lower bound \eqref{lower_bound_delta_f_p} on $\Delta \fp(\rho)$ we have with probability at least $1-e^{-u}$
\begin{align*}
3c_1\sqrt{\frac{c_2 \nu \log L}{L}}+ c_1\sqrt{\frac{c_2 \nu +u}{L}}\leq \frac{\E \{x^2\}}{16}\cdot \frac{s}{p}\cdot \rho \pr{\rho-\bar\lambda C_{\min}}
\end{align*}
with $c_1\geq M_{y}^2/\sqrt{8}$ and $c_2\geq\max(1, \log c_0\sqrt{8}M_{y})$\footnote{Under the conditions of this theorem, $M_y\leq \sqrt{1+\delta_s(\D^0)}M_x + M_{\epsilon}$, where $\delta_s(\D^0)$ denotes the RIP constant of $\D^0$.}. Rearranging, we get
\begin{align}
\frac{L}{\log L}\geq c_1^2 \pr{\frac{3\sqrt{c_2 \nu}+ \sqrt{c_2\nu +u}}{ \rho\pr{\rho-\bar\lambda C_{\min}}}}^2 \pr{\frac{16}{\E \{x^2\}}\cdot \frac{p}{s}}^2 .
\end{align}
Setting $c_1\geq M_y^2/\sqrt{8}$ %$c_1=\pr{\nor{\D^0}_2M_x+M_{\epsilon}}^2/\sqrt{8}\geq M_y^2/\sqrt{8}$
and $c_2=c_3 \log c_0 \geq\max(1, \log c_0\sqrt{8}M_{y})$ we get the lower bound
\begin{align*}%\label{sample_complexity_general}
\frac{L}{\log L}\geq C p^2 \pr{ \nu\log c_0 + u}\pr{\frac{M_y^2}{\rho\pr{\rho-\bar\lambda C_{\min}}s\E \{x^2\}}}^2
\end{align*}
with probability at least $1-e^{-u}$. %Substituting the value of $\nu$ for each of the constraint sets in \eqref{sample_complexity_general} gives the sample complexity results. \zs{this last sentence shouldn't be in the proof of theorem i think it should be a statement after the proof. Maybe summarized in a table compared to Gribonval's scalings for each case?}
Given that the number of samples satisfies \eqref{sample_complexity_general} for $\bar\lambda C_{\min}<\rho< \bar\lambda C_{\max}$, with high probability $\Delta \fy>0$ for any $\D\in \partial \bro$. Therefore, it follows from the existence of the infimum of $\D\in \bro \mapsto \fy(\D)$ in $\bro$ that $\D\in \mc C \mapsto \fy(\D)$ %\eqref{expected_risk_minimization_general}
has a local minimum at a point within a ball of radius $\rho$ around the true dictionary $\D^0$.
\end{proof}

The $\Omega\big(r(\sum_n m_n p_n) p^2 \rho^{-2}\big)$ sample complexity upper bound we obtain here for rank-constrained LSR-DL is a reduction compared to the $\Omega(mp^3\rho^{-2})$ sample complexity of standard DL~\cite{gribonval2014sparse}.
% This confirms our intuition that smaller number of parameters in LSR-DL model
However, the minimax lower bound scaling of $\Omega(p\sum_n m_np_n \rho^{-2})$  for KS-DL~\cite{shakeri2016minimax} ($r=1$) indicates an $O(p)$ gap with our sample complexity upper bound. This gap could be due to looseness in the lower bound, our upper bound, or both. We leave an investigation of this and possible tightening of the bound(s) to future work.

%%%%%%%%%%%%%%%%%%%%%%%%%%%%%%%%%%%%%%%%%%%%%%%%%%%%%%%%%%%%%%%%
%%%%%%%%%%%%%%%%%%%%%%%%%%%%%%%%%%%%%%%%%%%%%%%%%%%%%%%%%%%%%%%%%%%%%%%%%%%%%%%%%%%%%%%
%%%%%%%%%%%%%%%%%%%%------------------NEW SECTION------------------%%%%%%%%%%%%%%%%%%%%
%%%%%%%%%%%%%%%%%%%%%%%%%%%%%%%%%%%%%%%%%%%%%%%%%%%%%%%%%%%%%%%%%%%%%%%%%%%%%%%%%%%%%%%

%\pagebreak

\section{Identifiability in the Tractable LSR-DL Problems}\label{sec:tractable_identif}
In Section~\ref{sec:problem_statement}, we introduced two tractable relaxations to the rank-constrained LSR-DL problem: a regularized problem \eqref{problem:regularized} with a convex regularization term and a factorization-based problem \eqref{problem:factorized} in which the dictionary is written in terms of its subdictionaries. %Based on our results in Section~\ref{sec:NP_identif} for the rank-constrained problem,
We now provide results on the local identifiability of the true dictionary $\D^0$ in these problems, i.e., we find conditions under which at least one local minimizer of these problems is located near the true dictionary $\D^0$. Such local identifiability result implies that any DL algorithm that converges to a local minimum of these problems can recover $\D^0$ up to a small error if it is initialized close enough to $\D^0$.

%\vspace{-2\baselineskip}
\subsection{Regularization-based LSR Dictionary Learning}\label{regularization_section}

%\subsection{Identifiability of the Regularized Model}

The first tractable LSR-DL problem that we study is the regularized problem~\eqref{problem:regularized}. Exploiting the relation between $\sr(\D)$ and $\rank(\Dp)$, the LSR structure is enforced on the dictionary by a convex regularizer that imposes low tensor rank structure on $\Dp$. %to enforce LSR structure on dictionaries through a convex regularizer. This regularizer imposes low tensor-rank structure on the rearrangement tensor $\Dp$, resulting in LSR structure of the dictionary.
The regularizer that we use here is a commonly used convex proxy for the tensor rank function, the \textit{sum-trace-norm}~\cite{Wimalawarne_2014}, which is defined as the average of the trace (nuclear) norms of the \textit{unfoldings} of the tensor:
$%\[
\strnorm{\t A}\triangleq%\frac{1}{N} 
\sum\nolimits_{n=1}^{N} \trnorm{\m A^{(n)}}
$. %\]
%
%Using this convex relaxation for the rank function, the regularized LSR dictionary learning problem 
%
%\begin{align}\label{regularized_problem_DX_dictionary_problem}
%$\min_{\D\in \mc D}~F_Y^{\mathrm{Reg}}(\D)$
%
%\end{align}
%
%
% has the following loss function:
% \begin{align}\label{regularized_loss_sumtracenorm}
% \mc L^{\mathrm{Reg}}_{\y}(\D,\x)=\frac{1}{2}\nor{\y-\D\x}_F^2+ \lambda_1 \trnorm{\t{D}^{\pi}}+\lambda \nor{\v x}_1.
% \end{align}
%We refer to this problem as the \textit{regularized low separation-rank (regularized LSR) dictionary learning problem}. 

%In this section, we study whether solving the regularized LSR dictionary learning problem will result in recovery of the reference dictionary $\D_0$ and propose a regularization-based DL algorithm that enforces the LSR structure.

The first question we address is whether the reference dictionary that generates the observations $\{\t Y_l\}_{l=1}^L$ is identifiable via Problem~\eqref{problem:regularized}. Our local identifiability result here is limited to when $\D^0\in \kn$, i.e. the true dictionary is KS. For such $\D^0$, we show that there is at least one local minimizer $\D^*$ of $\fyreg(\D)$ under Assumptions~\ref{assumptions:coefficient}--\ref{assumptions:parameter} that is close to $\D^0$. %$\nor{\D^*-\D^0}_F$ is small. 
\begin{thm}\label{thm:regularized_theorem}
Consider the regularized LSR-DL problem~\eqref{problem:regularized}. %with a closed constraint set $\mc C\subseteq \mc D$. 
Suppose that the generating dictionary $\D^0\in \kn$ and Assumptions~\ref{assumptions:coefficient}--\ref{assumptions:parameter} are satisfied. Moreover, let $\bar\lambda C_{\min}<\rho\leq \bar\lambda C_{\max}$ and $\frac{M_{\epsilon}}{M_{x}}<\frac{7}{2}(\bar\lambda C_{max}-\rho)$.  Then, the expected risk function $\D\in \mc D \mapsto \E[\fyreg(\D)]$ has a local minimizer $\D^*$ such that $\nor{\D^*-\D^0}_F\leq \rho$.

Moreover, given $L$ samples such that %the sample complexity requirement \eqref{sample_complexity_general} is satisfied with $\nu=\sum_{i=1}^N m_i p_i$,
\begin{align}\label{sample_complexity_convex}
L>C_0 p^2 (mp+u)  \left(\frac{M_x^2}{\E x^2}\cdot\frac{\frac{M_{\epsilon}}{M_x}+\rho+(\frac{M_{\epsilon}}{M_x}+\rho)^2}{\rho-C_{\min}\bar\lambda} \right)^2,
\end{align}
where $u$ and $C_0$ are positive constants, then, we have with probability no less than $1-e^{-u}$ that the empirical risk function $\D\in \mc D \mapsto \fyreg(\D)$ has a local minimum at $\D^*$ such that $\nor{\D^*-\D^0}_F< \rho$.
\end{thm}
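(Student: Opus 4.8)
The plan is to reuse the two-stage argument behind Theorems~\ref{thm:asymptotic_nonconvex} and~\ref{thm:finite_nonconvex}, now with the constraint set being the whole oblique manifold $\mc D$ and with the deterministic term $\lambda_1\strnorm{\Dp}$ carried along throughout. Write $\fpreg(\D)\triangleq\E[\fyreg(\D)]=\fp(\D)+\lambda_1\strnorm{\Dp}$. Since $\mc D$ is compact, the ball $\bro=\{\D\in\mc D:\nor{\D-\D^0}_F\le\rho\}$ is compact; and since $\fp$ is continuous (Gribonval et al.~\cite{gribonval2014sparse}) and $\D\mapsto\strnorm{\Dp}$ is continuous---being a norm composed with the rearrangement $\D\mapsto\Dp$ of Lemma~\ref{lem:rearrangement}, which is linear and Frobenius-norm-preserving---the map $\D\in\bro\mapsto\fpreg(\D)$ attains its infimum. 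Exactly as before, it then suffices to show $\Delta\fpreg(\D;\D^0)>0$ for every $\D\in\partial\bro$ (the case $\partial\bro=\emptyset$ being trivial), since this forces the minimizer over $\bro$ into the interior, where it is a local minimizer of $\D\in\mc D\mapsto\fpreg(\D)$.

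For the boundary step I would split $\Delta\fpreg(\D;\D^0)=\Delta\fp(\D;\D^0)+\lambda_1\bigl(\strnorm{\Dp}-\strnorm{[\D^0]^\pi}\bigr)$. The first term is bounded below by Lemma~\ref{lem:gribonval_boundary}---whose hypotheses are precisely Assumptions~\ref{assumptions:coefficient}--\ref{assumptions:parameter} with the stated range of $\rho$ and noise level, and whose conclusion applies on $\partial\bro\subseteq\{\nor{\D-\D^0}_F=\rho\}$---giving $\Delta\fp(\D;\D^0)\ge\tfrac{\E\{x^2\}}{8}\tfrac{s}{p}\rho(\rho-\bar\lambda C_{\min})$. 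For the regularizer difference I would use the reverse triangle inequality for the norm $\strnorm{\cdot}$ together with $\trnorm{\m A^{(n)}}\le\sqrt{\rank(\m A^{(n)})}\,\nor{\m A^{(n)}}_F$ and the facts that both unfolding and the rearrangement preserve the Frobenius norm, to obtain $\bigl|\strnorm{\Dp}-\strnorm{[\D^0]^\pi}\bigr|\le\strnorm{\Dp-[\D^0]^\pi}\le C_{\pi}\nor{\D-\D^0}_F=C_{\pi}\rho$, where $C_{\pi}$ depends only on $\v m,\v p$. Combining, $\Delta\fpreg(\D;\D^0)\ge\rho\bigl(\tfrac{\E\{x^2\}}{8}\tfrac{s}{p}(\rho-\bar\lambda C_{\min})-\lambda_1 C_{\pi}\bigr)$, which is positive as long as $\lambda_1$ is small enough relative to $\tfrac{s}{p}(\rho-\bar\lambda C_{\min})$; with this in hand the asymptotic claim follows as in Theorem~\ref{thm:asymptotic_nonconvex}.

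For the finite-sample statement the key observation is that $\lambda_1\strnorm{\Dp}$ is data independent, so $\fyreg(\D)-\fpreg(\D)=\fy(\D)-\fp(\D)$ and the uniform deviation is exactly the one controlled by Lemma~\ref{lem:gribonval_eta_bound}, applied with $\mc C=\mc D$: here $\N_{\mxc}(\mc D_{m\times p},\epsilon)\le(c_0/\epsilon)^{mp}$ for an absolute constant $c_0$, so $\nu=mp$. Thus with probability at least $1-2e^{-u}$ we have $\sup_{\D\in\mc D}|\fyreg(\D)-\fpreg(\D)|\le\eta_L$, hence $\Delta\fyreg(\D;\D^0)\ge\Delta\fpreg(\D;\D^0)-2\eta_L$ on $\partial\bro$. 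Choosing $L$ so that $\eta_L<\tfrac12\inf_{\D\in\partial\bro}\Delta\fpreg(\D;\D^0)$---which, using the lower bound from the previous paragraph, $M_y\le\sqrt{1+\delta_s(\D^0)}\,M_x+M_{\epsilon}$, and the same algebra as in the proof of Theorem~\ref{thm:finite_nonconvex} (now with $\nu\asymp mp$), reduces to a condition of the form~\eqref{sample_complexity_convex}---makes $\Delta\fyreg>0$ on $\partial\bro$, so $\D\in\mc D\mapsto\fyreg(\D)$ has a minimizer over $\bro$ in its interior, as claimed.

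I expect the main obstacle to be the boundary step: quantifying how much the sum-trace-norm term can erode the positive gap $\Delta\fp(\rho)$, and pinning down the admissible range of $\lambda_1$ (and the corresponding $\rho$-window) so that it is consistent with the constants appearing in~\eqref{sample_complexity_convex}. This is also where the restriction $\D^0\in\kn$ is used: when $[\D^0]^\pi$ is rank one we may bound the regularizer difference crudely by $\strnorm{\Dp-[\D^0]^\pi}$ without needing any control on the (nonsmooth) behaviour of $\strnorm{\cdot}$ at higher-rank points, whereas for a general LSR $\D^0$ this step---and hence the argument as stated---breaks down.
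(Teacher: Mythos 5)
Your overall architecture (compactness of $\bro$, reduce to positivity of $\Delta\fpreg$ on $\partial\bro$, data-independence of the regularizer so the uniform deviation is exactly that of $\fy-\fp$ with $\nu=mp$) matches the paper. But your treatment of the boundary step has a genuine gap: you control the regularizer difference by the reverse triangle inequality, $\bigl|\strnorm{\Dp}-\strnorm{[\D^0]^\pi}\bigr|\le C_\pi\rho$, and then need $\lambda_1$ small enough relative to $\tfrac{s}{p}(\rho-\bar\lambda C_{\min})$ for the gap to survive. The theorem imposes no such smallness condition on $\lambda_1$, so your argument proves a strictly weaker statement. The paper avoids this entirely by showing the regularizer difference is \emph{nonnegative} for every $\D\in\mc D$ and every $\lambda_1>0$: since $\D^0\in\kn$, the rearranged tensor $[\t D^0]^\pi$ is rank one, so each unfolding $[\D^0]^{(n)}$ is a rank-one matrix and $\trnorm{[\D^0]^{(n)}}=\nor{[\D^0]^{(n)}}_F=\sqrt{p}$; meanwhile every $\D\in\mc D_{m\times p}$ satisfies $\nor{\D^{(n)}}_F=\sqrt{p}$ and $\trnorm{\D^{(n)}}=\sum_k\sigma_k(\D^{(n)})\ge\sqrt{\sum_k\sigma_k^2(\D^{(n)})}=\sqrt{p}$. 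Hence $\strnorm{\Dp}\ge\strnorm{[\t D^0]^\pi}$, so $\Delta\fpreg\ge\Delta\fp$ and $\Delta\fyreg\ge\Delta\fy$, and the boundary positivity follows directly from Lemma~\ref{lem:gribonval_boundary} (and, for the empirical case, from the corresponding result of Gribonval et al.) with no erosion of the gap.

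This also corrects your diagnosis of where $\D^0\in\kn$ is used: the reverse triangle inequality you invoke holds for any norm at any pair of points, rank one or not, so the KS assumption buys you nothing on that route. Its actual role is that a rank-one rearrangement makes $\D^0$ a \emph{global minimizer of the regularizer over the oblique manifold} (trace norm equals Frobenius norm exactly at rank one, and all points of $\mc D$ share the same Frobenius norm $\sqrt{p}$), which is why the penalty can only reinforce, never erode, the loss gap at the boundary. For a general LSR $\D^0$ this minimality fails, which is the real reason the theorem is restricted to $\D^0\in\kn$. The finite-sample portion of your argument is otherwise sound once the boundary step is repaired.
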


\begin{proof}
Consider the ball $\bro=\{\D\in \mc D| \nor{\D-\D^0}_F\leq \rho\}$. %It follows from extreme value theorem that $\D\in \bro \mapsto \fyreg(\D)$ attains its minimum at a point in $\bro$. This is based on 
Compactness of $\bro=\{\D\in \mc C|\nor{\D-\D^0}_F \leq \rho\}$ and continuity of $\fyreg(\D)$ guarantee that $\D\in \bro \mapsto \fyreg(\D)$ attains its minimum at a point in $\bro$. Similarly, $\D\in \bro \mapsto \fpreg(\D)\triangleq \E[\fyreg]$ reaches its minimum at a point in $\bro$. We now need to show in either case the minimum is not attained on the boundary of $\bro$. To this end, we show in the following that $\Delta \fyreg(\D;\D^0)>0$ and $\Delta \fpreg(\D;\D^0)>0$ for any $\D\in \partial\bro$. %which confirms that it is a local minimum of $\D\in \mc C \mapsto \fyreg(\D)$.

Incorporation of the trace-norm regularization term in \eqref{problem:regularized} within the objective in \eqref{problem:empirical_risk_minimization_general} introduces a factor $\strnorm{\Dp} -\strnorm{[\t D^0]^{\pi}}=\sum_{n=1}^N\left(\trnorm{\D^{(n)}}-\trnorm{[\D^0]^{(n)}}\right)$
to %$\Delta \phi_{\mc P}(\D;\D^0|\s^0)$, defined in \eqref{phi_def}, and consequently
$\Delta \fp(\D;\D^0)$ and $\Delta \fy(\D;\D^0)$. We know from Lemma \ref{lem:rearrangement} that when the true dictionary is a KS matrix ($\D^0\in \kn$), its rearrangement tensor $[\t D^0]^{\pi}$  is a rank-$1$ tensor and therefore all unfoldings $[\D^0]^{(n)}$ of $[\t D^0]^{\pi}$ are rank-$1$ matrices. This implies $\|[\D^0]^{(n)}\|_{\mathrm{tr}}=\|[\D^0]^{(n)}\|_F$.
Moreover, for all $\D\in \mc D_{m\times p}$ we have  
$\big\|\D^{(n)}\big\|_F=\big\|[\D^0]^{(n)}\big\|_F=\sqrt{p}$. 
Therefore,
\begin{align*}
\big\|\D^{(n)}\big\|_{\mathrm{tr}}-\big\|[\D^0]^{(n)}\big\|_{\mathrm{tr}}%&=\trnorm{\D^{(n)}}-\sqrt{p}\nonumber\\
&=\sum\nolimits_{k=1}^{r_n} \sigma_k(\D^{(n)})-\sqrt{p}\nonumber\\
&\geq \sqrt{\sum\nolimits_{k=1}^{r_n} \sigma_k^2(\D^{(n)})}-\sqrt{p}=0,%\nonumber\\
%
%&=0
\end{align*}
where $r_n\triangleq \rank(\D^{(n)})$ and $\sigma_k(\D^{(n)})$ denotes the $k$-th singular value of $\D^{(n)}$. Therefore, we conclude that $\Delta \fyreg(\D;\D^0)\geq\Delta \fy(\D;\D^0)$ and $\Delta \fpreg(\D;\D^0)\geq\Delta \fp(\D;\D^0)$ for any $\D\in \mc D$. %, where $\fpreg\triangleq \E[\fyreg]$. 
According to Lemma \ref{lem:gribonval_boundary}, $\Delta f_{\mc P}(\D;\D^0)>0$ for all $\D$ on the boundary of the ball $\bro$.
Furthermore, under the assumptions of the current theorem, given a number of samples satisfying \eqref{sample_complexity_convex}, 
Gribonval et al.~\cite{gribonval2014sparse}~show that the empirical difference $\Delta F_{\Y}(\D;\D^0)>0$ for all $\D$ on the boundary of ${\mc S}_{\rho}=\big\{\D\in \mc D ~\big|~ \nor{\D-\D^0}_F\leq\rho \big\}$, and therefore on the boundary of $\bro\subseteq \mc S_{\rho}$, with probability at least $1-e^{-u}$. %, for $\rho\in (\bar\lambda C_{\min},\bar\lambda C_{\max})$. 
Therefore, for both $\fpreg(\D)$ and $\fyreg(\D)$, the minimum is attained in the interior of $\bro$ and not on its boundary.
\end{proof}
We discuss the implications of Theorem \ref{thm:regularized_theorem} in Section \ref{subsec:discussion}.
%Next, We discuss the sample complexity in the factorization-based LSR-DL problem.

%%%%%%%%%%%%%%%%%%%%%%%%%%%%%%%%%%%%%%%%%%%%%%%%%%%%%%%%%%%%%%%%%%%%%%%%%%%%%%%%%%%%%%%
%\pagebreak
%\subsection{Regularization-based Algorithm (STARK)}

%\input{Regularization_Based_Algo}

\subsection{Factorization-based LSR Dictionary Learning}\label{factorization_section}

We now shift our focus to Problem~\eqref{problem:factorized}, which expands $\D$ as $\sum_{k=1}^r\bigotimes \dkn$ and optimizes over the individual subdictionaries, and show that there is at least one local minimum $\{[\dkn]^*\}$ of the factorization-based LSR-DL Problem~\eqref{problem:factorized} such that $\sum\bigotimes [\dkn]^*$ is close to the underlying dictionary $\D^0$.
%Another relaxation to the NP-hard problem of rank-constrained LSR-DL is to expand $\D$ as $\sum_{k=1}^r\bigotimes \dkn$ and solve the minimization problem \eqref{problem:factorized} with respect to the subdictionaries.
%We now show that the true dictionary is locally identifiable in Problem \eqref{problem:factorized} under assumptions \ref{assumptions:coefficient}--\ref{assumptions:parameter}. That is, there is at least one local minimum $\{[\dkn]^*\}$ of the factorization-based LSR-DL Problem~\eqref{problem:factorized} such that $\sum\bigotimes [\dkn]^*$ is close to the underlying dictionary $\D^0$.
%
%\subsection{Identifiability of the factorization-based model}
%
%To show that a local minimum $\{[\dkn]^*\}$ of the DL problem \eqref{sum_kron_factor_problem} exists such that $\sum\bigotimes [\dkn]^*$ is close to the underlying dictionary $\D^0$,
%
Our strategy here is to establish a connection between the local minima of \eqref{problem:factorized} and those of \eqref{problem:empirical_risk_minimization_general}. Specifically, we show that when the dictionary class in \eqref{problem:sep_rank_NP} matches that of \eqref{problem:factorized}, for every local minimum $\widehat\D$ of  \eqref{problem:empirical_risk_minimization_general}, there exists a local minimum  $\{\widehat\D^k_n\}$ of \eqref{problem:factorized} such that $\widehat \D=\sum\bigotimes \widehat\D^k_n$. Furthermore, we use the result of Theorems \ref{thm:asymptotic_nonconvex}~and~\ref{thm:finite_nonconvex} that %given Assumptions \ref{assumptions:coefficient}--\ref{assumptions:parameter} hold and the constraint set is compact,
there exists a local minimum $\D^*$ of Problem \eqref{problem:empirical_risk_minimization_general} within a small ball around $\D^0$. %This result is stated in Theorem \ref{thm:finite_nonconvex}.
It follows from these facts that under the generating model considered here, %and given the compactness of the constraint set,
a local minimum $\{[\dkn]^*\}$ of \eqref{problem:factorized} is such that $\sum\bigotimes [\dkn]^*$ is close to $\D^0$. %We further show similar results hold
% Now, we know from Theorem \ref{thm:finite_nonconvex} that under the assumptions of that theorem, there exists a local minimum $\D^*$ of Problem \eqref{empirical_risk_minimization_general} within a small ball around $\D^0$.
%we use the fact that given Assumptions \ref{assumptions:coefficient}--\ref{assumptions:parameter} hold and the constraint set $\mc C\subseteq \mc D$ is compact, there exists a local minimum $\D^*$ of Problem \eqref{empirical_risk_minimization_general} within a small ball around $\D^0$. %This result is stated in Theorem \ref{thm:finite_nonconvex}.
%It follows from these facts that under the generating model considered here and given the compactness of the constraint set, a local minimum $\{[\dkn]^*\}$ of \eqref{sum_kron_factor_problem} is such that $\sum\bigotimes [\dkn]^*$ is close to $\D^0$.
%In this section, we show that the identifiability results of the rank-constrained problem \eqref{sep_rank_NP_problem} problem also hold for the factorization-based problem \eqref{sum_kron_factor_problem} when the true dictionary $\D^0$ belongs to $\mc C\in \compact'$.
%Before we present the local identifiability result of factorization-based LSR-DL, we need the

We begin with a bound on the distance between LSR matrices when the tuples of their factor matrices are $\epsilon$-close.

\begin{lem}\label{lem:neighborhoods_relation_D_to_Dkn}
For any two tuples $(\m A^k_n)$ and $(\m B^k_n)$ such that $\m A^k_n, \m B^k_n\in \alpha \ball_{m_n\times p_n}$ for all $n\in[N]$ and $k\in [r]$, if the distance $\big\|(\m A^k_n)-(\m B^k_n)\big\|_F\leq \epsilon$ then $\big\|\sum\nolimits_{k=1}^r\bigotimes \m A^k_n-\sum_{k=1}^r\bigotimes \m B^k_n\big\|_F\leq \alpha^{N-1}\sqrt{Nr}\epsilon$.
\end{lem}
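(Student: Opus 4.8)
The plan is to peel the estimate apart in three stages: first separate the $r$ summands, then control a single Kronecker product of $N$ factors by a telescoping argument, and finally recombine everything with two applications of Cauchy--Schwarz.

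First I would use the triangle inequality to reduce to a single term,
\[
\Big\|\sum_{k=1}^r\bigotimes_{n=1}^N \m A^k_n - \sum_{k=1}^r\bigotimes_{n=1}^N \m B^k_n\Big\|_F \le \sum_{k=1}^r \Big\|\bigotimes_{n=1}^N \m A^k_n - \bigotimes_{n=1}^N \m B^k_n\Big\|_F,
\]
so it suffices to bound $\big\|\bigotimes_n \m A^k_n - \bigotimes_n \m B^k_n\big\|_F$ for each fixed $k$. For this, I would write the difference of the two Kronecker products as a telescoping sum in which one factor is swapped at a time: with the convention that an empty Kronecker product is the scalar $1$,
\[
\bigotimes_{n=1}^N \m A^k_n - \bigotimes_{n=1}^N \m B^k_n = \sum_{j=1}^N \Big(\bigotimes_{n<j} \m B^k_n\Big) \otimes (\m A^k_j - \m B^k_j) \otimes \Big(\bigotimes_{n>j} \m A^k_n\Big).
\]
The two facts that drive the bound are the multiplicativity $\|\m X\otimes\m Y\|_F = \|\m X\|_F\,\|\m Y\|_F$ and the hypothesis $\m A^k_n,\m B^k_n\in\alpha\ball_{m_n\times p_n}$, i.e. $\|\m A^k_n\|_F=\|\m B^k_n\|_F=\alpha$. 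Hence the $j$-th telescoping term has Frobenius norm exactly $\alpha^{N-1}\|\m A^k_j - \m B^k_j\|_F$, and the triangle inequality followed by Cauchy--Schwarz on the length-$N$ vector of per-mode discrepancies yields
\[
\Big\|\bigotimes_{n=1}^N \m A^k_n - \bigotimes_{n=1}^N \m B^k_n\Big\|_F \le \alpha^{N-1}\sum_{j=1}^N \|\m A^k_j - \m B^k_j\|_F \le \alpha^{N-1}\sqrt{N}\,\sqrt{\sum_{j=1}^N \|\m A^k_j - \m B^k_j\|_F^2}.
\]

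Finally I would substitute this back into the sum over $k$ and apply Cauchy--Schwarz once more, now over the index $k$:
\[
\sum_{k=1}^r \alpha^{N-1}\sqrt{N}\,\sqrt{\sum_{n=1}^N \|\m A^k_n - \m B^k_n\|_F^2} \le \alpha^{N-1}\sqrt{Nr}\,\sqrt{\sum_{k=1}^r\sum_{n=1}^N \|\m A^k_n - \m B^k_n\|_F^2} = \alpha^{N-1}\sqrt{Nr}\,\big\|(\m A^k_n)-(\m B^k_n)\big\|_F,
\]
using the definition of the Euclidean distance between tuples of matrices; since this distance is at most $\epsilon$, the claim follows. The only step that requires genuine care — and hence the main obstacle — is the telescoping identity: one must verify that each intermediate mixed Kronecker product still has all of its remaining $N-1$ factors of norm exactly $\alpha$ (so the factor $\alpha^{N-1}$ is sharp and does not depend on which factors are $\m A$'s versus $\m B$'s), and that the multiplicativity of the Frobenius norm under Kronecker products is invoked consistently with the factor ordering fixed in Lemma~\ref{lem:rearrangement}. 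Everything else is routine.
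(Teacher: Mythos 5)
Your proposal is correct and follows essentially the same route as the paper: the telescoping (one-factor-at-a-time) decomposition of the difference of Kronecker products, multiplicativity of the Frobenius norm with the hypothesis $\|\m A^k_n\|_F=\|\m B^k_n\|_F=\alpha$ giving the factor $\alpha^{N-1}$, and then the $\ell_1$--$\ell_2$ bound on the $Nr$ per-mode discrepancies. The only cosmetic difference is that you apply Cauchy--Schwarz in two stages (over $n$, then over $k$) while the paper applies $\|(\epsilon^k_n)\|_1\leq\sqrt{Nr}\,\|(\epsilon^k_n)\|_2$ once to the full double-indexed vector, and the paper imports the telescoping identity from Lemma 2 of Shakeri et al.\ rather than rederiving it.
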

%
%
%\begin{proof}
%See the supplementary material.
%\end{proof}

%Since $\kn={}^c{\mc K}^{N,1}_{m_i,p_i}$ and $\kr={}^c{\mc K}_{m_i,p_i}^{2,r}$ with $c\geq p$ and $\muknr\subset\cknr$ for $c=\sqrt{(r^2-r)\mu}$, we state our identifiability results for constraint set $\mc C=\cknr$. %\mg{hmm not really. want to say those also fit in this framework. the result holds for all these}

\begin{thm}\label{thm:factorized_identifiability}
Consider the factorization-based LSR-DL problem \eqref{problem:factorized}. % with constraint set $\mc C\in \compact'$. %When the true dictionary belongs to $\cknr$, we can write \ref{sum_kron_factor_problem} as
%
%\begin{align}\label{sum_kron_factor_in_ball_probelm}
%\min_{\{\dkn\in \alpha \ball\}:~\sum\bigotimes \dkn\in \mc D_{m\times p}}~ \fyfac\big(\dknset\big)
%\end{align}
%
%where $c\leq \alpha^N$.
%\mg{Suppose that the generating dictionary $\D^0$ belongs to $\cknr$ where $c\leq \alpha^N$.}
Suppose that Assumptions~\ref{assumptions:coefficient}--\ref{assumptions:parameter} are satisfied and $\frac{M_{\epsilon}}{M_{x}}<\frac{7}{2}(\bar\lambda C_{max}-\rho)$ with $\bar\lambda C_{\min}<\rho\leq \bar\lambda C_{\max}$. Then, the expected risk function $\E[\fyfac\big(\dknset\big)]$ has a local minimizer $([\dkn]^*)$ such that $\nor{\sum\bigotimes [\dkn]^*-\D^0}_F\leq \rho$.

Moreover, when the sample complexity requirements \eqref{sample_complexity_general} are satisfied for some positive constant $u$,
then with probability no less than $1-e^{-u}$ the empirical risk objective function $\fyfac\big(\dknset\big)$ has a local minimum achieved at $([\dkn]^*)$ such that $\nor{\sum\bigotimes[\dkn]^*-\D^0}_F\leq \rho$.%\mg{the finite sample results imply asymptotic results, do we need to state the asymptotic results?} \ads{I think that you could state the asymptotic result as a corollary in this case.}
\end{thm}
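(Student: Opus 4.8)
The plan is to deduce the result from the identifiability of the rank-constrained problem of Section~\ref{sec:NP_identif} via the parametrization $(\dkn)\mapsto \sum_{k=1}^r\bigotimes_{n=1}^N\dkn$. First I would fix the constraint set of Problem~\eqref{problem:empirical_risk_minimization_general} to be the one matching the parametrization of~\eqref{problem:factorized}: $\mc C=\kn$ if $r=1$, $\mc C=\kr$ if $N=2$, and $\mc C=\cknr$ (with the same constant $c$ as in~\eqref{problem:factorized}) if $N\geq 3$ and $r\geq 2$. By Theorems~\ref{thm:asymptotic_nonconvex} and~\ref{thm:finite_nonconvex}, under the stated hypotheses --- and, for the empirical claim, provided $L$ obeys~\eqref{sample_complexity_general} --- the map $\D\in\mc C\mapsto\fp(\D)$ (resp.\ $\D\in\mc C\mapsto\fy(\D)$, with probability at least $1-e^{-u}$) has a local minimizer $\D^*$ with $\nor{\D^*-\D^0}_F\leq\rho$. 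It then remains to lift $\D^*$ to a local minimizer of the factorized objective that is represented by $\D^*$.

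Since $\D^*\in\mc C\subseteq\knr$, write $\D^*=\sum_{k=1}^r\bigotimes_{n=1}^N[\dkn]^*$, choosing a \emph{balanced} factorization in which $\nor{[\D^k_n]^*}_F$ is independent of $n$ (rescaling the factors within each Kronecker term leaves $\D^*$ unchanged). Balancing together with $\nor{\bigotimes_n[\dkn]^*}_F\leq c$ (for $N=2$, instead take the SVD-type decomposition of the rearranged matrix $[\t D^*]^\pi$, for which $\sum_k\nor{\bigotimes_n[\dkn]^*}_F^2=\nor{\D^*}_F^2=p$) yields a uniform bound $\nor{[\dkn]^*}_F\leq\alpha$ with an explicit $\alpha$. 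Hence every tuple $(\dkn)$ within Frobenius distance $\epsilon\leq 1$ of $([\dkn]^*)$ has all its factors of norm at most $\alpha+1$, and Lemma~\ref{lem:neighborhoods_relation_D_to_Dkn} --- i.e.\ local Lipschitzness of $(\dkn)\mapsto\sum_k\bigotimes_n\dkn$ on bounded sets --- gives $\nor{\sum_k\bigotimes_n\dkn-\D^*}_F\leq(\alpha+1)^{N-1}\sqrt{Nr}\,\epsilon$.

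I would then show $([\dkn]^*)$ is a local minimizer of $\E[\fyfac]$ (resp.\ $\fyfac$) by contradiction. Note first that $\fyfac((\dkn))=\fy\big(\sum_k\bigotimes_n\dkn\big)$ whenever $\sum_k\bigotimes_n\dkn\in\mc D_{m\times p}$, since the two sparse-coding infima coincide; and feasibility of $(\dkn)$ for~\eqref{problem:factorized} forces $\sum_k\bigotimes_n\dkn\in\mc D_{m\times p}$, $\sr\big(\sum_k\bigotimes_n\dkn\big)\leq r$, and (when $N\geq 3$, $r\geq 2$) $\nor{\bigotimes_n\dkn}_F\leq c$ --- that is, $\sum_k\bigotimes_n\dkn\in\mc C$. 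Now, if $([\dkn]^*)$ were not a local minimizer there would exist feasible tuples $(\dkn)$ arbitrarily close to it with strictly smaller objective value; by the previous display their images $\sum_k\bigotimes_n\dkn$ lie in an arbitrarily small $\mc C$-neighborhood of $\D^*$ with $\fy$-value strictly below $\fy(\D^*)$, contradicting local minimality of $\D^*$ over $\mc C$. Therefore $([\dkn]^*)$ is a local minimizer and $\nor{\sum_k\bigotimes_n[\dkn]^*-\D^0}_F=\nor{\D^*-\D^0}_F\leq\rho$; replacing $\fy$ by $\fp$ throughout gives the expected-risk statement.

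The step I expect to be most delicate is the bookkeeping around the scaling ambiguity of Kronecker factorizations: one must pick the factorization of $\D^*$ (balanced for $N\geq 3$, SVD-based for $N=2$) so that it and all nearby tuples have uniformly bounded factors, allowing Lemma~\ref{lem:neighborhoods_relation_D_to_Dkn} to be applied with a fixed constant; and one must check that the feasible region of~\eqref{problem:factorized} maps \emph{exactly} onto $\mc C$ --- in particular that the per-term norm constraint in~\eqref{problem:factorized} is precisely what makes the closed set $\cknr$, rather than the non-closed $\knr$, the correct comparison set (cf.\ Corollary~\ref{cor:bounded_closedness}). Once those compatibility points are settled, the transfer of local minimality is just the continuity argument above.
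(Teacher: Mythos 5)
Your proposal is correct and follows essentially the same route as the paper's proof: invoke Theorems~\ref{thm:asymptotic_nonconvex} and~\ref{thm:finite_nonconvex} to obtain a local minimizer $\D^*$ of the rank-constrained problem near $\D^0$, pick a balanced factorization of $\D^*$ with uniformly bounded factors, and use Lemma~\ref{lem:neighborhoods_relation_D_to_Dkn} to transfer local minimality to the tuple $([\dkn]^*)$ (the paper argues this directly rather than by contradiction, and handles $\kn$ and $\kr$ as the special case $c\geq p$ of $\cknr$, but these are cosmetic differences).
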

\begin{proof}
%The proof for the asymptotic and the finite sample case are similar, therefore we only present the asymptotic analysis.
Let us first consider the finite sample case. Theorem \ref{thm:finite_nonconvex} shows existence of a local minimizer $\D^*$ of Problem~\eqref{problem:sep_rank_NP} for constraint sets $\kn$, $\kr$, and $\cknr$, such that $\|\D^*-\D^0\|_F\leq \rho$ w.h.p. Here, we want to show that for such $\D^*$, there exists a $\{[\dkn]^*\}$ such that $\D^*=\sum\bigotimes[\dkn]^*$ and $\{[\dkn]^*\}$ is a local minimizer of Problem~\eqref{problem:factorized}.% with the same constraint set $\mc C$.

First, let us consider Problem~\eqref{problem:sep_rank_NP} with $\cknr$. It is easy to show that any $\D\in \cknr$ can be written as $\sum_{k=1}^r\bigotimes\dkn$ such that for all $k \in [r]$ and $n \in [N]$ , without loss of generality,  $\dkn\in \alpha\ball_{m_n\times p_n}$ where $\alpha > \sqrt[N-1]{c}$.
Define
$
\mc C^{\mathrm{fac}}\triangleq \left\{\dkntup \big| \sum\bigotimes\dkn\in \cknr: \forall k,n, \dkn\in \alpha \mc \ball_{m\times p}  \right\}
$.
%\end{align*}
%
Since $\D^*\in \cknr$, there is a $([\dkn]^*)\in \mc C^{\mathrm{fac}}$ (with $\|[\dkn]^*\|_F= \sqrt[N-1]{c}$ for all $k \in [r]$ and $n \in [N]$) such that $\D^*=\sum\bigotimes[\dkn]^*$.
According to Lemma \ref{lem:neighborhoods_relation_D_to_Dkn}, %in Appendix \ref{aux_lemmata_section}
for any $\dknset\in \mc C^{\mathrm{fac}}$ it follows from $\big\|\dkntup-([\dkn]^*)\big\|_F\leq \epsilon'$ that $\big\|\sum\bigotimes \dkn-\sum\bigotimes [\dkn]^*\big\|_F\leq \alpha^{N-1}\sqrt{Nr}\epsilon'$.  %=c\sqrt{Nr}\epsilon'$.
Since $\D^*$ is a local minimizer of \eqref{problem:sep_rank_NP}, there exists a positive $\epsilon$ such that for all $\D\in \cknr$ satisfying $\nor{\D-\D^*}_F\leq \epsilon$, we have $\fy(\D^*)\leq \fy(\D)$. If we choose $\epsilon'$ small enough such that $\sqrt[N-1]{c}+\epsilon'\leq\alpha$ and  $\alpha^{N-1}\sqrt{Nr}\epsilon'\leq \epsilon$, then for any $\dkntup$ such that $\dkntup\in \mc C^{\mathrm{fac}}$ %$\sum\otimes \dkn\in \cknr$
and $\big\|\dkntup-([\dkn]^*)\big\|_F\leq \epsilon'$, we have $\big\|\sum\bigotimes \dkn-\D^*\big\|_F\leq \epsilon$ and this means that
$\fyfac\big(\dknset\big)-\fyfac\big(\{[\dkn]^*\}\big)=\fy(\sum\bigotimes \dkn)-\fy(\D^*)\geq 0$. Therefore, $([\dkn]^*)$ is a local minimizer of Problem \eqref{problem:factorized}. This concludes our proof for the finite sample case with constraint set $\cknr$. %with $\mc C=\cknr$.

Note that we can write $\kn={}^c{\mc K}^{N,1}_{\m m, \m p}$ and $\kr={}^c{\mc K}_{\m m, \m p}^{2,r}$ with $c\geq p$. Therefore, the above results also hold for $\kn$ and $\kr$ since they are special cases of $\cknr$.

It is easy to see similar relation exists between the local minima of $f_{\y}(\D)$ and $f_{\y}^{\mathrm{fac}}(\dknset)\triangleq\E[\fyfac(\dknset)]$, proving the asymptotic result in the statement of this theorem.
\end{proof}

%\mg{Next, we discuss the consequences of the results of Theorems \ref{thm:regularized_theorem} and \ref{thm:factorized_identifiability}.}
%Here we provided the sample complexity for local identifiability in the factorization-based LSR-DL problem.

\subsection{Discussion}\label{subsec:discussion}
In this section, we discuss the local identifiability of the true dictionary in the regularization-based formulation (Theorem \ref{thm:regularized_theorem}) and the factorization-based formulation (Theorem \ref{thm:factorized_identifiability}). For the regularization-based formulation, our results only hold for the case where the true dictionary is KS, i.e. $\D^0\in \kn$. We obtain sample complexity requirement of $\Omega(m p^3\rho^{-2})$ in this case, which matches the sample complexity requirement of the unstructured formulation~\cite{gribonval2014sparse}. This is due to the fact that the class of dictionaries in the regularization-based formulation is $\mc D_{m\times p}$, i.e., the LSR constraint is not explicitly imposed in this formulation. Thus, our covering number-based approach does not provide improved sample complexity results compared to the unstructured formulation. The experimental results of the regularization-based algorithm \stark~(see Section \ref{sec:experiments}) suggest there is room to improve our sample complexity result for the regularized formulation. We leave investigating such improvement as future work. Nonetheless, our results imply well-posedness of the regularized LSR-DL problem.

For the factorization-based formulation, we show that $\Omega(p^2\rho^{-2}r\sum_n m_np_n)$ samples are required for local identifiability of a dictionary of separation-rank $r$. This result matches that of our {rank-constrained} formulation stated in Theorem~\ref{thm:finite_nonconvex}.
Note that when the separation rank is $1$, this result gives a bound on the sample complexity of the KS-DL model as a special case. {To illustrate the implication of our bound, consider the case of $N=2, m_1 = m_2 = \sqrt{m}$ and $p_1 = p_2 = \sqrt{p}$. In this case, our bound scales as $p^2\sqrt{mp}$, which results in $\sqrt{mp}$ reduction in sample complexity scaling compared to the unstructured DL bound~\cite{gribonval2014sparse}.
In the case of $m_1=m,~m_2=1$ and $p_1= p,~p_2 = 1$ (unstructured DL), our bound scales as $mp^3$, which is consistent with the unstructured DL bound~\cite{gribonval2014sparse}.} Note that unlike the KS-DL analysis~\cite{shakeri2018achieve}, which shows a necessary sample complexity of $L= \max_{n\in\{1,\dots,N\}} \Omega( m_np_n^3\rho_n^{-2} )$, our analysis of the factorized model does not ensure identifiability of the true subdictionaries in the LSR-DL model. However, the results for KS-DL require the dictionary coefficient vectors to follow the separable sparsity model. In contrast, our result does not require any constraints on the sparsity pattern of coefficients.

%%%%%%%%%%%%%%%%%%%%%%%%%%%%%%%%%%%%%%%%%%%%%%%%%%%%%%%%%%%%%%%%%%%%%%%%%%%%%%%%%%%%%%%
%%%%%%%%%%%%%%%%%%%%------------------NEW SECTION------------------%%%%%%%%%%%%%%%%%%%%
%%%%%%%%%%%%%%%%%%%%%%%%%%%%%%%%%%%%%%%%%%%%%%%%%%%%%%%%%%%%%%%%%%%%%%%%%%%%%%%%%%%%%%%
\section{Computational Algorithms}\label{sec:algorithms}
In Section~\ref{sec:tractable_identif}, we showed that the tractable LSR-DL Problems~\eqref{problem:regularized} and~\eqref{problem:factorized} each have at least one local minimum close to the true dictionary. In this section we develop algorithms to find these local minima. Solving Problems \eqref{problem:regularized} and \eqref{problem:factorized} require minimization with respect to (w.r.t.) $\X\triangleq [\x_1^T,\cdots,\x_L^T]$. % Problem~\eqref{problem:regularized} is a minimization problem w.r.t. the dictionary $\D$. However, the objective function is defined only implicitly in terms of the dictionary through $\fy(\D)$. Similarly, the objective function of Problem~\eqref{problem:factorized} is defined implicitly in terms of the subdictionaries through $f^{\mathrm{fac}}_{\y}(\dknset)$.
Therefore, similar to conventional DL algorithms, we introduce alternating minimization-type algorithms that at every iteration, first perform minimization of the objective function w.r.t. $\X$ (sparse coding stage) and then minimize the objective w.r.t. the dictionary (dictionary update stage).  %Instead, we resort to a closely related biconvex problem
%
%\begin{align}\label{problem:regularized_DX}
%\min_{\substack{\D\in \mc D\\ \X\in \mb R^{p\times L}}}\nor{\m Y-\m D\m X}_F^2+ \lambda_1 \strnorm{\Dp}+\lambda \|\X\|_{\sumc}. %+ \lambda_1 \sum_{n=1}^N\trnorm{[{\m D}^{\pi}]^{(n)}}+\lambda\sum_{i=1}^L \nor{\x_i}_1.
%\end{align}
%
%Similarly, the objective function of Problem \eqref{problem:factorized} is defined implicitly in terms of the subdictionaries through $f^{\mathrm{fac}}_{\y}(\dknset)$. %Therefore, we solve the following closely related problem instead:%in form of
%
%\begin{align}\label{problem:factorized_DX}
%\min_{\substack{\{\dkn\in \alpha \ball\}\\ \X\in\mb R^{p\times L}}} \|\Y-\big(\sum_{k=1}^r\bigotimes_{n=1}^N \D^k_n\big)\X\|_F^2  +\lambda \|\X\|_{\sumc}.% \sum_{l=1}^L \nor{\v X_l}_1.
%
%&\qquad\quad\, \text{s.t.}\qquad\quad\; \sum\nolimits\bigotimes \dkn\in \mc D_{m\times p}.
%\end{align}

%In this section, we provide LSR-DL algorithms to solve problems \eqref{problem:regularized_DX} and \eqref{problem:factorized_DX}. Similar to conventional DL algorithms,
%
%Our LSR-DL algorithms employ %iterative
%alternating minimization. Every iteration of our algorithms comprises of a sparse coding stage (minimization w.r.t. $\X$) and a dictionary update stage (minimization w.r.t. $\D$).
The sparse coding stage is a simple Lasso problem and remains the same in our algorithms. % and almost all other DL algorithms.
However, the algorithms differ in their dictionary update stages, which we discuss next.

\textit{Remark.} We leave the formal convergence results of our algorithms to future work. However, we provide a discussion on challenges and possible approaches to establish convergence of our algorithms in Appendix \ref{sec:convergence}.
\subsection{\stark: A Regularization-based LSR-DL Algorithm}\label{subsec:stark}

\begin{algorithm}[t]
\caption{Dictionary Update in \stark~for LSR-DL} \label{algo:stark}
\begin{algorithmic}[1]
\REQUIRE $\Y$, $\m \Pi$,~$\lambda_1>0$,~$\gamma>0$, $\X(t)$\footnotemark
%\STATE \textbf{initialize:} $\m D(0)$,~$\m X(0)$, ,~$\widetilde{\t A}(0)$,  ~$\widetilde{\t W}(0)$
%
%\WHILE {$\nor{\m Y-\m D(\tau)\m X(\tau)}_F>\epsilon$}
%
%\STATE \textbf{Sparse coding stage:} Use OMP\mg{} to update $\m X(\tau)$.
%\STATE
%\STATE \textbf{Dictionary update stage:}
%\begin{ALC@g}
\REPEAT
        \STATE Update $\Dp$ according to update rule \eqref{Dp_update}
	\FOR{$n \in [N]$}
        \STATE  Update ${\t W}_n$ according to \eqref{W_update}
    \ENDFOR
    \FOR{$n \in [N]$}
		\STATE  ${\t A}_n\leftarrow{\t A_n}- \gamma \pr{\mc \Dp-{\t W_n}}$
	\ENDFOR
%
%\ENDWHILE}
%
%\STATE Normalize columns of $\D$
%\end{ALC@g}
%
\UNTIL{convergence}
\STATE Normalize columns of $\D$
\RETURN $\D(t+1)$
\end{algorithmic}
\end{algorithm}
\footnotetext{In the body of Algorithms \ref{algo:stark}--\ref{algo:osubdil} we drop the iteration index $t$ for simplicity.}

We first discuss an algorithm, which we term \textit{STructured dictionAry learning via Regularized low-ranK Tensor Recovery (\stark)}, that helps solve the regularized LSR-DL problem given in~\eqref{problem:regularized} and discussed in Section~\ref{sec:tractable_identif} using the Alternating Direction Method of Multipliers~(ADMM)~\cite{Boyd_ADMM_2011}.
%Similar to standard DL methods \cn, our algorithm alternates between sparse coding and dictionary update until some stopping criteria is met. To update the dictionary $\D$, our algorithm uses the Alternating Direction Method of Multipliers (ADMM) \cite{Boyd_ADMM_2011}. Next, we describe the dictionary update stage of \stark.

%%%%%%%%%%%%%%%%%%%%%%%%%%%%%%%%%%%%%%%%%%%%%%%%%%%%%%%%%%%%%%%%%%%%%%%%%%%%%%%%%%%%%%%
%\subsubsection{Structured Dictionary Update Using ADMM}
The main novelty in solving~\eqref{problem:regularized} using $g_1(\Dp) = \strnorm{\Dp}$ is the dictionary update stage. This stage, which involves updating $\D$ for a fixed set of sparse codes $\X$, is particularly challenging for gradient-based methods because the dictionary update involves interdependent nuclear norms of different unfoldings of the rearranged tensor $\Dp$. Inspired by many works in the literature on low-rank tensor estimation~
\cite{Romera-Paredes_multilinear, Gandy_2011,Wimalawarne_2014}, we instead suggest the following reformulation of the dictionary update stage of~\eqref{problem:regularized}:
\begin{align}\label{dictionary_auxiliary_terms}
&\min_{\D\in \mc D, \t W_1, \cdots, \t W_N}~\frac{1}{2}\nor{\m Y-\m D\m X}_F^2+ \lambda_1\sum_{n=1}^N\trnorm{\m W_n^{(n)}}\nonumber\\
&\qquad\,\text{s.t.} \quad\quad \forall n\quad \t W_n=\t{D}^{\pi}.
\end{align}
In this formulation, although the nuclear norms depend on one another through the introduced constraint, we can decouple the minimization problem into separate subproblems. To solve this problem, we first find a solution to the problem without the constraint $\D\in \mc D$, then project the solution onto $\mc D$ by normalizing the columns of $\D$. We adopt this approximation to avoid the complexity of solving the problem with the constraint $\D\in \mc D$. {Such approach has been used in prior works; see, e.g.,~\cite{dantas2017learning,caiafa2013multidimensional}}. We can solve the objective function \eqref{dictionary_auxiliary_terms} (without $\D\in \mc D$ constraint) using ADMM, which involves decoupling the problem into independent subproblems by forming the following augmented Lagrangian:
\begin{align}\label{augmented_lagrange_function}
&\mc{L}_{\gamma}= \frac{1}{2}\nor{\m Y-\m D\m X}_F^2+ \sum_{n=1}^N \Big(\lambda_1\trnorm{\m W_n^{(n)}}\nonumber\\
& \qquad\qquad-\ip{\t A_n}{~\Dp-\t W_n}+\frac{\gamma}{2} \nor{~\Dp-\t W_n}_F^2 \Big),
\end{align}
where $\mc{L}_{\gamma}$ is shorthand for $\mc{L}_{\gamma}(\Dp, \{ \t W_n\}, \{ \t A_n\})$.
%where $\widetilde {W}=\{\t W_1, \cdots,~ \t W_N\}$ and $\widetilde {A}=\{\t A_1, \cdots,~ \t A_N\}$.
In order to find the gradient of \eqref{augmented_lagrange_function} with respect to $\Dp$, we rewrite the Lagrangian function in the following form:
\begin{align*}%\label{augmented_lagrange_function_vector}
&\mc{L}_{\gamma}= \frac{1}{2}\nor{\v y-\mc{T}(\t D^{\pi})}_2^2+ \sum_{n=1}^N  \Big( \lambda_1 \trnorm{\m W_n^{(n)}}\nonumber\\
&\qquad  \qquad- \ip{\t A_n}{~\Dp-\t W_n}+\frac{\gamma}{2} \nor{~\Dp-\t W_n}_F^2 \Big).
\end{align*}
Here, $\y\triangleq \vect(\Y)$ (not to be confused with our earlier use of $\y$ for $\vect(\t Y$)) and the linear operator $\mc T(\Dp)\triangleq \vect(\D\X)= \widetilde{\m X}^T \m \Pi^T \vect(\Dp)$, where $\widetilde{\m X}=\m X \otimes \m I_m $ and $\m \Pi$ is a permutation
matrix such that $\vect(\Dp)=\m \Pi \vect(\D)$. The procedure to find $\m\Pi$ is explained in Appendix~\ref{rearrangement_section}.
%Note that $\m \Pi^T=\m \Pi^{-1}$.% since any permutation matrix is unitary.
%In the rest of this section, we discuss the derivation of the update steps of ADMM.

\textbf{ADMM Update Rules:}\label{ADMM_update}
Each iteration $\tau$ of ADMM consists of the following steps~\cite{Boyd_ADMM_2011}:
%
%\begin{align}
%&\t D^{\pi}(\tau)=\argmin\limits_{\t D^{\pi}} \mc{L}_{\gamma}  (\t D^{\pi}, \t W_n(\tau-1), \t A_n(\tau-1)),\label{D}\\
%
%&\t W_n(\tau)=\argmin\limits_{\t W_n} \mc{L}_{\gamma}  (\t D^{\pi}(\tau), \t W_n, \t A_n(\tau-1)),~\forall n\in [N],\label{W}\\
%
%&\t A_n(\tau)=\t A_n(\tau-1)- \gamma \pr{\Dp(\tau)-\t W_n(\tau)},~\forall~n\in [N].\label{A}
%\end{align}
\begin{align}
&\t D^{\pi}(\tau)=\argmin\limits_{\t D^{\pi}} \mc{L}_{\gamma}  (\t D^{\pi}, \t W_n(\tau-1), \t A_n(\tau-1)),\label{D}\\
&\t W_n(\tau)=\argmin\limits_{\t W_n} \mc{L}_{\gamma}  (\t D^{\pi}(\tau), \t W_n, \t A_n(\tau-1)),\label{W}\\
&\t A_n(\tau)=\t A_n(\tau-1)- \gamma \pr{\Dp(\tau)-\t W_n(\tau)},\label{A}
\end{align}
for all $n \in [N]$.
%
%where  $\mc H=[\mc I, \cdots,\mc I]^T$ is the vertical concatenation of $N$ instances of $\mc I$, the identity operator on $\in \mb{R}^{m_1p_1 \times  \cdots \times m_N p_N}$. %=\br{\underbrace{\t I,~ \t I, \cdots,~\t I}_{N~\text{times}}}^T$.
 %Note that  the second and the third steps each are separable into $N$ inner steps.
% The first two subproblems, i.e. \eqref{D} and \eqref{W}, have closed form solutios. The update step for $\widetilde{\t A}$ is merely a gradient descent step. It is well known ~\cn~ that
%
The solution to \eqref{D} can be obtained by taking the gradient of $\mc{L}_{\gamma}(\cdot)$ w.r.t. $\t D^{\pi}$ and setting it to zero. Suppressing the iteration index $\tau$ for ease of notation, we have
\begin{align*}
\frac{\partial \mc{L_{\gamma}}}{\partial \Dp} = \mc{T}^*(\mc{T}(\Dp)-\v y)- \sum_{n=1}^N \t A_n +\sum_{n=1}^N \gamma \pr{\Dp-\t W_n},
\end{align*}
where $ \mc{T}^*(\v v)=\vect^{-1}\big(\m\Pi\widetilde{\X}\v v\big)$ is the \textit{adjoint} of the linear operator $\mc{T}$~\cite{Gandy_2011}. Setting the gradient to zero results in
%%
%%
%%
%\begin{align}
%&\mc{T}^*(\mc{T}(\Dp)) +\gamma N ~\Dp=\mc{T}^*(\v y) + \sum_{n=1}^N  \pr{\t A_n+\gamma \t W_n}\nonumber\\
%%
%&\Leftrightarrow \vect^{-1}\pr{\br{\m \Pi^T \widetilde{\m X} \widetilde{\m X}^T \m \Pi +\gamma N  \m I}\vect(\Dp)}\nonumber\\
%%
%&\qquad \qquad\quad =\vect^{-1}(\m \Pi^T \widetilde{\m X} \v y)+ \sum_{n=1}^N  \pr{\t A_n+\gamma \t W_n}.
%\end{align}
\begin{align*}
&\mc{T}^*(\mc{T}(\Dp)) +\gamma N ~\Dp=\mc{T}^*(\v y) + \sum_{n=1}^N  \pr{\t A_n+\gamma \t W_n}.
\end{align*}
Equivalently, we have
\begin{align}
&\vect^{-1}\pr{\br{\m \Pi \widetilde{\X} \widetilde{\X}^T \m \Pi^T +\gamma N  \m I}\vect(\Dp)}\nonumber\\
&\qquad \qquad\quad =\vect^{-1}(\m \Pi \widetilde{\m X} \v y)+ \sum_{n=1}^N  \pr{\t A_n+\gamma \t W_n}.
\end{align}
Therefore, suppressing the index $\tau$, the update rule for $\Dp$ is
\begin{align}\label{Dp_update}
\Dp%(\tau)
=& \vect^{-1}\Big(\br{\m \Pi^{T} \widetilde{\m X} \widetilde{\m X}^T \m \Pi +\gamma N  \m I_{mp}}^{-1} \nonumber\\
&\cdot \Big[\m \Pi^{T} \widetilde{\m X} \v y+ \vect\Big(\sum_{n=1}^N  \pr{\t A_n%(\tau-1)
+\gamma\t W_n%(\tau-1)
}\Big)\Big]\Big).
\end{align}
To update $\{\t W_n\}$, we can further split~\eqref{W} into $N$ independent subproblems % in the form of
(suppressing the index $\tau$):
\begin{align}
\min_{\t W_n} ~\mc{L_W}=&\lambda_1 \trnorm{\m W_n^{(n)}} - \ip{\t A_n}{~\Dp-\t W_n}\nonumber\\
&+\frac{\gamma}{2} \nor{~\Dp-\t W_n}_F^2. \nonumber
\end{align}
%
%The objective functions of this subproblems can be reformulated as
We can reformulate $\mc{L_W}$ as
\begin{align*}%\label{W_objective}
\mc{L_W}=
    &\lambda_1 \trnorm{\m W_n^{(n)}}
        +\frac{\gamma}{2} \Big\| \m W_n^{(n)}-\Big([\Dp]^{(n)}-\frac{\m A_n^{(n)}}{\gamma}\Big)\Big\|_F^2 \nonumber\\
&+\const
\end{align*}
The minimizer of $\mc{L_W}$ with respect to $\m W_n^{(n)}$ %objective function \eqref{W_objective}
is
%\begin{align}
$
\shrink\pr{[\m D^{\pi}]^{(n)}-\frac{1}{\gamma} \m A_n^{(n)},~\frac{\lambda_1}{ \gamma}}
$
%\end{align}
%
where %the shrinkage operator
$\shrink(\m A, z)$ applies soft thresholding at level $z$ on the singular values of matrix $\m A$~\cite{cai_2010_svt}. %If the singular decomposition is $\m A = \sum_{n=1}^r \sigma_i\v u_i \v v_i$, then
%\[\shrink(\m A, \tau)=\sum\nolimits_{n=1}^r \mathbbm 1(\sigma_i\geq \tau)\sigma_i\v u_i \v v_i\]
%where $\mathbbm 1(\cdot)$ is the indicator function.
%%
Therefore, suppressing the index $\tau$,
\begin{align}\label{W_update}
\t W_n%(\tau)
=&\mathrm{refold}\Big( \shrink\big([\m D^{\pi}]^{(n)}%(\tau)\nonumber\\
%
%&\qquad\qquad \qquad
-\frac{1}{\gamma}\m A^{(n)}_n%(\tau-1)
,~\frac{\lambda_1}{ \gamma}\big)\Big),
\end{align}
where $\refold(\cdot)$ is the inverse of the unfolding operator. Algorithm~\ref{algo:stark} summarizes this discussion and provides pseudocode for the dictionary update stage in \stark.

\subsection{\CPbased: A Factorization-based LSR-DL Algorithm} \label{subsec:tefdil}

While our experiments in Section~\ref{sec:experiments} validate good performance of \stark, the algorithm finds the dictionary $\D\in \mb R^{m \times p}$ and not the {subdictionaries} $\{\D_n\in \mb R^{m_n\times p_n}\}_{n=1}^N$. Moreover, \stark~only allows indirect control over the separation rank of the dictionary through the regularization parameter $\lambda_1$. This motivates developing a factorization-based LSR-DL algorithm that can find the subdictionaries and allows for direct tuning of the separation rank to control the number of parameters of the model.  %its complexity does not scale well with \textit{data dimensions} due to its requirement for multiple singular value decompositions (SVDs) in each dictionary update step. This motivates developing algorithms that have better scalability in terms of learning of LSR dictionaries from high dimensional data. 
To this end, we propose a factorization-based LSR-DL algorithm termed \textit{Tensor Factorization-Based DL (\CPbased)} in this section for solving Problem \eqref{problem:factorized}.

We discussed earlier in Section \ref{subsec:stark} that the error term $\|\Y-\m D\X\|_F^2$ can be reformulated as $\|\y-\mc T(\Dp)\|^2$ where $\mc T(\Dp)=\widetilde{\X}^T \m \Pi^T\vect(\Dp)$. Thus, the dictionary update objective in~\eqref{problem:factorized} can be reformulated as $\|\y-\mc T(\sum_{k=1}^r \v d^k_N\circ\cdots\circ \v d^k_1)\|^2$ where $\v d^k_n\triangleq \vect(\dkn)$. To avoid the complexity of solving this problem, we resort to first obtaining an inexact solution by minimizing $\|\y-\mc T(\t A)\|^2$ over $\t A$ and then enforcing the low-rank structure by finding the rank-$r$ approximation of the minimizer of $\|\y-\mc T(\t A)\|^2$.  \CPbased~employs CP decomposition (CPD) to find this approximation and thus enforce LSR structure on the updated dictionary.

%\begin{align}\label{objective_tefdil}
%\|\Y-\m D\X\|_F^2&=\|\y-\vect(\D\X)\|^2\nonumber\\
%%
%%&=\|\y-(\X^T\otimes \m I_m) \vect(\D)\|^2\nonumber\\
%%
%&=\|\y-\widetilde{\X}^T \m \Pi^{-1}\vect(\Dp)\|^2,
%\end{align}
%where $\widetilde{\m X}=\m X \otimes \m I_m $ and $\m \Pi$ is a permutation matrix such that $\vect(\Dp)=\m \Pi \vect(\D)$. Note that $\m \Pi^T=\m \Pi^{-1}$. %since any permutation matrix is unitary. 

Assuming the matrix of sparse codes $\m X$ is full row-rank\footnote{In our experiments, we add $\delta\m I$ to $\X\X^T$ with a small $\delta>0$ at every iteration to ensure full-rankness.}, then $\widetilde{\m X}^T$ is full column-rank and $\t A=\mc T^+(\y)=\vect^{-1}\big(\m \Pi\big(\widetilde{\X}\widetilde{\X}^T\big)^{-1} \widetilde{\X} \y\big)$ minimizes $\|\y-\mc T(\t A)\|^2$. Now, it remains to solve the following problem to update $\{\v d^k_n\}$:
\begin{align*}%\label{tefdil_problem}
&\min_{\{\v d^k_n\}}~ \big\|\sum_{k=1}^r \v d^k_N\circ\cdots\circ \v d^k_1-\mc T^{+}(\v y)\big\|_F^2 .
\end{align*} 
%
%This indicates that $\Dp$ can be obtained by finding a tensor-rank-$r$ approximation of $\mc T^{+}(\v y)$. Note that
Although finding the best rank-$r$ approximation ($r$-term CPD) of a tensor is ill-defined in general~\cite{DeSilva2008illposed}, various numerical algorithms exist in the tensor recovery literature to find a ``good'' rank-$r$ approximation of a tensor~\cite{kolda_tensor,DeSilva2008illposed}. \CPbased~can employ any of these algorithms to find the $r$-term CPD, denoted by $\mathrm{CPD}_r(\cdot)$, of $\mc T^{+}(\v y)$. At the end of each dictionary update stage, the columns of $\D=\sum\bigotimes \dkn$ are normalized.
%We propose the \textit{Tensor Factorization-based Dictionary Learning (TeFDiL)} algorithm to solve~\eqref{problem:factorized_DX}. In the dictionary update stage, TeFDiL updates $\Dp$ by assigning to it the $r$-term CPD, denoted by $\mathrm{CPD}_r(\cdot)$, of $\mc T^{+}(\v y)$. Then, the dictionary $\D$ is reconstructed by rearranging $\Dp$ through $\D=\vect^{-1}\pr{\m \Pi^T\vect(\Dp)}$. %The final step is to normalize the columns of $\D$.
Algorithm \ref{algo:tefdil} describes the dictionary update step of TeFDiL.

\begin{algorithm}[thb]
\caption{Dictionary Update in \CPbased~for LSR-DL} \label{algo:tefdil}
\begin{algorithmic}[1]
\REQUIRE $\Y$, $\X(t)$, $\m \Pi$, $r$%,~$s>0$,~$\lambda>0$,~$\gamma>0$
%\STATE \textbf{initialize:} $\{\v d^k_n(0)\}$,~$\m X(0)$
%
%\STATE
%
%\WHILE {$\nor{\m Y-\m D(\tau)\m X(\tau)}_F>\epsilon$}
%
%\STATE \textbf{Sparse coding stage:} Use a lasso algorithm to update $\m X(\tau)$.
%\STATE
%\STATE \textbf{Dictionary update stage:}
%\begin{ALC@g}
%\STATE Construct $\widetilde{\m X}=\m X \otimes \m I_m $.
\STATE Construct $\mc T^+(\m y)=\vect^{-1}\big(\m \Pi\big(\widetilde{\X}\widetilde{\X}^T\big)^{-1} \widetilde{\X} \y\big)$
%\STATE Update $\Dp=\sum_{k=1}^r \v d^k_N\circ\cdots\circ \v d^k_1$ using the $r$-term CPD of $\mc T^+(\m y)$.
\STATE $\Dp\leftarrow \mathrm{CPD}_r(\mc T^{+}(\y))$
\STATE $\D\leftarrow\vect^{-1}\pr{\m \Pi^T\vect(\Dp)}$
\STATE Normalize columns of $\D$	
%\end{ALC@g}
%
%\ENDWHILE

\RETURN $\D(t+1)$
\end{algorithmic}
\end{algorithm}

%%%%%%%%%%%%%%%%%%%%%%%%%%%%%%%%%%%%%%%%%%%%%%%%%%%%%%%%%%%%%%%%%%%%%%%%%%%%%%%%%%%%%%%
%Note that when $\X$ is full-row-rank, the rank-$r$ solution to the dictionary update stage, is the best rank-$r$ approximation of the unconstrained solution to the problem. This is not necessarily the case when $\X$ is rank deficient. In general, the solution of a constrained problem is not necessarily the projection of the solution of the unconstrained problem on the constraint set. Here, when $\X$ is full rank, it is a special case since the problem actually becomes finding the best rank-$r$ approximation of $\mc T^{-1}(\v y)$.

%In this tensor-factorization-based structured DL algorithm, we essentially project the dictionary in every step onto the space of KS (LSR) matrices\mg{is it?}.

\vspace{-\baselineskip}
\subsection{O\subDL: An Online LSR-DL Algorithm}
Both \stark~and \CPbased~are batch methods in that they use the entire dataset for DL in every iteration. This makes them less scalable with the size of datasets due to high memory and per iteration computational cost and also makes them unsuitable for streaming data settings. 
To overcome these limitations, we now propose an online LSR-DL algorithm termed \textit{Online SubDictionary Learning for structured DL (O\subDL)} that uses only a single data sample (or a small mini-batch) in every iteration (see Algorithm~\ref{algo:osubdil}). %\mg{\textit{online \subDL} can be thought of as the structured variant of the online DL algorithm by Mairal et al. \cite{mairal2010online}.}
This algorithm has better memory efficiency as it removes the need for storing all data points and has significantly lower per-iteration computational complexity. In \textit{O\subDL}, once a new sample $\t Y({t+1})$ arrives, its sparse representation $\t X({t+1})$ is found 
using the current dictionary estimate $\D(t)$ and then the dictionary is updated  using $\t Y({t+1})$ and $\t X({t+1})$. 
%We now discuss the dictionary update stage for O\subDL.
%
%\paragraph{Online \subDL:}
%
%Our first online algorithm is based on \subDL~. 
The dictionary update stage objective function after receiving the $T$-th sample is
%
%\vspace{-0.4\baselineskip}
\begin{align*}%\label{sum_kron_factor_DX_online}
J_T(\dknset)=\frac{1}{T} \sum\nolimits_{t=1}^T\|\y(t)-\big(\sum\nolimits_{k=1}^r\bigotimes\nolimits_{n=1}^N \D^k_n\big)\x(t)\|^2 . 
\end{align*}
%
%\begin{align}%\label{sum_kron_tensor_objective}
%\min_{\dkn} \frac{1}{T}\sum_{t=1}^T\Big\|\t Y_t- \sum_{k=1}^r\t X_t \times_1 \m D^k_{1} \times_2 \m D^k_{2} \times_3 \cdots \times_N \m D^k_{N}\Big\|^2.
%\end{align}
We can rewrite this objective as 
\begin{align}%\label{sum_kron_unfolding_objective}
 J_T&=\sum\nolimits_{t=1}^T\|\Y^{(n)}(t)- \sum\nolimits_{k=1}^r \dkn \X^{(n)}(t) \m C_{n}^k(t)\|_F^2\nonumber\\
&=\sum\nolimits_{t=1}^T\|\widehat\Y^{(n)}(t)-\dkn \X^{(n)}(t) \m C_{n}^k(t)\|_F^2\nonumber\\
&= \tr\pr{[\dkn]^T\dkn \m A^k_{n}(t)}-2\tr\pr{[\dkn]^T \m B^k_{n}(t) }+\const,\nonumber
\end{align}
where, dropping the iteration index $t$, the matrix $\m C^k_{n}\triangleq \pr{\D^k_{N}\otimes \cdots \otimes\D^k_{n+1} \otimes\D^k_{n-1}\cdots\otimes\D^k_{1}}^T$ and the estimate $\widehat{\Y}^{(n)}\triangleq\Y^{(n)}- \sum_{\substack{i=1\\ i\neq k}}^r \D^i_{n} \X^{(n)} \m C_{n}^i$. We can further define the matrices 
$\m A^k_{n}(t)\triangleq \sum_{\tau=1}^t\X^{(n)}(t) \m C_{n}^k(\tau)[\m C_{n}^k(\tau)]^T [\X^{(n)}(\tau)]^T$ %\in \mb R^{p_n\times p_n}$ 
and
$\m B^k_{n}(t)\triangleq\sum_{\tau=1}^t\widehat\Y^{(n)}(\tau) [\m C_{n}^k(\tau)]^T [\X^{(n)}(\tau)]^T$. % \in \mb R^{m_n\times p_n}$.
%
%The online \subDL~algorithm alternates between minimizing  $J_T$ with respect to $\dkn$ for all $n\in {N}$ and $k\in [r]$. 
To minimize $J_T$ with respect to each $\dkn$, we take a similar approach as in Mairal et al.~\cite{mairal2010online} and use a (block) coordinate descent algorithm with warm start to update the columns of $\dkn$ in a cyclic manner. Algorithm \ref{algo:osubdil} describes the dictionary update step of O\subDL.%This procedure is described in Algorithm \ref{omline_subDL_algorithm_dictionary}.
%More specifically, in the dictionary update stage at time $t$, for each subdictionary $\dkn$, matrices $\Y^{(n)}(t)$ and $\m C_{n}^k(t)$ are constructed and $\m A^k_{n}(t)$ and $\m B^k_{n}(t)$ are updated according to $\m A^k_{n}(t+1)=\m A^k_{n}(t)+ \X^{(n)}(t) \m C_{n}^k[\m C_{n}^k(t)^T [\X^{(n)}(t)]^T$ and $\m B^k_{n}(t+1)=\m B^k_{n}(t)+ \widehat\Y^{(n)}(t) [\m C_{n}^k(t)]^T [\X^{(n)}(t)]^T$, respectively. Then, the algorithm updates every column $j$ of $\dkn$ according to $[\dkn]_j=\frac{1}{[\m A^k_n]_{jj}} ([\m B^k_n]_j-\dkn [\m A^k_n]_j)+[\dkn]_j$ and then $[\dkn]_j$ is normalized.
%The dictionary update stage of O\subDL~is described in Algorithm \ref{algo:osubdil}. 
%
%

\begin{algorithm}[thb]
\caption{Dictionary Update in O\subDL~for LSR-DL} \label{algo:osubdil}
\begin{algorithmic}[1]
\REQUIRE $\t Y(t)$, $\{\m D^k_{n}(t)\}$, $\m A^k_{n}(t)$, $\m B^k_{n}(t)$, $\t X(t)$%,$\lambda$,
%\STATE \textbf{initialize:} $\m A^k_{n,t}=0$, $\m B^k_{n,t}=0$
%
%\STATE
%
%\FOR {$t=1,\cdots,T$}
%
%\STATE \textbf{Sparse coding stage:} Solve Problem \eqref{sparse_coding_online}
%\STATE \textbf{Dictionary update stage:}
%\begin{ALC@g}

\FORALL{$k \in [r]$}
	\FORALL{$n \in [N]$}
    	\STATE $\m C^k_{n}\leftarrow \pr{\D^k_{N}\otimes \cdots \otimes\D^k_{n+1} \otimes\D^k_{n-1}\cdots\otimes\D^k_{1}}^T$%\mg{needed here? we have to do it for all n and k anyway, so better to be here}
        \STATE $\widehat\Y^{(n)}\leftarrow\Y^{(n)}- \sum_{\substack{i=1\\ i\neq k}}^r \D^i_{n} \X^{(n)} \m C_{n}^i$
    	\STATE $\m A^k_{n}\leftarrow\m A^k_{n}+ \X^{(n)} \m C_{n}^k[\m C_{n}^k]^T [\X^{(n)}]^T$
		\STATE $\m B^k_{n}\leftarrow\m B^k_{n}+ \widehat\Y^{(n)} [\m C_{n}^k]^T [\X^{(n)}]^T$
        %
 		%\STATE  $\m D^k_{n}=\argmin\limits_{\D} \tr\pr{\D^T\dkn \m A^k_{n}-2\D^T \m B^k_{n} }$ 
 		
 		%\REPEAT
%
\FOR {$j=1,\cdots,p_n$}
	\STATE $[\dkn]_j\leftarrow\frac{1}{[\m A^k_n]_{jj}} ([\m B^k_n]_j-\dkn [\m A^k_n]_j)+[\dkn]_j$
	%
%    \STATE $[\dkn]_j\leftarrow[\dkn]_j/\nor{[\dkn]_j}_2$
\ENDFOR
%\UNTIL convergence of $\dkn$
		
%		\WHILE {}
%%
%			\FOR {$j=1,\cdots,p_n$}
%%
%			\STATE $[\dkn]_j=\frac{1}{[\m A^k_n]_{jj}} ([\m B^k_n]_j-\dkn [\m A^k_n]_j)+[\dkn]_j$
%  		 	 \STATE \mg{normalize}
%%
%			\ENDFOR
%		\ENDWHILE

        %
    \ENDFOR		
\ENDFOR	
\STATE Normalize columns of $\D=\sum_{n=1}^r\bigotimes_{n=1}^N\dkn$
%\end{ALC@g}
%
%\ENDFOR

\RETURN $\{\dkn(t+1)\}$
\end{algorithmic}
\end{algorithm}
\vspace{-0.5\baselineskip}

%%%%%%%%%%%%%%%%%%%%%%%%%%%%%%%%%%%%%%%%%%%%%%%%%%%%%%%%%%%%%%%%%%%%%%%%%%%%%%%%%%%%%%%
%\paragraph{Online \CPbased:} 
%
%Inspired by the \CPbased~method, the second online method that we propose takes advantage of the rearrangement idea and CPD to enforce LSR structure on the dictionary. In every dictionary update stage of this method, $\D$ is first updated using an online unstructured DL algorithm such as the online DL method by Mairal et al. or recursive least squares DL algorithm (RLS-DLA) \cite{skretting2010recursive}. Then, to obtain a separation rank-$r$ approximation of $\D$, $r$-term CPD is performed on its rearrangement $\Dp$. The next step is to reconstruct $\D$ from $\Dp$ through $\D=\vect^{-1}\pr{\m \Pi^T\vect(\Dp)}$. The last step is to normalize the columns of $\D$.
%See Algorithm \ref{online_tefdil_algorithm} for the psuedocode of this procedure.

%%%%%%%%%%%%%%%%%%%%%%%%%%%%%%%%%%%%%%%%%%%%%%%%%%%%%%%%%%%%%%%%%%%%%%%%%%%%%%%%%%%%%%%
%%%%%%%%%%%%%%%%%%%%------------------NEW SECTION------------------%%%%%%%%%%%%%%%%%%%%
%%%%%%%%%%%%%%%%%%%%%%%%%%%%%%%%%%%%%%%%%%%%%%%%%%%%%%%%%%%%%%%%%%%%%%%%%%%%%%%%%%%%%%%

\section{Numerical Experiments}\label{sec:experiments}

We evaluate our algorithms on synthetic and real-world datasets to understand the impact of training set size and noise level on the performance of LSR-DL. In particular, we want to understand the effect of exploiting additional structure in representation accuracy and denoising performance. We compare the performance of our proposed algorithms with existing DL algorithms in each scenario and show that in almost every case our proposed LSR-DL algorithms outperform $K$-SVD~\cite{aharon2006img}.  %In both low noise and high noise regimes we observe that \CPbased~outperforms $K$-SVD while having considerably fewer parameters. %(two orders of magnitude).
%\mg{Our regularization-based algorithm \stark~also outperforms $K$-SVD when the noise level is high}.
Our results also offer insights into how the size and quality of training data can affect the choice of the proper DL model.
%Specifically, our experiments on the \CPbased~algorithm show that when noise level in data is high, it is able to outperform unstructured DL algorithm $K$-SVD while having considerably fewer parameters (two orders of magnitude). Moreover, in low noise regimes,~\CPbased~outperforms $K$-SVD when the separation rank is large enough with number of parameters that is still one order of magnitude smaller than that of $K$-SVD.
Specifically, our experiments on image denoising show that when the noise level in data is high, \CPbased~ performs best when the separation rank is $1$ but in low noise regimes, its performance improves as we increase the separation rank.  Furthermore, our synthetic experiments confirm that when the true underlying dictionary follows the KS (LSR) structure, our structured algorithms clearly outperform $K$-SVD, especially when the number of training samples is very small. This implies that our algorithms should perform well in applications where the true dictionary is close to being LSR-structured.

\textit{Remark.} In all our experiments, hyperparameters $\lambda_1$ and $\gamma$ (for \stark), $r$ (for \subDL), and regularization parameter for sparsity, $\lambda$, have been selected using cross-validation on each training dataset based on representation error. The only exception is for $r$ in cases where we specify its value in the KS-DL experiments ($r=1$) in Table~\ref{table:comparison} and in the \CPbased~experiments reported in Table~\ref{table:rank}.
%divide our experiments into two parts;  in the first set, we study the  reconstruction  performance  of  various  DL  algorithms on synthetic data. % with different training set sizes.
%In the second set, we compare the performance of different algorithms for denoising real-world images.

%%%%%%%%%%%%%%%%%%%%%%%%%%%%%%%%%%%%%%%%%%%%%%%%%%%%%%%%%%%%%%%%%%%%%%%%%%%%%%%%
\begin{table*}[htbp]
\caption{Performance of DL algorithms for image denoising in terms of PSNR}
\vspace{-0.4\baselineskip}
\label{table:comparison}
\centering
\begin{tabular}{|l|l|c||c|c|c||c|c|c|c|}
\cline{3-9}
\multicolumn{2}{c|}{} & Unstructured & \multicolumn{3}{|c||}{KS-DL ($r=1$)}&\multicolumn{3}{|c|}{LSR-DL ($r>1$)}\\
\hline
Image & Noise& $K$-SVD~\cite{aharon2006img} & SeDiL~\cite{hawe2013separable} & BCD~\cite{caiafa2013multidimensional} & \CPbased  & BCD & STARK  & \CPbased\\
\hline
\multirow{2}{*}{\texttt{House}}
%& $\sigma = 5$ & 38.1504 & 33.8571 & 22.0939 & 32.2886 & 29.3527  & 32.6654 &   29.6100 \\
& $\sigma = 10$ & 35.6697 & 23.1895 &   31.6089 & 36.2955 & 32.2952 & 33.4002  & 37.1275  \\
%& $\sigma = 30$ & 29.3550 & -- &  29.5367 &  26.4252  & -- &   \textbf{30.8566}   \\
& $\sigma = 50$ & 25.4684 & 23.6916 &   24.8303 & 27.5412 & 21.6128  & 27.3945 & 26.5905 \\
\hline
\multirow{2}{*}{\texttt{Castle}}
& $\sigma = 10$ & 33.0910  & 23.6955   & 32.7592 & 34.5031 &  30.3561 &  37.0428 &  35.1000  \\
& $\sigma = 50$ & 22.4184 & 23.2658  & 22.3065 & 24.6670  &
20.4414 &  24.4965  & 23.3372  \\
\hline
\multirow{2}{*}{\texttt{Mushroom}}
& $\sigma = 10$ & 34.4957 & 25.8137& 33.2797 & 36.5382  &  32.2098 & 36.9443 &  37.7016  \\
& $\sigma = 50$ & 22.5495 & 22.9464 & 22.8554 & 22.9284 & 21.7792 & 25.1081 & 22.8374 \\
\hline
\multirow{2}{*}{\texttt{Lena}}
%& $\sigma = 5$ & 35.8415 & 34.3491 &  23.0652 & 32.9010  & 30.6180 & 32.0935 &  30.9341 \\
& $\sigma = 10$ & 33.2690  &  23.6605 &  30.9575 &  34.8854 & 31.1309 & 33.8813 & 35.3009\\
%& $\sigma = 30$ & 26.4836 & -- &  28.3723 &  25.5516  & -- &   \textbf{29.5750}   \\
& $\sigma = 50$ & 22.5070 &  23.4207 &  21.6985 & 23.4988 & 19.5989 & 24.8211 & 23.1658 \\
\hline
\end{tabular}
%\vspace{-0.6\baselineskip}
\end{table*}
%\vspace{-0.3\baselineskip}
\begin{table*}[!ht]
\caption{Performance of TeFDiL with various ranks for image denoising in terms of PSNR}
\vspace{-0.4\baselineskip}
\label{table:rank}
\centering
\begin{tabular}{|l|l|c|c|c|c|c||c|c|}
\hline
Image & Noise& $r=1$ &  $r=4$ & $r=8$ & $r=16$ & $r=32$ & $K$-SVD\\
\hline
\multirow{2}{*}{\texttt{Mushroom}}
& $\sigma = 10$ & 36.5382  & 36.7538 &  37.4173  & 37.4906 & 37.7016  & 34.4957\\
& $\sigma = 50$ & 22.9284 &  22.8352 & 22.8384 & 22.8419 & 22.8374  & 22.5495\\
\hline
\multicolumn{2}{|l|}{Number of parameters} & 265  & 1060 & 2120 & 4240 & 8480 & 147456\\
\hline
\end{tabular}
%\vspace{-0.2\baselineskip}
\end{table*}

\textbf{Synthetic Experiments:} We compare  our algorithms to $K$-SVD (standard DL) as well as a simple block coordinate descent (BCD) algorithm that alternates between updating every subdictionary in problem \eqref{problem:factorized}. This BCD algorithm can be interpreted as an extension of the KS-DL algorithm~\cite{caiafa2013multidimensional} for the LSR model.
We show how structured DL algorithms outperform the unstructured algorithm $K$-SVD~\cite{aharon2006img} when the underlying dictionary is structured, especially when the training set is small.
We focus on 3rd-order tensor data and we randomly generate a KS dictionary $\D = \D_1 \otimes \m D_2 \otimes \m D_3$ with dimensions $\v m=[2,5,3]$ and  $\v p =[4,10,5]$. We select i.i.d samples from the standard Gaussian distribution, $\mc{N}(0,1)$, for the subdictionary elements, and then normalize the columns of the subdictionaries. To generate $\x$, we select the locations of $s=5$ nonzero elements uniformly at random. The values of those elements are sampled i.i.d.~from $\mc N(0,1)$. We assume observations are generated according to $\m y = \m D \m x$.
In the initialization stage of the algorithms, $\D$ is initialized using random columns of $\m Y$ for $K$-SVD and random columns of the unfoldings of $\m Y$ for the structured DL algorithms. Sparse coding is performed using OMP%\cite{Pati_93_omp, tropp_2007_omp}
\cite{Pati_93_omp}.
Due to the invariance of DL to column permutations in the dictionary, we choose reconstruction error as the performance criteria.
%Figure~\ref{Synthetic_plot_2d} shows the reconstruction errors of the algorithms against training sample sizes. %$L = \{100,500,1000,5000,10000\}$ and test sample size $L'=1000$.
%\footnote{Given test data $\m Y_{test} \in \mathbb{R}^{m\times L'}$, the reconstruction error is $\|\m Y_{test} - \m D \m X_{test}\|_F^2/L'$.}
For $L=100$, $K$-SVD cannot be used since $p>L$.
Reconstruction errors are plotted in Figure~\ref{Synthetic_plot_3d}. It can be seen that for small number of samples, \CPbased~outperforms all three algorithms BCD, $K$-SVD, and \stark. As more samples become available, both \CPbased~and \stark~(the proposed algorithms) outperform BCD and $K$-SVD.
%%%%%Synthetic experiments for sum of KS
%Finally, we evaluate the performance of the algorithms that can be used to learn sum of KS dictionaries. These include \subDL, \stark, and \CPbased. We assume that the generating dictionary $\m D = \m D_1^1 \otimes \m D_2^1 \otimes \m D_3^1 + \m D_1^2 \otimes \m D_2^2 \otimes \m D_3^2$ with dimensions $m_1=2, m_2 = 5, m_3=5$ and  $p_1=4,p_2=10,p_3 = 5$. The setting is similar to the above experiments. Here, we explicitly assume that $r=2$ for \subDL~and \CPbased. \stark~does not require such an assumption. The reconstruction errors are demonstrated in Figure~\ref{Synthetic_plot_sum2}. Here, TeFDiL once again outperforms all algorithms for all training sample sizes.

\begin{figure}[!ht]
\centering
%\begin{subfigure}{0.24\textwidth}
% \includegraphics[width=\textwidth]{}
% \caption{}
 %\label{Synthetic_plot_2d}
% \end{subfigure}
% \hspace{-0.2cm}
 \begin{subfigure}{0.24\textwidth}
 \includegraphics[width=\textwidth]{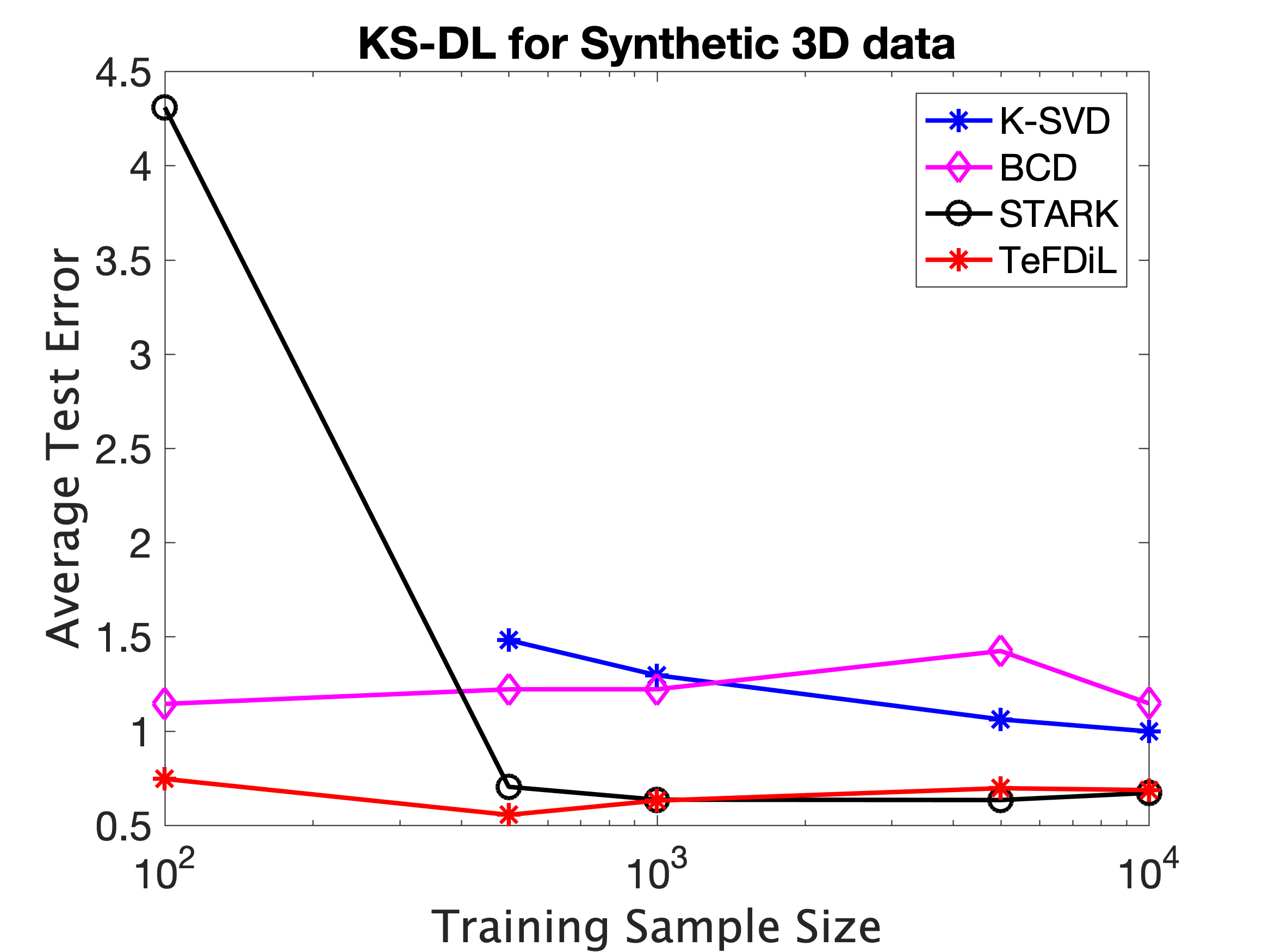}
 \caption{}
 \label{Synthetic_plot_3d}
 \end{subfigure}
 %\begin{subfigure}{0.24\textwidth}
 %\includegraphics[width=\textwidth]{}
 %\caption{}
 %\label{real_plot_peppers}
% \end{subfigure}
%  \hspace{-0.2cm}
 \begin{subfigure}{0.24\textwidth}
 \includegraphics[width=\textwidth]{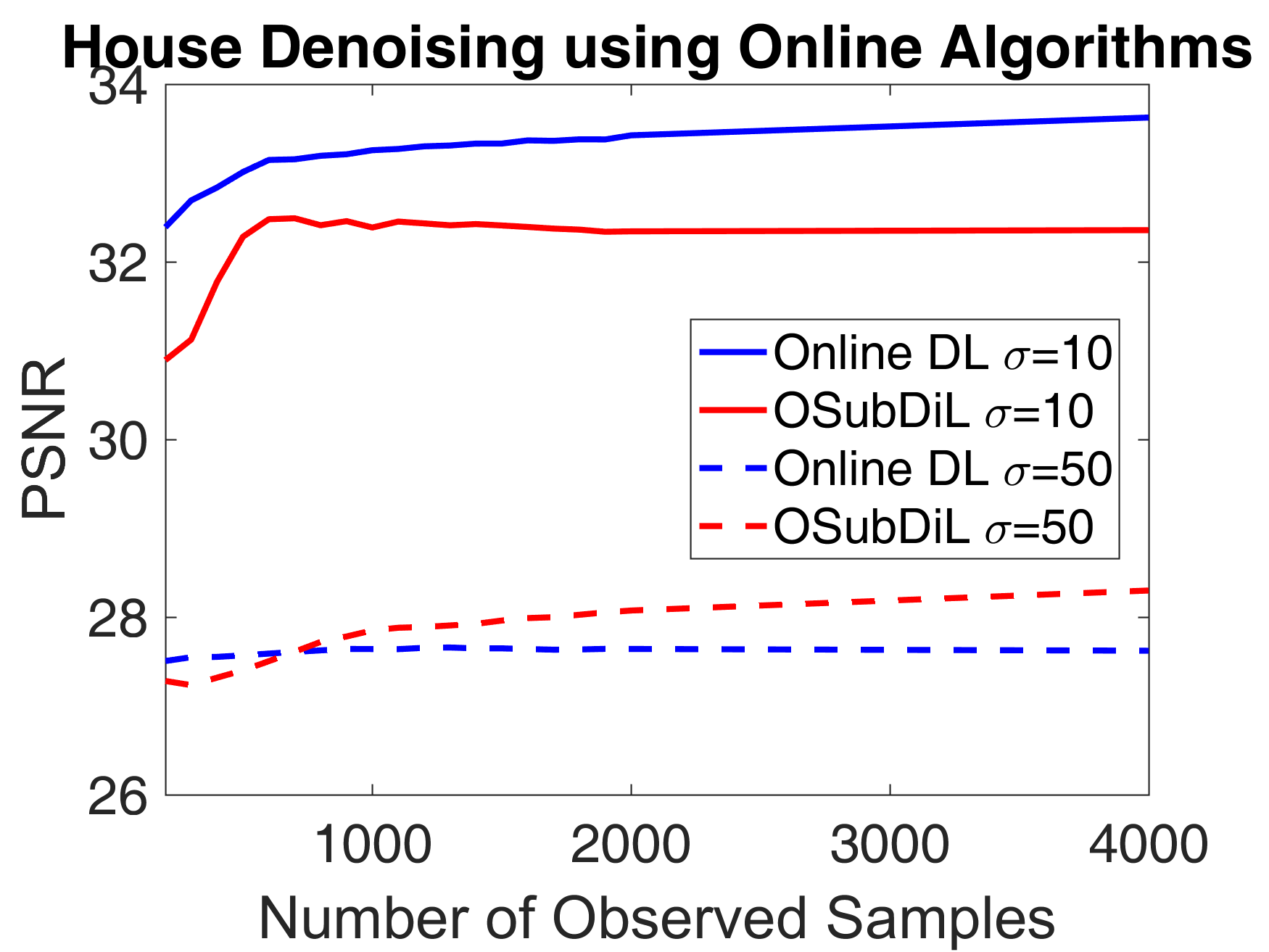}
 \caption{}
 \label{real_plot_peppers_online}
 \end{subfigure}
\caption{(a) Normalized representation error of various DL algorithms for 3rd-order synthetic tensor data. (b) Performance of online DL algorithms for \code{House}.} %Left: $2$-dimensional data, right: $3$-dimensional data.}
\label{Synthetic_plot}
%\vspace{-1.5\baselineskip}
\end{figure}
%\vspace{-0.3\baselineskip}
\textbf{Real-world Experiments:}
In this set of experiments, we evaluate the image denoising performance of different DL algorithms on four RGB images, \code{House}, \code{Castle}, \code{Mushroom}, and \code{Lena}, which have dimensions $256\times 256 \times 3$, $480\times 320\times 3$, $480\times 320\times 3$, and $512\times 512\times 3$, respectively. We corrupt the images using additive white Gaussian noise with standard deviations $\sigma=\{10, 50\}$.
To construct the training data set, we extract overlapping patches of size $8\times 8$  from each image and treat each patch as a 3-dimensional data sample.
We learn dictionaries with parameters $\v m =[3,8,8]$ and $\v p=[3,16,16]$. In the training stage, we perform sparse coding using FISTA~\cite{beck2009fista} (to reduce training time) with regularization parameter $\lambda = 0.1$ for all algorithms. To perform denoising, we use OMP with $s=\lceil p/20 \rceil$. %The choice of FISTA for training is to reduce training time.
To evaluate the denoising performances of the methods, we use the resulting peak signal to noise ratio (PSNR) of the reconstructed images~\cite{hore2010psnr}. Table~\ref{table:comparison} demonstrates the image denoising results.
%%%Definition of PSNR:
%, which is defined as $\text{PSNR}=-10\log(\dfrac{\|\m {Im}-\widehat{\m {Im}}\|_F^2}{255^2*N_{\m {Im}}})$,
% where $\m {Im}$ is the noiseless 3-D image, $\hat{\m {Im}}$ is the reconstructed 3-D image, and  $N_{\m %{Im}}$ is the number of pixels in the image.
%\footnotetext{The numbers next to the name of algorithms in Tables \ref{table:comparison} and \ref{table:rank} indicate the separation rank.}

\textbf{LSR-DL vs Unstructured DL:}
We observe that \stark~outperforms $K$-SVD in every case when the noise level is high and in most cases when the noise level is low. Moreover, \CPbased~outperforms $K$-SVD in both low-noise and high-noise regimes for all four images while having considerably fewer parameters (one to three orders of magnitude).\footnote{{While the improvements in image denoising reported in DL papers are sometimes below 0.5 dB~\cite{hawe2013separable,aharon2006img,zhang2016denoising,caiafa2013multidimensional}), we show that our algorithms provide 1--3 dB improvements over $K$-SVD in most scenarios.}}
%For small noise levels, $K$-SVD's performance is close to STARK and \CPbased. This is expected as the number of training data in this scenario is large. But for $\sigma =50$, STARK and \CPbased~outperform $K$-SVD for all images.
%Denoising performance based on various training data sizes is demonstrated in Figure~\ref{real_plot_peppers} for \code{peppers} and $\sigma=50$. \CPbased~beats the other algorithms in all cases.

\textbf{LSR-DL vs KS-DL:}
%We compare our results with KS-DL algorithms SeDiL~\cite{hawe2013separable} and BCD~\cite{caiafa2013multidimensional}.
Our LSR-DL methods outperform SeDiL~\cite{hawe2013separable} and while BCD~\cite{caiafa2013multidimensional}~has a good performance for $\sigma=10$, its denoising performance suffers when noise level increases.\footnote{Note that SeDiL results may be improved by careful parameter tuning.}

%\textbf{Increasing Low-Separation-Rank in TefDiL)}
Table~\ref{table:rank} demonstrates the image denoising performance of \CPbased~for \code{Mushroom} based on the separation rank of \CPbased. When the noise level is low, performance improves with increasing the separation rank. However, for higher noise level $\sigma=50$, increasing the number of parameters has an inverse effect on the generalization performance.

%\vspace{-0.2\baselineskip}
\textbf{Comparison of LSR-DL Algorithms:}
We compare LSR-DL algorithms BCD, STARK and \CPbased.
As for the merits of our LSR-DL algorithms over BCD, our experiments show that both \CPbased~and \stark~outperform BCD in both noise regimes.
%Moreover, in terms of runtime, BCD~is slower than \CPbased.
% This can be attributed to the costly matrix by matrix multiplications and inversions in the closed-form solution \eqref{dkn_update}. O\subDL, however, addresses this issue.
In addition,
while \CPbased~ and \stark~can be easily and efficiently used for higher separation rank dictionaries, when the separation rank is higher, BCD~with higher rank does not perform well.
While \stark~has a better performance than \CPbased~ for some tasks, it has the disadvantage that it does not output the {subdictionaries} and does not allow for direct tuning of the separation rank. Ultimately, the choice between these two algorithms will be application dependent.
The flexibility in tuning the number of KS terms in the dictionary in \CPbased~(and indirectly in \stark, through parameter $\lambda_1$) allows selection of the number of parameters in accordance with the size and quality of the training data.
%\mg{While \stark~has better performance than \CPbased~for some tasks, the SVD operations at every iteration make it much slower}.
%Ultimately, the choice between these two algorithms will be application dependent.
%Moreover, we can use the flexibility in tuning the number of KS terms in our LSR-DL algorithms to closely match or outperform the unstructured algorithm $K$-SVD.
When the training set is small and noisy, smaller separation rank (perhaps $1$) results in a better performance. For training sets of larger size and better quality, increasing the separation rank allows for higher capacity to learn more complicated structures, resulting in a better performance.

\textbf{O\subDL~vs Online (Unstructured) DL:}
Figure~\ref{real_plot_peppers_online} shows the PSNR for reconstructing \code{House} using O\subDL~and Online DL in Mairal et al.~\cite{mairal2010online} based on the number of observed samples. %While the structured algorithm O\subDL~ has higher PSNR for small number of observed samples for both noise levels, for $\sigma=10$, Online DL performs better for larger number of samples.
%For both noise levels, O\subDL~has an enhanced performance compared to Online DL for any number of samples.
We observe that in the presence of high level of noise, our structured algorithm is able to outperform its unstructured counterpart with considerably fewer parameters.

%\mg{The possibility of tuning the number of KS terms in the dictionary in our LSR-DL algorithms result in a better trade-off between model size and model accuracy.%The experiments confirm that separable DL model is suitable when data is scarce and noisy while the standard model is the better choice when a large and clean dataset is available. For the scenarios in between, our experiments show LSR-DL algorithms in increasing the number of KS components of the dictionary allows it to outperform separable methods while keeping the model smaller than standard DL.
%}

%Note that the precision matrix $\widetilde{\X}\widetilde{\X}$ in \CPbased~ can be ill-conditioned especially when sample size is small. To resolve this issue, we add a regularization term $\alpha \m I$ to $\widetilde{\X}\widetilde{\X}$, where $\alpha\ll 1$. %This essentially changes the objective function being minimized.

%fact that updating $\dknset$ in each dictionary update stage of \subDL~ involes $Nr$ inversions of $p_n\times p_n$ matrices.

%%%%%%%%%%%%%%%%%%%%%%%%%%%%%%%%%%%%%%%%%%%%%%%%%%%%%%%%%%%%%%%%%%%%%%%%%%%%%%%%%%%%%%%
%%%%%%%%%%%%%%%%%%%%------------------NEW SECTION------------------%%%%%%%%%%%%%%%%%%%%
%%%%%%%%%%%%%%%%%%%%%%%%%%%%%%%%%%%%%%%%%%%%%%%%%%%%%%%%%%%%%%%%%%%%%%%%%%%%%%%%%%%%%%%

\section{Conclusion}\label{sec:conc}

We studied the low separation rank model (LSR-DL) to learn structured dictionaries for tensor data. This model bridges the gap between unstructured and separable dictionary learning (DL) models. For the intractable rank-constrained and the tractable factorization-based LSR-DL formulations, we show that given $\Omega\big(r(\sum_n m_n p_n) p^2 \rho^{-2}\big)$ data samples, the true dictionary can be locally recovered up to distance $\rho$. This is a reduction compared to the $\Omega(mp^3\rho^{-2})$ sample complexity of standard DL in Gribonval et al.~\cite{gribonval2014sparse}. However, a minimax lower bound scaling of $\Omega(p\sum_n m_np_n \rho^{-2})$ in Shakeri et al.~\cite{shakeri2016minimax} for KS-DL ($r=1$) has an $O(p)$ gap with our {sample complexity} upper bound. This gap suggests that the sample complexity bounds may be improved.
Possible future directions in this regard include finding minimax bounds for the LSR-DL model and tightening the gap between the sample complexity lower bound (minimax bound) and the upper bounds for this model.

We also show in the regularization-based formulation that $\Omega(mp^3\rho^{-2})$  samples are sufficient for local identifiability of the true Kronecker-structured (KS) dictionary up to distance $\rho$. Improving this result and providing sample complexity results for when the true dictionary is LSR (and not just KS) is also another interesting future work.

Finally, we presented two LSR-DL algorithms and showed that they have better generalization performance for image denoising in comparison to unstructured DL algorithm $K$-SVD \cite{aharon2006img} and existing KS-DL algorithms SeDiL~\cite{hawe2013separable} and BCD~\cite{caiafa2013multidimensional}. %when data is noisy or scarce.
%When data is ample and less noisy, our LSR-DL algorithms outperform KS-DL algorithms.
We also present O\subDL~that to the best our knowledge is the first online algorithm that results in LSR or KS dictionaries. We show that O\subDL~results in a faster reduction in the reconstruction error in terms of number of observed samples compared to the state-of-the-art  online DL algorithm~\cite{mairal2010online} when the noise level in data is high.

%%%%%%%%%%%%%%%%%%%%%%%%%%%%%%%%%%%%%%%%%%%%%%%%%%%%%%%%%%%%%%%%%%%%%%%%%%%%%%%%%%%%%%%
%%%%%%%%%%%%%%%%%%%%------------------NEW SECTION------------------%%%%%%%%%%%%%%%%%%%%
%%%%%%%%%%%%%%%%%%%%%%%%%%%%%%%%%%%%%%%%%%%%%%%%%%%%%%%%%%%%%%%%%%%%%%%%%%%%%%%%%%%%%%%

%%%%%%%%%%%%%%%%%%%%%%%%%%%%%%%%%%%%%%%%%%%%%%%%%%%%%%%%%%%%%%%%%%%%%%%%%%%%%%%%%%%%%%%
%%%%%%%%%%%%%%%%%%%%------------------NEW SECTION------------------%%%%%%%%%%%%%%%%%%%%
%%%%%%%%%%%%%%%%%%%%%%%%%%%%%%%%%%%%%%%%%%%%%%%%%%%%%%%%%%%%%%%%%%%%%%%%%%%%%%%%%%%%%%%

\begin{appendices}

\section{The Rearrangement Procedure}\label{rearrangement_section}
To illustrate the procedure that rearranges a KS matrix into a rank-$1$ tensor, let us first consider $\m A=\m A_1\otimes \m A_2$. The elements of $\m A$ can be rearranged to form $\m A^{\pi}=\v d_2 \circ \v d_1$, where $\v d_i=\vect(\m A_i)$ for $i=1,2$ \cite{van2000ubiquitous}. Figure \ref{figure_permutation} depicts this rearrangement for $\m A$.
Similarly, for $\m A=\m A_1\otimes \m A_2\otimes \m A_3$, we can write $\t{D}^{\pi}= \v d_3 \circ \v d_2\circ \v d_1$, where each frontal slice\footnote{A slice of a $3$-dimensional tensor is a $2$-dimensional section defined by fixing all but two of its indices. For example, a frontal slice is defined by fixing the third index.} of the tensor $\t{D}^{\pi}$ is a scaled copy of $\v d_3\circ \v d_2$.
The rearrangement of $\m A$ into $\t A^{\pi}$ is performed via a permutation matrix $\m\Pi$ such that $\vect(\t A^{\pi})=\m\Pi \vect(\m A)$.
Given index $l$ of $\vect(\m A)$ and the corresponding mapped index $l'$ of $\vect(\t A^{\pi})$, our strategy for finding the permutation matrix is to define $l'$ as a function of $l$. To this end, we first find the corresponding row and column indices $(i,j)$ of matrix $\m A$ from the $l$th element of $\vect(\m A)$. Then, we find the index of the element of interest on the $N$th order rearranged tensor $\t A^{\pi}$, and finally, we find its location $l'$ on $\vect(\t A^{\pi})$. Note that the permutation matrix needs to be computed only once in an offline manner, as it is only a function of the dimensions of the factor matrices and not the values of elements of $\m A$.

We now describe the rearrangement procedure in detail, starting with the more accessible case of KS matrices that are Kronecker product of $N=3$ factor matrices and then extending it to the general case. Throughout this section, we define an $n$-th order ``tile'' to be a scaled copy of $\m A_{N-n+1}\otimes \cdots \otimes \m A_N$ for $N>0$. A zeroth-order tile is just an element of a matrix. Moreover, we generalize the concept of slices of a $3$rd-order tensor to ``hyper-slices'': an $n$-th order hyper-slice is a scaled copy of $\v d_N \circ \v d_{N-1}\circ \cdots \circ \v d_{N-n+1}$.

\subsection{Kronecker Product of $3$ Matrices}
In the case of 3rd-order tensors, we take the following steps:
\begin{enumerate}[label=\roman*)]
\item Find index $(i,j)$ in $\m A$ that corresponds to the $l$-th element of $\vect(\m A)$.
\item Find the corresponding index $(r,c,s)$ on the third order tensor $\t A^{\pi}$.
\item Find the corresponding index $l'$ on $\vect(\t A^{\pi})$.
\item Set $\m \Pi(l',l)=1$.
\end{enumerate}

Let $\m A=\m A_1\otimes\m A_2 \otimes \m A_3$, with $\m A\in \mb R^{m\times p}$ and $\m A_i\in \mb R^{m_i\times p_i}$ for $i\in\{1,2,3\}$. For the first operation, we have
\begin{align}
(i,j)= \left(\ceil*{\frac{l}{m}},~ l-\floor*{\frac{l-1}{m}}m \right).
\end{align}

We can see from Figure \ref{figure_permutation_3} that the rearrangement procedure works in the following way. For each element indexed by $(i,j)$ on matrix $\m A$, find the 2nd-order tile to which it belongs. Let us index this 2nd-order tile by $T_2$. Then, find the $1$st-order tile (within the $2$nd-order tile indexed $T_2$) on which it lies and index this tile by $T_1$. Finally, index the location of the element (zeroth-order tile) within this first-order tile by $T_0$. After rearrangement, the location of this element on the rank-$1$ tensor is $(T_0,T_1,T_2)$.
\begin{figure}[t!]
\centering
 \includegraphics[width=\linewidth]{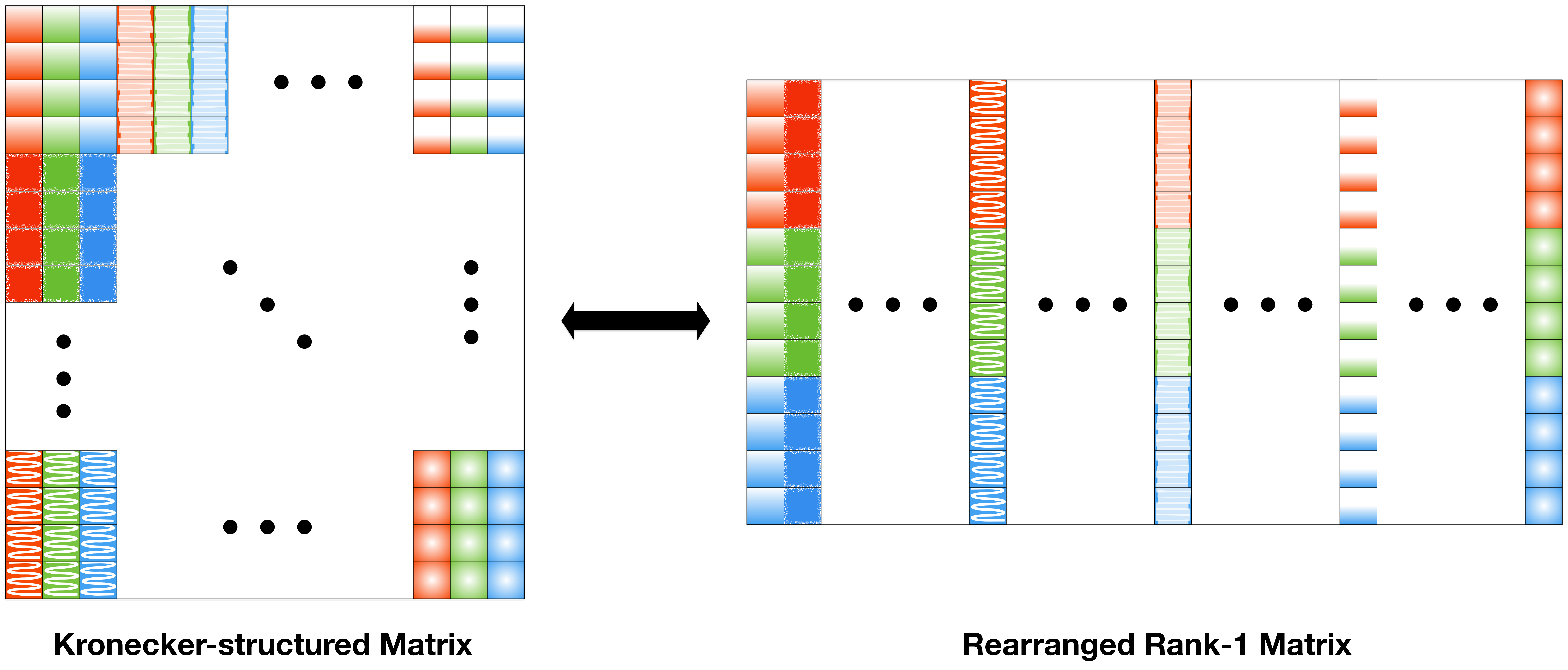}
\caption{Rearranging a Kronecker structured matrix ($N=2$) into a rank-1 matrix.}
\label{figure_permutation}
\end{figure}

In order to find $(T_0,T_1,T_2)$ that corresponds to $(i,j)$, we first find $T_2$, then $T_1$, and then $T_0$.  %To find $s$, note that $s$ is the index of the slice to which the element $(r,c,s)$ of the tensor belongs. We can see from Figure \ref{} that this element belongs
To find $T_2$, we need to find the index of the $2$nd-order tile on which the element indexed by $(i,j)$ lies:
\begin{align}
T_2=\underbrace{\floor*{\frac{j-1}{p_2 p_3}}}_{S^2_j}m_1 + \underbrace{\floor*{\frac{i-1}{m_2 m_3}}}_{S^2_i} + 1,
\end{align}
where $S_j^2$ and $S_i^2$ are the number of the $2$nd-order tiles on the left and above the tile to which the element belongs, respectively. Now, we find the position of the element in this $2$nd-order tile:
\begin{align}
i_2=i-S^2_i m_2 m_3=i-\floor*{\frac{i-1}{m_2 m_3}}m_2 m_3,\nonumber\\
j_2=j- S^2_j p_2 p_3=j-\floor*{\frac{j-1}{p_2 p_3}} p_2 p_3.
\end{align}
For the column index, $T_1$, we have
\begin{align}
T_1=\underbrace{\floor*{\frac{j_2-1}{ p_3}}}_{S^1_j} m_2+\underbrace{\floor*{\frac{i_2-1}{ m_3}}}_{S^1_i} +1.
\end{align}
The location of the element on the $1$st-order tile is
\begin{align}
i_1=i_2-S^1_i m_3=i_2-\floor*{\frac{i_2-1}{ m_3}}m_3,\nonumber\\
j_1=j_2- S^1_j p_3=j_2-\floor*{\frac{j_2-1}{ p_3}} p_3 .
\end{align}
Therefore, $T_0$ can be expressed as
\begin{align}
T_0=\pr{j_1-1} m_3+i_1.
\end{align}

Finally, in the last step we find the corresponding index on $\vect(\t A^{\pi})$ using the following rule.
\begin{align}
l'=&(T_2-1) m_2 m_3 p_2 p_3 + (T_1-1) m_3 p_3 +T_0.
\end{align}

This process is illustrated in Figure \ref{figure_permutation_3}.

\subsection{The General Case}
We now extend our results to $N$-th order tensors. Vectorization and its adjoint operation are easy to compute for tensors of any order. We focus on rearranging elements of $\m A=\m A_1\otimes \m A_2 \otimes \cdots \otimes \m A_N$ to form the $N$-way rank-$1$ tensor $\t A^{\pi}$, where $\m A_n\in \mb R^{m_n\times p_n}$ for $n\in [N]$, $\m A\in \mb R^{m\times p}$, and $\t A^{\pi}\in \mb R^{m_N p_N\times m_{N-1} p_{N-1}\times \cdots \times m_1 p_1}$.

We first formally state the rearrangement and then we explain it. Similar to the case of $N=3$ explained earlier, for each element of the KS matrix $\m A$ indexed by $(i,j)$, we first find the $(N-1)$th-order tile to which it belongs, then  the $(N-2)$th-order tile, and so on. Let $T_{N-1},T_{N-2},\cdots,T_0$ denote the indices of these tiles, respectively. Then, after rearrangement, the element indexed $(i,j)$ on KS matrix $\m A$ becomes the element indexed $T_0,\cdots,T_{N-1}$ on the rearrangement tensor $\t A^{\pi}$.

Now, let us find the indices of the tiles of KS matrix $\m A$ to which the element $(i,j)$ belongs. In the following, we denote by $(i_n,j_n)$ the index of this element within its $n$th-order tile. Note that since $\m A$ is an $N$th-order tile itself, we can use $(i_N,j_N)$ instead of $(i,j)$ to refer to the index of the element on $\m A$ for consistency of notation. For the $(i_N,j_N)$-th element of $\m A$ we have
\begin{align*}
&T_{N-1}=\underbrace{\floor*{\frac{j_N-1}{\Pi_{t=2}^N ~p_t}}}_{S^N_j}m_1 + \underbrace{\floor*{\frac{i_N-1}{\Pi_{t=2}^N ~m_t}}}_{S^N_i} + 1,\nonumber\\
&i_{N-1}=i_N-S^N_i ~\Pi_{t=2}^N ~m_t,\nonumber\\
&j_{N-1}=j_N- S^N_j ~\Pi_{t=2}^N ~p_t,
\end{align*}
where $T_{N-1}$ is the index of the $(N-1)$-th order tile and $(i_{N-1},j_{N-1})$ is the location of the given element within this tile. Similarly, we have
 \begin{align*}
&T_{N-n}=\underbrace{\floor*{\frac{j_{N-n+1}-1}{\Pi_{t=n+1}^N ~p_t}}}_{S^{N-n+1}_j}m_n + \underbrace{\floor*{\frac{i_{N-n+1}-1}{\Pi_{t=n+1}^N ~m_t}}}_{S^{N-n+1}_i} + 1,\nonumber\\
&i_{N-n}=i_{N-n+1}-S^n_i ~\Pi_{t=n+1}^N ~m_t,\nonumber\\
&j_{N-n}=j_{N-n+1}- S^n_j ~\Pi_{t=n+1}^N ~p_t,
\end{align*}
for $N>n>1$. Finally, we have
\[
T_0=(j_1-1) m_N+i_1.
\]
%
%\mg{for n=N. also change explanation, use tile instead hyper-slice, the keep the discussion mostly like this}
It is now easy to see that the $(i_N,j_N)$-th element of $\m A$ is the $(T_0,~T_{1},~\cdots,~T_{N-1})$-th element of $\t A^{\pi}$.

%\begin{figure}
%\centering
% \includegraphics[width=\linewidth]{Figures/Tensor.png}
%\caption{Example of rearranging a Kronecker structured matrix ($N=3$) into a third order rank-1 tensor.}
%\label{figure_permutation_3}
%\end{figure}
%
%%%%%%%%%%%%%%%%%%%%%%%%%%%%%%%%%%%%%%%%
% Commented by Waheed and moved earlier
%%%%%%%%%%%%%%%%%%%%%%%%%%%%%%%%%%%%%%%%
%Note that $K$-th order Kronecker-structured (KS) matrix $\m A$ is a tiling of $m_1\times p_1$ KS tiles of order $N-1$. In rearranging $\m A$ into $\t A^{\pi}$, the elements of each of these $(N-1)$-th order tiles construct a $(N-1)$-th order ``hyper-slice''. On matrix $\m A$, these tiles consist of $m_2\times p_2$ tiles, each of which a $(N-2)$-th order KS matrix, whose elements are rearranged to a $(N-2)$-th hyper-slice of $\t A^{\pi}$, and so on. %In other words, scaled copies of $D_K$ correspond to columns of $\t A^{\pi}$, copies of $D_{K-1}\otimes D_K$ correspond to second order slices of $\t A^{\pi}$, and similarly copies of $D_{K-n}\otimes\cdots\otimes D_{K-1}\otimes D_K$ correspond to $(n+1)$-th order hyper slices.
%%%%%%%%%%%%%%%%%%%%%%%%%%%%%%%%%%%%%%%%

Intuitively, notice that $N$-th order KS matrix $\m A$ is a tiling of $m_1\times p_1$ KS tiles of order $N-1$. In rearranging $\m A$ into $\t A^{\pi}$, the elements of each of these $(N-1)$-th order tiles construct a $(N-1)$-th order ``hyper-slice''. On matrix $\m A$, these tiles consist of $m_2\times p_2$ tiles, each of which is a $(N-2)$th-order KS matrix, whose elements are rearranged to a $(N-2)$-th hyper-slice of $\t A^{\pi}$, and so on. Hence, the idea is to use the correspondence between the $n$th-order tiles and $n$th-order hyper-slices: finding the index of the $n$-th order tile of $\m A$ on which $(i,j)$ lies is equivalent to finding the index of the $n$th-order hyper-slice of $\t A^{\pi}$ to which it is translated. Note that each entry of a tensor in indexed by an $N$-tuple and the index of an entry of a tensor on its $n$th hyper-slice is in fact its $n$th element in the index tuple of this entry. Therefore, we first find the $(N-1)$-th order KS tile of $\m A$  on which the $(i,j)$ element lies (equivalent to finding the $(N-1)$th-order hyper-slice to which $(i,j)$ is translated), and then find the location $(i_{N-1},j_{N-1})$ of this element on this tile. Next, the $(N-2)$-th order KS tile in which $(i_{N-1},j_{N-1})$ lies is found as well as the location $(i_{N-2},j_{N-2})$ of the element within this tile, and so on. %We remind the reader that the second, first, and zeroth order hyper-slices of a tensor are its slice, column, and row (index on the column), respectively. %The $n$-th order hyper-slice index is in fact the index $I_{n+1}$ of the element of interest on the tensor $\t A^{\pi}$.

%The extension of the rearrangement process to LSR matrices is trivial.

%\ads{is there somehow an induction argument, or is this secretly an induction argument? It might make it cleaner.}

\section{Proofs of Lemmas}\label{proofs_section}

%\ads{why have you hard-coded lemma 2 vs. using the ref?}

\begin{proof}[Proof of Lemma \ref{lem:closedness_LSR}]
Proposition 4.1 in De Silva and Lim \cite{DeSilva2008illposed} shows that the space of tensors of order $N \geq 3$ and rank $r \geq  2$ is not closed. The fact that the rearrangement process preserves topological properties of sets means that the same result holds for the set $\knrall$ with $N \geq 3$ and rank $r \geq 2$. 

The proof for closedness of ${\mc L}_{\v m,\v p}^{N,1}$ and ${\mc L}_{\v m,\v p}^{2,r}$ follows from Propositions~4.2 and 4.3 in De Silva and Lim \cite{DeSilva2008illposed}, which can be adopted here due to the relation between the sets of low-rank tensors and LSR matrices.
\end{proof}

\begin{proof}[Proof of Lemma \ref{lem:unbounded_sequence}]
The rearrangement process allows us to borrow the results in Proposition 4.8 in De Silva and Lim \cite{DeSilva2008illposed} for tensors and apply them to LSR matrices.
\end{proof}

\begin{proof}[Proof of Lemma \ref{lem:covering_number_2nd}]
%
%Let $\mc M_{m\times p}^r$ be the manifold of rank-$r$ matrices on the Euclidean unit ball
%
%\[\mc M_{m\times p}^r=\{ \D\in \ball| \rank(\D)\leq r\}.\] 
%
Define $\mc M_{m\times p}^r=\{ \D\in \ball| \rank(\D)\leq r\}$ and $\widehat{\mc L}^{2,r}_{\v m,\v p}={\mc L}_{\v m,\v p}^{2,r}\cap \ball$.
%\[\widehat{\mc L}^{2,r}_{m_i, p_i}=\{ \D\in {\mc L}_{m_i,p_i}^{2,r}| \nor{\D}_F=1\}\] 
%
%with shorthand $\widetilde{\mc K}$. 
%Where $\kr$ is defined in Lemma \ref{special_cases_closedness}. 
Since the rearrangement operator is an isometry w.r.t.~the Euclidean distance, the image of an $\epsilon$-net of $\widehat{\mc L}_{\v m,\v p}^{2,r}$ w.r.t. the Frobenius norm under this rearrangement operator is an $\epsilon$-net of $\mc M_{m'\times p'}^r$ ($m'=m_2p_2$ and $p'=m_1p_1$) w.r.t the Frobenius norm. Thus,
$
\N_F(\widehat{\mc L}^{2,r}_{\v m,\v p},\epsilon)=\N_F(\mc M_{m'\times p'}^r,\epsilon).
$
We also know that
$
\N_F(\mc M_{m'\times p'}^r,\epsilon)\leq (9/\epsilon)^{r(m'+p'+1)}
$~\cite{Candes2011tight}.
This means that 
\begin{align}\label{eq:cover_hat}
\N_F(\widehat{\mc L}^{2,r}_{\v m,\v p},\epsilon)\leq (9/\epsilon)^{r(m_1p_1+m_2p_2+1)}.
\end{align}
%
%We need two more steps to connect $\mc N_F(\widehat{\mc K}^{2,r}_{m_i, p_i},\epsilon)$ to $\mc N_{\mxc}({\mc K}^{2,r}_{m_i, p_i},\epsilon)$: To connect 
On the other hand, for the oblique manifold we have $\mc D_{m\times p}\subset p \ball$ and therefore, 
$\mc K^{2,r}_{\v m,\v p}\subset p\widehat{\mc L}^{2,r}_{\v m,\v p}$. Hence, 
$
\N_{\mxc}(\kr)\leq \N_{\mxc}(p\widehat{\mc L}^{2,r}_{\v m,\v p}, \epsilon).
$
Also, since $\nor{\m M}_{\mxc}\leq \nor{\m M}_F$ for any $\M$, it follows that an $\epsilon$-covering of any given set w.r.t. the Frobenius norm is also an $\epsilon$-covering of that set w.r.t. the max-column-norm. Thus 
$
\N_{\mxc}(\kr)\leq \N_{\mxc}(p\widehat{\mc L}^{2,r}_{\v m,\v p}, \epsilon)\leq \N_{F}(p\widehat{\mc L}^{2,r}_{\v m,\v p}, \epsilon).
$
%
%From $\mc N(\widetilde{\mc K},\epsilon)=\mc N(\mc L_{m'\times p'}^r,\epsilon)\leq (9/\epsilon)^{r(m'+p'+1)}=(9/\epsilon)^{r(m_1p_1+m_2p_2+1)}$ and 
Moreover, it follows from the fact $\N_F(p\widehat{\mc L}^{2,r}_{\v m,\v p},\epsilon)=\N_F(\widehat{\mc L}^{2,r}_{\v m,\v p},\epsilon/p)$ that
\begin{align}\label{eq:cover_epsilon_p}
\N_{\mxc}(\kr,\epsilon)\leq \mc N_{F}(\widehat{\mc L}^{2,r}_{\v m,\v p}, \epsilon/p).
\end{align}
Thus, from \eqref{eq:cover_hat} and \eqref{eq:cover_epsilon_p} we see that
$\N_{\mxc}(\kr,\epsilon) \leq (9p/\epsilon)^{r(m_1p_1+m_2p_2+1)}$,
%
%&= (3/\epsilon)^{2r(m_1p_1+m_2p_2+1)\log p},\mg{\log_{3/\epsilon}, unam~taze~kuchiktar~na~barabar}
which concludes the proof.
\end{proof}

\begin{proof}[Proof of Lemma \ref{lem:covering_number_general}]
Each element $\D\in \cknr$ can be written as a summation of at most $r$ KS matrices $\bigotimes \dkn$ such that $\nor{\bigotimes\dkn}_{F}\leq c$. This implies that $\cknr$ is a subset of the Minkowski sum (vector sum) of $r$ copies of ${}^{c}{\mc K}^{N,1}_{\v m, \v p}$, the set of KS matrices within the Euclidean ball of radius $c$. It is easy to show that the Minkowski sum of the $\epsilon$-coverings of $r$ sets is an $r\epsilon$-covering of the Minkowski sum of those sets in any norm. Therefore, we have 
\begin{align}\label{Minkowski}
\N_{\mxc}(\cknr, \epsilon)\leq \pr{\N_{\mxc}({}^{c}{\mc K}_{\v m,\v p}^{N,1}, \epsilon/{r})}^r.
\end{align}
Moreover, we have ${}^{c}{\mc K}_{\v m,\v p}^{N,1} \subset c \kn$. We also know from equation (16) %\eqref{covering_number_KS}
that 
$
\N(\kn,\epsilon)\leq (3/\epsilon)^{\sum_{i=1}^N m_i p_i}
$.
Putting all these facts together, we get
\begin{align}
\N_{\mxc}(\cknr,\epsilon)&\leq \pr{\N_{\mxc}(c\kn, \epsilon/{r})}^r \nonumber\\
&\leq (3rc/\epsilon)^{r\sum_{i=1}^N m_i p_i}.
%
%&\leq (3/\epsilon)^{r\log(rc)\sum_{i=1}^N m_i p_i}.\mg{mese ~ghabli}
\end{align}
\end{proof}

\begin{proof}[Proof of Lemma \ref{lem:neighborhoods_relation_D_to_Dkn}] 
According to Lemma 2 in Shakeri et al.~\cite{shakeri2018achieve}, for any $\{\m A_n\}$ and $\{\m B_n\}$ we have
\begin{align}\label{kron_difference}
&\bigotimes\nolimits_{n=1}^N \m A_n - \bigotimes\nolimits_{n=1}^N \m B_n\nonumber\\
&\quad\quad= \sum\nolimits_{n=1}^N \m \Gamma_1\otimes\cdots\otimes(\m A_n-\m B_n)\otimes\cdots\otimes \m \Gamma_N,
\end{align}
where $\m \Gamma_n=\m A_n$ or $\m \Gamma_n=\m B_n$ depending on $n$. Let $\epsilon_n^k\triangleq\|\m A^k_n-\m B^k_n\|_F$. %Using equality \eqref{kron_difference}, 
Using equality \eqref{kron_difference}, we have
\begin{align}
&\big\|\sum\nolimits_{k=1}^r\bigotimes \m A^k_n-\sum\nolimits_{k=1}^r\bigotimes \m B^k_n\big\|_F\nonumber\\
&=\big\|\sum_{k=1}^r \sum_{n=1}^N \m \Gamma^k_1\otimes\cdots\otimes(\m A^k_n-\m B^k_n)\otimes\cdots\otimes \m \Gamma^k_N \big\|_F\nonumber\\
&\leq \sum_{k=1}^r \sum_{n=1}^N  \big\|\m \Gamma^k_1\otimes\cdots\otimes(\m A^k_n-\m B^k_n)\otimes\cdots\otimes \m \Gamma^k_N \big\|_F\nonumber\\
&= \alpha^{N-1}\sum_{k=1}^r \sum_{n=1}^N  \epsilon^k_n\overset{(a)}{\leq} \alpha^{N-1}\sqrt{Nr} \epsilon,
\end{align}
where the inequality $(a)$ follows from $\|(\epsilon^k_n)\|_1\leq \sqrt{Nr}~\|(\epsilon^k_n)\|_2\leq\sqrt{Nr}\epsilon$.
\end{proof}

\section{Discussion on Convergence of the Algorithms}\label{sec:convergence}
%In this section we study the convergence of the algorithms presented in this paper.

%\begin{align}\label{problem:regularized_DX}
%\min_{\substack{\D\in \mc D\\ \X\in \mb R^{p\times L}}}\nor{\m Y-\m D\m X}_F^2+ \lambda_1 \strnorm{\Dp}+\lambda \|\X\|_{\sumc}. %+ \lambda_1 \sum_{n=1}^N\trnorm{[{\m D}^{\pi}]^{(n)}}+\lambda\sum_{i=1}^L \nor{\x_i}_1.
%\end{align}
%
%Similarly, the objective function of Problem \eqref{problem:factorized} is defined implicitly in terms of the subdictionaries through $f^{\mathrm{fac}}_{\y}(\dknset)$. %Therefore, we solve the following closely related problem instead:%in form of
%
%\begin{align}\label{problem:factorized_DX}
%\min_{\substack{\{\dkn\in \alpha \ball\}\\ \X\in\mb R^{p\times L}}} \|\Y-\big(\sum_{k=1}^r\bigotimes_{n=1}^N \D^k_n\big)\X\|_F^2  +\lambda \|\X\|_{\sumc}.% \sum_{l=1}^L \nor{\v X_l}_1. 
%
%&\qquad\quad\, \text{s.t.}\qquad\quad\; \sum\nolimits\bigotimes \dkn\in \mc D_{m\times p}.
%\end{align}

The batch algorithms proposed in Section~\ref{sec:algorithms} are essentially variants of alternating minimization (AM). Establishing the convergence of AM-type algorithms in general is challenging and only known for limited cases. Here, we first present a well-known convergence result for AM-type algorithms in Lemma \ref{lem:BCD_convergencde} and discuss why our algorithms \stark~and \CPbased~do not satisfy the requirements of this lemma. Then, we show a possible approach for proving convergence of \stark. We do not discuss convergence analysis of O\subDL~here since it does not fall in the batch AM framework that we discuss here. We leave formal convergence results of our algorithms as open problems for future work.

First, let us state the following  standard convergence result for alternating minimization-type algorithms.
\begin{lem}[Proposition 2.7.1, \cite{bertsekas1999nonlinear}]\label{lem:BCD_convergencde}%and Amir Beck's first order optimization book
Consider the problem 
\begin{align*}
\min_{\x = (\x_1,\dots,\x_M) \in \mc E=\mc E_1\times\mc E_2\times\cdots\times \mc E_M} f(\x),
\end{align*}
where $\mc E_i$ are closed convex subsets of the Euclidean space. Assume that $f(\cdot)$ is a continuous differentiable over the set $\mc E$. Suppose for each $i$ and all $\x\in \mc E$, the minimum
\begin{align*}
\min_{\xi\in \mc E_i} f(\x_1,\cdots,\x_{i-1},\xi,\x_{i+1},\cdots,\x_M)
\end{align*}
is uniquely attained. Then every limit point of the sequence $\{\x(t)\}$ generated by block coordinate descent method is a stationary point of $f(\cdot)$.
\end{lem}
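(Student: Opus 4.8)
The plan is to reproduce the classical convergence argument for cyclic block coordinate descent underlying Proposition~2.7.1 in~\cite{bertsekas1999nonlinear}. Write one full cyclic sweep as producing intermediate iterates $\x^i(t)\triangleq(\x_1(t+1),\dots,\x_i(t+1),\x_{i+1}(t),\dots,\x_M(t))$ for $i=0,1,\dots,M$, so that $\x^0(t)=\x(t)$, $\x^M(t)=\x(t+1)$, and by construction $\x_i(t+1)$ minimizes $\xi\mapsto f(\x_1(t+1),\dots,\x_{i-1}(t+1),\xi,\x_{i+1}(t),\dots,\x_M(t))$ over $\mc E_i$. First I would record the monotone chain $f(\x^0(t))\ge f(\x^1(t))\ge\cdots\ge f(\x^M(t))=f(\x(t+1))$, which is immediate from the block-minimization property. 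Thus $\{f(\x(t))\}$ is nonincreasing; if $\bar\x$ is a limit point of $\{\x(t)\}$, continuity of $f$ along the convergent subsequence $\x(t_k)\to\bar\x$ forces the entire sequence $\{f(\x(t))\}$ to converge to $f(\bar\x)$, and hence, by the squeeze above, $f(\x^i(t))\to f(\bar\x)$ for every $i$.

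The core step is to prove by induction on $i$ that $\x^i(t_k)\to\bar\x$ along the same subsequence. The base case $i=0$ is the hypothesis. For the inductive step, assume $\x^{i-1}(t_k)\to\bar\x$; passing to a further subsequence (which is where one uses that the generated iterates lie in a bounded set, automatic when each $\mc E_i$ is bounded) we may assume $\x_i(t_k+1)\to z$ for some $z\in\mc E_i$, so that $\x^i(t_k)\to\bar\x'$, where $\bar\x'$ equals $\bar\x$ in every block except possibly the $i$-th, in which it takes the value $z$. Passing to the limit in the minimizing inequality $f(\x^i(t_k))\le f\big(\x_1(t_k+1),\dots,\x_{i-1}(t_k+1),\xi,\x_{i+1}(t_k),\dots,\x_M(t_k)\big)$, valid for every $\xi\in\mc E_i$, shows that $z$ minimizes the $i$-th block subproblem evaluated at $\bar\x$; moreover $f(\bar\x')=\lim_k f(\x^i(t_k))=f(\bar\x)$, so $\bar\x_i$ achieves the same minimal value. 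The uniqueness hypothesis on the block minimizer then forces $z=\bar\x_i$, i.e.\ $\x^i(t_k)\to\bar\x$. Taking $i=M$ also gives $\x(t_k+1)\to\bar\x$.

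It remains to extract stationarity from the first-order optimality of each block update. Since $\mc E_i$ is closed and convex and $f$ is continuously differentiable, the minimizing property of $\x_i(t+1)$ yields $\ip{\nabla_i f(\x^i(t))}{\xi-\x_i(t+1)}\ge 0$ for all $\xi\in\mc E_i$, where $\nabla_i f$ denotes the partial gradient in the $i$-th block. Letting $t=t_k\to\infty$ and using $\x^i(t_k)\to\bar\x$ together with continuity of $\nabla f$ gives $\ip{\nabla_i f(\bar\x)}{\xi-\bar\x_i}\ge 0$ for every $\xi\in\mc E_i$ and every $i$. Summing these $M$ inequalities produces $\ip{\nabla f(\bar\x)}{\x-\bar\x}\ge 0$ for all $\x\in\mc E=\mc E_1\times\cdots\times\mc E_M$, which is precisely the statement that $\bar\x$ is a stationary point of $f$ over $\mc E$.

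The step I expect to be the main obstacle is the inductive claim $\x^i(t_k)\to\bar\x$: it is exactly where the uniqueness-of-block-minimizer hypothesis is indispensable---without it the successive block updates could oscillate and a limit point need not be stationary---and it also quietly relies on enough compactness to extract convergent subsequences of the block updates. The remaining ingredients are routine limiting arguments built on continuity of $f$ and $\nabla f$ and convexity of the $\mc E_i$.
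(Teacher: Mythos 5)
The first thing to note is that the paper contains no proof of this statement: it is imported verbatim as Proposition~2.7.1 of Bertsekas's \emph{Nonlinear Programming} and used as a black box in the convergence discussion of Appendix~C, so there is no in-paper argument to compare yours against. Your reconstruction is the standard textbook proof and its skeleton is sound: the monotone chain over one sweep, the squeeze forcing $f(\x^i(t))\to f(\bar{\x})$ for every $i$, the induction in which uniqueness of the block minimizer identifies the limit $z$ of the $i$-th update with $\bar{\x}_i$ (note that since every further subsequence of $\x_i(t_k+1)$ has the \emph{same} limit $\bar{\x}_i$, the original subsequence converges, so the induction is legitimate), and the passage to the limit in the block variational inequalities followed by blockwise summation to get $\ip{\nabla f(\bar{\x})}{\x-\bar{\x}}\geq 0$ on all of $\mc E$. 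The one step that deserves more than your parenthetical acknowledgment is the compactness used to extract $\x_i(t_k+1)\to z$: the lemma only assumes the $\mc E_i$ are closed and convex, not bounded, and nothing a priori prevents the block updates from escaping to infinity even while $\x(t_k)\to\bar{\x}$, so as written your argument proves the lemma only under an additional boundedness assumption on the iterates (or on the $\mc E_i$). Bertsekas's own proof sidesteps this by arguing by contradiction on the displacement $\gamma_k=\|\x_i(t_k+1)-\x_i(t_k)\|$ and normalizing the direction $s^k=(\x_i(t_k+1)-\x_i(t_k))/\gamma_k$, so that the needed compactness lives on the unit sphere rather than on $\mc E_i$; if you want your write-up to cover the statement in full generality you would need to graft in that device (or an equivalent one). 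For the paper's actual use of the lemma, where the iterates are easily seen to remain bounded, your version suffices.
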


The result of Lemma \ref{lem:BCD_convergencde} cannot be used for \CPbased~since its dictionary update stage does not have a unique minimizer (nonconvex minimization problem with multiple global minima)). Moreover, as discussed in Section \ref{subsec:tefdil}, \CPbased~only returns an inexact solution in the dictionary update stage. 

Similarly, this result cannot be used to show convergence of \stark~to a stationary point of Problem \eqref{problem:regularized} since, as discussed in Section \ref{subsec:stark}, \stark~returns an inexact solution in the dictionary update stage.  %the constraint set $\mc D_{m\times p}$ is not convex. 
%solving Problem \eqref{problem:regularized} first without the constraint $\D\in \mc D$ and then projecting the solution to the constraint set results in an inexact solution. 
However, we show next that dropping the unit column-norm constraint allows us to provide certain convergence guarantees. The unit column-norm constraint is essential in standard DL algorithms since in its absence, the $\ell_1$ norm regularization term encourages undesirable solutions where $\nor{\X}_F$ is very small while $\nor{\D}_F$ is very large. However, in the regularization-based LSR-DL problem, the additional regularization term $\strnorm{\Dp}$ ensures this does not happen. Therefore, dropping the unit column-norm constraint is sensible in this problem.

Let us discuss what guarantees we are able to obtain after relaxing the constraint set $\mc D_{m\times p}$. Consider the problem
\begin{align}\label{problem:regularized_DX}
\min_{\D\in \mb R^{m\times p},\X}%\in \mb R^{m\times p}, \X\in \mb R^{p\times L}}}
\nor{\m Y-\m D\m X}_F^2+ \lambda_1 \strnorm{\Dp}+\lambda \|\X\|_{\sumc}. %+ \lambda_1 \sum_{n=1}^N\trnorm{[{\m D}^{\pi}]^{(n)}}+\lambda\sum_{i=1}^L \nor{\x_i}_1.
\end{align}
We show in Proposition \ref{convergence_factorization_prop} that under the following assumptions, \stark~converges to a stationary point of Problem \eqref{problem:regularized_DX} (when the normalization step is not enforced). Then we discuss how this problem is related to Problem \eqref{problem:regularized}.

%This motivates using the standard convergence result for BCD, stated in Lemma~\ref{BCD_convergencde}, %in Appendix \ref{aux_lemmata_section}, to show their convergence. We employ standard BCD convergence result to show the convergence of \stark~to %the joint minimization problem
%

%which is  closely related to the nested optimization problem \eqref{problem:regularized}. %Similarly, for the factorization-based algorithm \subDL~is showed to converge to 
%\begin{align}\label{problem:factorized_DX}
%\min_{\substack{\{\D^k_n\}:~\sum\bigotimes \dkn\in \lsr\\ \X\in\mb R^{p\times L}}} \|\Y-\big(\sum_{k=1}^r\bigotimes_{n=1}^N \D^k_n\big)\X\|_F^2  +\lambda \|\X\|_{\sumc},% \sum_{l=1}^L \nor{\v X_l}_1. 
%%
%%&\qquad\quad\, \text{s.t.}\qquad\quad\; \sum\nolimits\bigotimes \dkn\in \mc D_{m\times p},
%\end{align}
%a closely related approximation of Problem \eqref{problem:factorized}. 
%We first show the convergence of \stark~to a stationary point of \eqref{problem:regularized_DX}, and then discuss how \eqref{problem:regularized_DX} is related to the nested minimization problem \eqref{problem:regularized}. Convergence analyses of \CPbased~and O\subDL~are left as open problems due to their complexity.
%
%

%In order to use the AM convergence result for \stark, we make the following assumptions.

%The BCD convergence result requires uniqueness of the solution of the minimization problem w.r.t. each block. The sparse coding stages of all three algorithms we present involve solving a lasso problem. Tibshirani \cn~ discusses the uniqueness conditions for lasso. Due to lack of space 
%
%
\begin{assum}\label{assupmtions:algorithm}
Consider the sequence $\big(\D(t),\X(t)\big)$ %and $\big(\D^3(t),\X^3(t)\big)$ 
generated by \stark. %Algorithm \ref{algo:stark} (\stark). % \ref{TeFDiL_algorithm}, and \ref{algo:subdil} (\stark and \subDL), respectively. 
We assume that for all $t\geq0$:
\begin{enumerate}[label=\Roman*)]
\item  Classical optimality conditions for the lasso problem (see Tibshirani~\cite{tibshirani2013lasso}) are satisfied.
%\item  $\sum\nolimits_{l=1}^L \X_{1,l}^{(n)}(t) \m C_{k,n}(t) [\m C_{k,n}(t)]^t [\X_{1,l}^{(n)}(t)]^T$ is invertible, where $\t X_{1,l}(t)=\vect^{-1}(\x^1_l(t))$.
\item  $\X(t)$ is full row-rank at all $t$.
\end{enumerate}
\end{assum}

%Proposition \ref{convergence_factorization_prop} establishes the convergence of \stark~(without normalization).
%
\begin{prop}\label{convergence_factorization_prop}
Under Assumption~\ref{assupmtions:algorithm}, \stark~converges to a stationary point of problem \eqref{problem:regularized_DX}.% and \eqref{problem:factorized_DX}.
\end{prop}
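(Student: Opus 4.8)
The plan is to observe that, once the normalization (projection onto the oblique manifold $\mc D$) is removed, \stark~is precisely a two--block Gauss--Seidel / block coordinate descent scheme applied to \eqref{problem:regularized_DX}, and then to invoke the standard convergence theory for block coordinate minimization of an objective whose nonsmooth part is block--separable and convex. Write $f(\D,\X)\triangleq\nor{\m Y-\m D\m X}_F^2+\lambda_1\strnorm{\Dp}+\lambda\nor{\X}_{\sumc}$ as $h(\D,\X)+r_1(\D)+r_2(\X)$, where $h(\D,\X)=\nor{\m Y-\m D\m X}_F^2$ is jointly $C^1$ (indeed polynomial), $r_1(\D)=\lambda_1\strnorm{\Dp}$ is convex (a sum of nuclear norms of unfoldings composed with the linear rearrangement map of Lemma~\ref{lem:rearrangement}), and $r_2(\X)=\lambda\nor{\X}_{\sumc}$ is convex. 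The two blocks are $\D\in\mb R^{m\times p}$ and $\X\in\mb R^{p\times L}$, both closed convex, and the nonsmooth term $r_1(\D)+r_2(\X)$ is separable across them. This is the natural nonsmooth analogue of the setting of Lemma~\ref{lem:BCD_convergencde}, and, unlike that lemma, it does not demand joint differentiability of $f$.

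The first step is to check that the two updates carried out by \stark~coincide with \emph{exact} block minimizations of $f$ (modulo an inessential rescaling of $\lambda,\lambda_1$ between the sparse--coding and dictionary--update stages). The sparse--coding stage solves the Lasso problem $\min_{\X}h(\D(t),\X)+r_2(\X)$, which is convex and coercive in $\X$; by Assumption~\ref{assupmtions:algorithm}(I) the returned iterate satisfies the Lasso optimality conditions, hence $\X(t+1)\in\argmin_{\X}f(\D(t),\X)$. For the dictionary update, the inner ADMM loop \eqref{dictionary_auxiliary_terms}--\eqref{augmented_lagrange_function}, iterated to convergence, solves $\min_{\D}h(\D,\X(t+1))+r_1(\D)$ exactly: this is a two--block ADMM (block $\Dp$ versus block $(\t W_1,\dots,\t W_N)$) on a convex problem with linear consensus constraints whose coupling operator has full column rank, and under Assumption~\ref{assupmtions:algorithm}(II) the quadratic term is strongly convex in $\vect(\D)$ because $\widetilde{\m X}^{T}$ has full column rank whenever $\X$ has full row rank; classical ADMM theory~\cite{Boyd_ADMM_2011} then gives convergence of the $\Dp$--iterates to the (unique) primal optimum, and undoing the rearrangement yields $\D(t+1)\in\argmin_{\D}f(\D,\X(t+1))$. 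It is exactly here that dropping the constraint $\D\in\mc D$ matters: that projection, not the ADMM inner loop, is what rendered the original dictionary--update step of \stark~inexact.

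Next I would establish existence of limit points and conclude. Each block update does not increase $f$, so $\{f(\D(t),\X(t))\}$ is nonincreasing; moreover $f$ is coercive on $\mb R^{m\times p}\times\mb R^{p\times L}$, since $\strnorm{\Dp}\ge\nor{\Dp}_F=\nor{\D}_F$ (each nuclear norm in $\strnorm{\cdot}$ dominates the Frobenius norm of the corresponding unfolding, and the rearrangement is an isometry) and $\nor{\X}_{\sumc}\ge\nor{\X}_F$, whence $f(\D,\X)\ge\lambda_1\nor{\D}_F+\lambda\nor{\X}_F$. Thus the iterates remain in a bounded sublevel set and possess a limit point $(\D^*,\X^*)$. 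Along a subsequence $(\D(t_j),\X(t_j))\to(\D^*,\X^*)$, continuity of $f$, convergence of the monotone sequence $f(\D(t),\X(t))$, and the exactness of the two block updates established above give $\X^*\in\argmin_{\X}f(\D^*,\X)$ and $\D^*\in\argmin_{\D}f(\D,\X^*)$. Because the nonsmooth part of $f$ is block--separable, coordinatewise minimality in each block is equivalent to $0\in\partial_{\D}f(\D^*,\X^*)$ and $0\in\partial_{\X}f(\D^*,\X^*)$, and separability of the subdifferential of the smooth--plus--separable $f$ then yields $0\in\partial f(\D^*,\X^*)$; i.e., $(\D^*,\X^*)$ is a stationary point of \eqref{problem:regularized_DX}. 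Equivalently, one may cite the two--block specialization of Tseng's convergence theorem for block coordinate descent with separable nonsmooth terms.

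I expect the main obstacle to be the dictionary--update step: rigorously justifying that the inner ADMM delivers the exact minimizer of the convex dictionary subproblem (convergence of the two--block ADMM under the full--rank coupling operator, together with the strong--convexity consequence of Assumption~\ref{assupmtions:algorithm}(II)), and being precise that ``stationary point'' must here be read in the generalized (subdifferential) sense, so that the concluding step rests on coordinatewise minimality via block separability rather than on a smooth first--order condition. A secondary point worth flagging is that if the inner ADMM is run for only finitely many iterations per outer step, the last step needs an inexact block coordinate descent refinement; this is standard but should be acknowledged.
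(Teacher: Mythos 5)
Your proof is correct and, at the level of architecture, mirrors the paper's: both arguments view \stark~(without the normalization step) as exact two--block alternating minimization over $\D$ and $\X$, both use Assumption~\ref{assupmtions:algorithm}-I to make the Lasso step an exact block minimization, and both use Assumption~\ref{assupmtions:algorithm}-II (full row rank of $\X$, hence strong convexity in $\D$) together with classical two--block ADMM convergence to argue that the inner loop delivers the exact, unique minimizer of the dictionary subproblem. Where you genuinely diverge is in the convergence theorem you invoke at the end: the paper appeals to Lemma~\ref{lem:BCD_convergencde} (Bertsekas, Prop.~2.7.1), whose hypothesis is that the objective is continuously differentiable over the constraint set --- a hypothesis that \eqref{problem:regularized_DX} does not satisfy, since $\strnorm{\Dp}$ and $\|\X\|_{\sumc}$ are nonsmooth. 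You instead use the nonsmooth analogue (Tseng's block coordinate descent theorem for objectives of the form smooth plus block--separable convex), concluding stationarity in the subdifferential sense from coordinatewise minimality. You also supply two ingredients the paper leaves implicit: a coercivity bound $f(\D,\X)\geq \lambda_1\nor{\D}_F+\lambda\nor{\X}_F$ guaranteeing that limit points exist at all, and the caveat that a finitely--truncated inner ADMM would require an inexact--BCD refinement. The net effect is that your route is the more rigorous one for this particular objective; the paper's citation of the smooth BCD lemma is, strictly read, not applicable here, and your block--separability argument is the correct way to close that gap while reaching the same conclusion.
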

\begin{proof} 
We invoke Lemma \ref{lem:BCD_convergencde} to show the convergence of \stark. To use this lemma, the minimization problem w.r.t.~each block needs to correspond to a closed convex constraint set and also needs to have a unique minimizer.

In the sparse coding stage, given Assumption \ref{assupmtions:algorithm}-I, the minimizer of the lasso problem is unique. % in all three algorithms. 
%
%Similar arguments hold for \CPbased (Algorithm \ref{TeFDiL_algorithm}) as long as $\mc T^+$ is defined (Assumption \ref{algorithm_assumptions}-III), considering that our CP decomposition algorithm employs ALS.\mg{under what constraint?}\mg{how about the constraint $\|\dkn\|_F\in \alpha$?}
%
In the dictionary update stage of \stark, the objective of problem \eqref{problem:regularized_DX} is strongly convex w.r.t. $\D$ under Assumption \ref{assupmtions:algorithm}-II and thus has a unique minimizer. Moreover, the constraint set $\mb R^{p\times L}$ is closed and convex. To utilize Lemma~\ref{lem:BCD_convergencde}, it remains to show that this minimum is actually attained by ADMM.
To this end, we restate Problem~\eqref{dictionary_auxiliary_terms} as
\begin{align}
&\min_{\Dp, \widetilde{\t W}} ~ f_1(\Dp)+f_2(\widetilde {\t W})
&\;\,\text{s.t.}\quad \widetilde{\t W}= \mc H \Dp,
\end{align}
where $f_1(\Dp)=\frac{1}{2}\nor{\m Y-\m D\m X}_F^2$ ($\m D\m X$ is a linear function of $\Dp$) and $f_2(\widetilde {\t W})=\lambda_1\sum_{n=1}^N\nor{(\t W_n)_{(n)}}_*$. It is clear that $\mc H\mc H^*$ is convertible. Therefore, according to~\cite{bertsekas1989parallel}[Chapter 3, Proposition 4.2], %Lemma \ref{ADMM_convergence} %in Appendix \ref{aux_lemmata_section}stated below,
the ADMM algorithm converges to the unique minimizer of Problem \eqref{dictionary_auxiliary_terms}.

\end{proof}

%\textit{Remark 2.}
%The unit-norm column constraint on the dictionaries is essential to our identifiability analyses. But due to the non-convexity of $\mc D$, we cannot use the standard BCD convergence result if we apply this constraint\footnote{In our implementations, we enforce unit column-norm in every dictionary update stage for numerical stability.}. We leave convergence of the column-norm constrained problem as an open question.
%\mg{also cannot do it in the end because this projection step can destroy the structure!!}

So far we discussed convergence of \stark~to Problem \eqref{problem:regularized_DX} while our identifiability results are for problem \eqref{problem:regularized}. There is, however, a strong connection between minimization Problems \eqref{problem:regularized} and  \eqref{problem:regularized_DX}: for each local minimum $\widehat \D$ of problem \eqref{problem:regularized}, there exists an $\widehat \X$ such that $(\widehat \D,\widehat \X)$ is a local minimum of \eqref{problem:regularized_DX}. %Specifically, Problem \eqref{problem:regularized} can be stated as $\min_{\D\in C} F(\D)$, where $F(\D)=\inf_{X\in \mb R^{p\times L}} \ell(\D,\X)$ and Problem \eqref{problem:regularized_DX} can be stated as $\min_{\D\in C, X\in \mb R^{p\times L}} \ell(\D,\X)$. Here, $C$ is dictionary class. 
Define $\ellreg(\D,\X)=\frac{1}{L}\nor{\m Y-\m D\m X}_F^2+ \lambda_1 \strnorm{\Dp}+\frac{\lambda}{L} \|\X\|_{\sumc}$.
Consider any $\widehat \D$ that is a local minimum of \eqref{problem:regularized} and let $\widehat \X=\argmin_{\X\in \mb R^{p\times L}} \ellreg(\widehat \D,\X)$. We have $\ellreg(\widehat \D,\widehat \X)=\fyreg(\widehat \D)$. Since $\widehat \D$ is a local minimizer of $\fyreg(\D)$, $\fyreg(\widehat \D) \leq \fyreg(\D)$ for any $\D$ in the local neighborhood of $\widehat \D$. Also by definition, $\fyreg(\D) \leq \ellreg(\D,\X)$ for any $\X$. Thus, $\ellreg(\widehat \D,\widehat \X)\leq \ellreg(\D,\X)$ for any $(\D,\X)$ in the local neighborhood of $(\widehat \D,\widehat \X)$, meaning that $(\widehat \D,\widehat \X)$ is a local minimizer of \eqref{problem:regularized_DX}. Since we showed in Section~\ref{sec:tractable_identif} that a local minimum $\D^*$ of \eqref{problem:regularized} is close to the true dictionary $\D^0$, we can say there is a local minimum $(\D^*, \X^*)$ of \eqref{problem:regularized_DX} close to $\D^0$. So our recovery result for \eqref{problem:regularized} can apply to our proposed algorithm for solving \eqref{problem:regularized_DX} as well. %Similar connection exists between Problems \eqref{problem:factorized} and \eqref{problem:factorized_DX}.

%Our algorithms are designed to solve problems \eqref{problem:regularized_DX} and \eqref{problem:factorized_DX} while our identifiability results are for minimization problems \eqref{problem:regularized} and \eqref{problem:factorized}. However, if we impose the same constraints on the dictionary in problems \eqref{problem:regularized} and \eqref{problem:regularized_DX}, there is a connection between the local minima of these problems: for each local minimum $\widehat \D$ of \eqref{problem:regularized}, including the local minimum (minima) that is close to the true dictionary $\D^0$, there exists an $\widehat\X$ such that $(\widehat\D,\widehat\X)$ is a local minimum of \eqref{problem:regularized_DX}. Similarly, if the dictionary class in Problems \eqref{problem:factorized} and \eqref{problem:factorized_DX} is the same, for any  local minimum $\{\widehat\D^k_n\}$ of Problem \eqref{problem:factorized}, there is a $\widehat\X$ such that $(\{\widehat\D^k_n\},\widehat\X)$ is a local minimum of Problem \eqref{problem:factorized_DX}. 

\end{appendices}

% Generated by IEEEtran.bst, version: 1.14 (2015/08/26)

\end{document}